  \documentclass{article}

  \usepackage{microtype}
  \usepackage{graphicx}
  \usepackage{subfigure}
  \usepackage{placeins}
  \usepackage{booktabs} % for professional tables
  \usepackage{caption}

  \usepackage{amsthm}
  \usepackage{thmtools, thm-restate}
  \usepackage{bm}
  \usepackage{xcolor}
  \usepackage{enumitem}
  \usepackage{hyperref}

  \usepackage{algorithm}
  \usepackage{algorithmic}

  \usepackage[normalem]{ulem}
  \usepackage{cancel}
  \usepackage[preprint]{neurips_2026}

  \usepackage{amsmath}
  \usepackage{amssymb}
  \usepackage{mathtools}

  \usepackage[capitalize,noabbrev]{cleveref}

  \newcommand{\papertitle}{BONSAI: Bayesian Optimization with Natural Simplicity and Interpretability}

  \theoremstyle{plain}
  \newtheorem{theorem}{Theorem}[section]
  \newtheorem{proposition}[theorem]{Proposition}
  \newtheorem{lemma}[theorem]{Lemma}
  
  \theoremstyle{definition}
  
  \newtheorem{assumption}[theorem]{Assumption}
  \newtheorem{remark}[theorem]{Remark}

  \usepackage[textsize=tiny]{todonotes}

  \usepackage[normalem]{ulem}
  \usepackage{cancel}
  \definecolor{gptcolor}{RGB}{75, 0, 130}  % dark purple
  \usepackage[framemethod=default]{mdframed}

  \newmdenv[
    linewidth=0.6pt,
    linecolor=gptcolor,
    roundcorner=3pt,
    innerleftmargin=8pt,
    innerrightmargin=8pt,
    innertopmargin=6pt,
    innerbottommargin=6pt,
    skipabove=6pt,
    skipbelow=6pt
  ]{gptframe}

  \DeclareMathOperator*{\argmax}{arg\,max}

  \title{\papertitle}

  \author{
    Samuel Daulton\thanks{Corresponding author: \texttt{sdaulton@meta.com}} \\
    Meta \\
    \And
    David Eriksson \\
    Meta \\
    \And
    Maximilian Balandat \\
    Meta \\
    \And
    Eytan Bakshy \\
    Meta \\
  }

\begin{document}

  \maketitle

  \begin{abstract}
  Bayesian optimization (BO) is a popular technique for sample-efficient optimization of black-box functions. In many applications, the parameters being tuned come with a carefully engineered default configuration, and practitioners only want to deviate from this default when necessary. Standard BO, however, does not aim to minimize deviation from the default and, in practice, often pushes weakly relevant parameters to the boundary of the search space.
  This makes it difficult to distinguish between important and spurious changes and increases the burden of vetting recommendations when the optimization objective omits relevant operational considerations.
  We introduce BONSAI, a default-aware BO policy that prunes low-impact deviations from a default configuration while explicitly controlling the loss in acquisition value. BONSAI is compatible with a variety of acquisition functions, including expected improvement and upper confidence bound (GP-UCB). We theoretically bound the regret incurred by BONSAI, showing that, under certain conditions, it enjoys the same no-regret property as vanilla GP-UCB. Moreover, assuming known ARD lengthscales---the same assumption underlying GP-UCB regret bounds---BONSAI provably recovers the relevant-coordinate set at zero acquisition cost, yielding a method that matches the GP-UCB regret rate while recovering the minimal-$\ell_0$ solution---a guarantee not provided by prior sparse-BO methods. Across many real-world applications, we empirically find that BONSAI substantially reduces the number of non-default parameters in recommended configurations while maintaining competitive optimization performance, with little effect on wall time---averaging only $1.5\times$ the candidate-generation cost of standard BO, compared to $7$--$34\times$ on average for prior sparse-BO methods (IR, ER, and SEBO).
  %\end{abstract}

  %Bayesian optimization (BO) is a popular technique for sample-efficient optimization of black-box functions. In many applications, the parameters being tuned come with a carefully engineered default configuration, and practitioners only want to deviate from this default when necessary. Standard BO, however, does not aim to minimize deviation from the default and in practice will often push weakly relevant parameters to the boundary of the search space. This makes it difficult to distinguish between important and spurious changes and can lead to unintended consequences when the optimization objective does not fully capture all relevant considerations, e.g., system stability. We introduce BONSAI, a default-aware BO policy that prunes low-impact deviations from a default configuration while explicitly controlling the loss in acquisition value. BONSAI is compatible with a variety of acquisition functions, including expected improvement and upper confidence bound (GP-UCB). We theoretically bound the regret incurred by BONSAI, showing that under certain conditions, it enjoys the same no-regret property of vanilla GP-UCB. Across many real-world applications, we empirically find that BONSAI substantially reduces the number of non-default parameters in recommended configurations while matching or outperforming vanilla BO with respect to optimization performance, with little effect on wall time.
  \end{abstract}

  %%%%%%%%%%%%%%%%%%%%%%%%%%%%%%%%%%%%%%%%%%%%%%%%%%%%%%%%%%%%%%%%%%%%%%
  \section{Introduction}
  \label{sec:intro}
  Bayesian optimization (BO) \citep{garnett_bayesoptbook_2023} has become a popular approach for optimizing expensive and noisy black-box functions, with applications ranging from hyperparameter tuning to physical experiment design and system configuration. BO relies on a probabilistic surrogate model of an unknown objective~$f$, which is used in an acquisition function to select informative points to evaluate next.

  In many of these applications, there is a distinguished \emph{default} or \emph{status quo} configuration. For instance, production systems often ship with carefully tuned configurations, compiler flags, or model-serving infrastructure. Deviating from such defaults can be costly: changing many parameters at once can introduce unintended behaviors that are not captured in the optimization objective and can increase technical debt \citep{scully2015debt, sebo}.

  Practitioners, therefore, frequently ask a question that standard BO does not address directly:
  \emph{``Given a current default configuration, what is the minimal set of changes I can make to optimize my system?''}
  Unfortunately, conventional BO is indifferent to this preference; it typically adjusts many weakly relevant parameters and often pushes marginal dimensions to the boundary of the search space, even when their effect on the objective is negligible.

  We propose BONSAI (Bayesian Optimization with Natural Simplicity and Interpretability) to address this gap. BONSAI is a lightweight post-processing layer that sits on top of any acquisition function. At each BO iteration $t$, BONSAI: 1) identifies a candidate $\bm x_t^*$ that maximizes the acquisition function $\alpha_t$, 2) defines the \emph{acquisition gap} $\Delta_t(\bm x) = \alpha_t(\bm x_t^*) - \alpha_t(\bm x)$ of any point $\bm x$ at round $t$, 3) greedily resets components of $\bm x_t^*$ back to their default values as long as the acquisition gap of the pruned candidate remains below a relative threshold, and 4) returns the pruned point $\tilde{\bm x}_t$ when any further one-component reset exceeds the threshold. Because BONSAI directly affects the queried points, it is part of the BO decision policy rather than a purely post-hoc analysis.

  \paragraph{Contributions.}
  % This paper makes the following contributions.
  \begin{enumerate}[leftmargin=14pt, labelwidth=!, labelindent=0pt]
      \item We formalize the setting of \emph{default-aware} BO, where the goal is not only to optimize a black-box function but also to minimize deviation from a specified default configuration, measured by the number of components that change.

      \item We introduce BONSAI, a default-aware BO policy that modifies acquisition-maximizing candidates by reverting low-impact deviations to a user-specified default configuration while enforcing a gap rule on the decrease in acquisition value.

      \item In the context of using the Upper Confidence Bound (UCB) acquisition function with a Gaussian Process surrogate, we prove that BONSAI's regret is bounded by the usual GP-UCB term plus a sum of threshold-dependent penalties, the latter of which is controlled under a particular gap rule leading to sublinear regret.

      \item We prove a \emph{sparsity-recovery} guarantee: under the same standard ARD lengthscale assumption used in GP-UCB regret bounds, BONSAI prunes every irrelevant coordinate at \emph{zero acquisition cost}, so its returned configuration is supported only on the truly relevant dimensions. Composed with our regret bound, this means BONSAI matches the standard GP-UCB regret rate \emph{while} provably recovering the minimal-$\ell_0$ solution. To our knowledge, this is the first such guarantee for default-aware BO; existing sparse-BO methods (IR, ER, SEBO) provide neither a regret bound nor a sparsity-recovery guarantee.

      % \item We discuss how the thresholds control the trade-off between staying close to the acquisition maximizer and allowing more aggressive pruning.

      \item We evaluate Expected Improvement (EI) and UCB variants of BONSAI on synthetic and real-world test problems, demonstrating that BONSAI yields favorable sparsity–performance trade-offs: it substantially reduces the number of non-default parameters in the recommended configurations while maintaining competitive optimization performance and wall time relative to standard BO.
  \end{enumerate}

  %%%%%%%%%%%%%%%%%%%%%%%%%%%%%%%%%%%%%%%%%%%%%%%%%%%%%%%%%%%%%%%%%%%%%%
  \section{Background}
  \label{sec:background}

  We briefly review Bayesian optimization (BO), Gaussian process (GP) surrogates, and the GP-UCB algorithm, focusing on the components used in our analysis. We refer to Appendix~\ref{appdx:theory} and \citet{srinivas} for full details.

  \paragraph{Bayesian optimization and GP surrogates.}
  We consider the problem of maximizing an unknown function $f : \mathbb{X} \rightarrow \mathbb{R}$ over a compact domain $\mathbb{X} \subset \mathbb{R}^d$. At iteration $t$, the algorithm selects a point $\bm x_t \in \mathbb{X}$, observes a noisy evaluation
  $y_t = f(\bm x_t) + \varepsilon_t,$
  where the noise $\varepsilon_t \sim \mathcal N(0, \sigma^2)$ and $\sigma^2$ is the noise variance, and uses the history $\{(\bm x_s, y_s)\}_{s=1}^t$ to decide where to evaluate next.
  A common choice of surrogate model is a GP prior over $f$ with a constant mean and covariance function $k$. Conditioned on the data up to time $t-1$, the GP posterior has mean $\mu_{t-1} : \mathbb X \to \mathbb R$ and standard deviation $\sigma_{t-1} : \mathbb X \to \mathbb R_+$.

  \paragraph{Acquisition functions.}
  BO algorithms typically select $\bm x_t$ by maximizing an \emph{acquisition function} $\alpha_t : \mathbb X \to \mathbb R$ that quantifies the utility of evaluating $f$ at $\bm x$ given the current posterior. The acquisition balances \emph{exploration} by preferring points with large posterior uncertainty and \emph{exploitation} by preferring points with a large posterior mean.
  Popular choices include Expected Improvement (EI, \citealp{jones98}), which measures the expected positive improvement over the best observed value so far; Upper Confidence Bound (UCB, \citealp{srinivas}), which takes the form
  $
  \alpha_t^{\mathrm{UCB}}(\bm x) = \mu_{t-1}(\bm x) + \beta_t\, \sigma_{t-1}(\bm x),
  $
  where $\beta_t > 0$ controls the exploration–exploitation trade-off; and information-based criteria such as entropy search~\citep{entropy_search}.
  In this paper, BONSAI itself is agnostic to the choice of acquisition function; it only requires the ability to evaluate $\alpha_t(\bm x)$ at candidate points. For our theoretical analysis, we focus on UCB, while in our experiments we also apply BONSAI to EI, which is often more robust in practice \citep{jones98, bo_review}.

  %%%%%%%%%%%%%%%%%%%%%%%%%%%%%%%%%%%%%%%%%%%%%%%%%%%%%%%%%%%%%%%%%%%%%%
  \section{Related Work}
  \label{sec:related_work}

  \paragraph{Other related work.}
  \emph{Conservative/safe BO}~\citep[e.g.,][]{kazerouni2017conservative,sui2015safe} constrains the unknown objective relative to a baseline; BONSAI instead constrains the acquisition function and simplifies recommendations in input space, but the two are complementary. \emph{Prior-based BO} methods such as PiBO~\citep{hvarfner2022pibo} augment the acquisition function with a user-specified prior often centered at a known-good default, providing a soft probabilistic bias toward the default. In contrast, BONSAI imposes a hard $\ell_0$ pruning step with explicit acquisition-gap control: rather than reweighting the acquisition surface, it certifies that any deviation from the default contributes meaningfully to the acquisition value. The two are complementary—a PiBO-style prior could be combined with BONSAI pruning. \emph{Variable selection and shrinkage} methods (SAASBO~\citep{saasbo}, VSBO~\citep{vsbo}, dimensionality-scaled priors~\citep{hvarfner2024dsp}) target the surrogate's structure to improve sample efficiency, but their recommendations can still differ from the default in many components. \emph{Explainable BO}~\citep{chakraborty2025explainablebayesianoptimization,pmlr-v238-adachi24a} explains an existing recommendation; BONSAI instead simplifies the recommendation directly. See Appendix~\ref{appdx:additional_related_work} for further discussion.

  \newlength{\acqimgheight}
  \settoheight{\acqimgheight}{\includegraphics[width=0.55\linewidth]{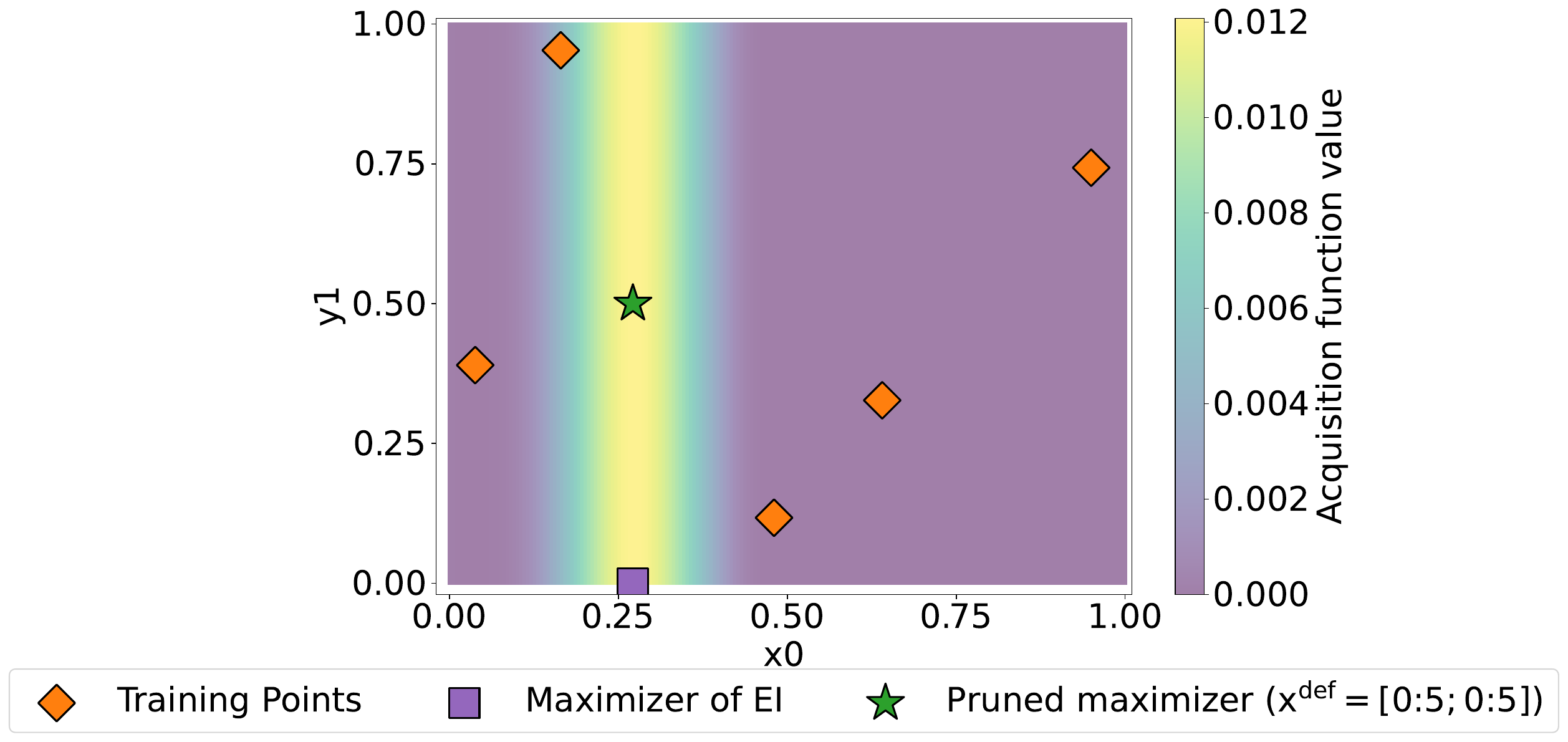}}
  \begin{figure}[!ht]
    \centering
    \begin{minipage}[t]{0.49\linewidth}\centering
      \includegraphics[width=\linewidth]{figures/acquisition_function_landscape.pdf}
      \captionof{figure}{Example where a parameter with no effect ($x_1$) is sent to the boundary by BO. %With 5 observations from $f(\bm x) = (x_0 - 0.25)^2$, a GP correctly infers that only $x_0$ is relevant.
      Even though the effect of $x_1$ is close to zero, %(the GP lengthscale for $x_1$ is long)
      EI is maximized with $x_1$ on the boundary (purple square). BONSAI mitigates this by moving $x_1$ back to its default (green star).}
      \label{fig:motivating figure}
    \end{minipage}\hfill
    \begin{minipage}[t]{0.49\linewidth}\centering
      \vbox to \acqimgheight{\vfil
        \centering
\begin{small}
\begin{sc}
\setlength{\tabcolsep}{2pt}
\begin{tabular}{lcc}
\toprule
Method & Single-Obj. & Multi-Obj. \\ \midrule
IR-$L_0$ & 7.3x ($\pm$ 8.7x) & N/A \\
ER-$L_0$ & 8.7x ($\pm$ 9.2x) & 10.8x ($\pm$ 14.4x) \\
SEBO & 13.9x ($\pm$ 19.1x) & 33.9x ($\pm$ 61.4x) \\
BONSAI EI & $\bm{1.6\text{x}}$ ($\pm$ 0.8x) & $\bm{1.5\text{x}}$ ($\pm$ 0.6x) \\
\bottomrule
\end{tabular}
\end{sc}
\end{small}

      \vfil}
      \captionof{table}{Average generation time relative to Vanilla BO across the problems in Sec.~\ref{sec:experiments} ($\pm$ 2 standard errors over problems), split by single-objective and multi-objective. IR is not applicable to multi-objective settings. Fastest method per column in bold.}
      \label{tab:avg_gen_times}
    \end{minipage}
  \end{figure}

  \paragraph{Sparse BO.}
  SEBO~\citep{sebo} poses sparse BO as learning a sparsity--objective trade-off; we instead optimize the objective and remove low-impact changes from the default. The internal and external regularization (IR/ER) methods of \citet{sebo} are most related: IR penalizes the objective by $\|\bm x-\bm x^\text{def}\|_0$ (requiring acquisition-specific modifications) and ER penalizes the acquisition. Both rely on a hard-to-tune regularization parameter, are non-differentiable due to the $\ell_0$ term, and lack theoretical guarantees. They also rely on homotopy continuation, making them substantially more expensive than standard BO: across the benchmarks in Sec.~\ref{sec:experiments}, IR/ER/SEBO incur average slowdowns of roughly $7\times$/$9\times$/$14\times$ over Vanilla BO on single-objective problems (and ER/SEBO incur $11\times$/$34\times$ on multi-objective; IR is not applicable to multi-objective), with per-problem slowdowns exceeding $25\times$ (Table~\ref{tab:avg_gen_times}); BONSAI averages only $1.5\times$--$1.6\times$ while remaining universally applicable and provably bounding the loss in acquisition value.

  \citet{pmlr-v286-kim25b} analyze regret under inexact acquisition maximization (a useful tool for our analysis) but do not consider it as a mechanism for simplifying recommendations.

  See Appendix~\ref{appdx:additional_related_work} for additional related work.
  %%%%%%%%%%%%%%%%%%%%%%%%%%%%%%%%%%%%%%%%%%%%%%%%%%%%%%%%%%%%%%%%%%%%%%
  \section{Problem Setting}
  \label{sec:problem_setting}
  We now formalize the default-aware BO setting. For clarity, we first consider a single BO iteration and suppress the time index on the acquisition function, writing $\alpha(\cdot)$ to denote the acquisition function in the current iteration. Each configuration $\bm x \in \mathbb X \subset \mathbb R^d$ represents a setting of $d$ controllable input parameters, and we refer to $x_j$ as the $j$-th component of the configuration.

  We assume the standard BO setting from \cref{sec:background}. In addition we are given a fixed \emph{default configuration} $\bm x^{\mathrm{def}} \in \mathbb X$. Intuitively, $\bm x^{\mathrm{def}}$ represents the status quo: a configuration that has been vetted or previously deployed. Deviating from this default can introduce unintended system behaviors or technical debt that is not captured by the primary objective $f$, and hence, we prefer to deviate from $\bm x^{\mathrm{def}}$ as little as possible.

  For any candidate $\bm x \in \mathbb{X}$, we define its active set $A(x) := \{j \in \{1,\dots,d\} : x_j \neq \bm x^\text{def}_j\}$,
  which represents the components in which $x$ differs from the default. The complexity or deviation of a configuration can thus be measured by its $\ell_0$ distance to the default, $\|\bm x -\bm x^\text{def}\|_0 = |A(\bm x)|$.
  Our goal is to find a configuration that is both high-performing and sparse relative to the default. This quantity ignores the \emph{magnitude} of the changes and only counts how many coordinates differ from the default, matching our goal of highlighting which parameters need to be touched at all. Formally, we pose this as a relative $\epsilon$-constrained minimal-intervention problem. For a user-specified relative tolerance $\epsilon \in [0,1)$ on the achievable improvement over the default, the global objective is $\min_{\bm x \in \mathbb{X}} \|\bm x - \bm x^{\mathrm{def}}\|_0 ~ \text{s.t.} ~ f(\bm x) \ge f^\star - \epsilon\,(f^\star - f(\bm x^{\mathrm{def}}))$, where $f^\star := \max_{\bm x' \in \mathbb{X}} f(\bm x')$. Equivalently, $\epsilon$ is the fraction of the default-to-optimum improvement we are willing to forgo for sparsity.
  This defines the ideal target: the configuration with the fewest changed parameters that still achieves near-optimal performance.\footnote{Our theory assumes $\mathbb X \subset \mathbb R^d$ is a compact hyper-rectangle; experiments also use mixed continuous--discrete spaces, where $\|\bm x-\bm x^{\mathrm{def}}\|_0$ counts coordinates of any type that differ from the default.}
  \section{Methodology}
  \label{sec:bonsai}
  \subsection{Acquisition Gaps and Thresholded Pruning}
  \label{sec:af_gaps_and_thresholded_pruning}
  Solving the global $\epsilon$-objective directly is intractable because $f$ is unknown. BO instead acts on an acquisition function $\alpha_t$ at each iteration $t$; without loss of generality we assume $\alpha_t$ is non-negative (any acquisition can be shifted by a constant without changing its maximizer; see Lemma~\ref{lem:rho-to-eta} in Appendix~\ref{appdx:proofs}). Let $\bm x_t^* \in \arg\max_{\bm x \in \mathbb{X}} \alpha_t(\bm x)$ be its (approximate) maximizer and define the \textit{acquisition gap} of any candidate $\bm x$ as $\Delta_t(\bm x) := \alpha_t(\bm x_t^*) - \alpha_t(\bm x)$. We proxy the global tolerance with a per-iteration relative threshold $\rho_t \in [0,1)$ on $\Delta_t$. Applying the relative threshold to raw $\alpha_t$ values is not well-posed when the acquisition baseline is arbitrary (e.g., EI vanishes far from data, qLogNEI lives on a log scale), so we instead apply it to a baseline-shifted incremental acquisition $\tilde\alpha_t(\bm x) := \alpha_t(\bm x) - b_t$, where $b_t := \max_{s<t}\alpha_t(\bm x_s)$ is the maximum \emph{current-round} acquisition value evaluated across previously queried designs (not the historical acquisition values recorded at query time). This makes $\rho_t$ a fraction of the \emph{improvement over the current best} and renders it comparable across acquisition functions (for log-acquisitions \citep{logei} we exponentiate before forming $b_t$; see Appendix~\ref{appdx:bonsai_alg}). Our local per-iteration objective is
  $\min_{\bm x \in \mathbb{X}} \|\bm x - \bm x^{\mathrm{def}}\|_0 \quad \text{s.t.} \quad \Delta_t(\bm x) \le \rho_t \,\tilde\alpha_t(\bm x_t^*),$
  and Section~\ref{sec:theory} shows that controlling $\rho_t$ bounds cumulative regret, providing a principled proxy for the global $\epsilon$-objective.

  Given $S \subseteq \{1,\dots,d\}$, let $P_S(\bm x)$ be the configuration that keeps components in $S$ as in $\bm x$ and resets the rest to their defaults. BONSAI restricts to the family of pruned candidates $\mathcal X^{\mathrm{prune}} := \{ P_S(\bm x_t^*) : S \subseteq A(\bm x_t^*) \}$ and selects, within those satisfying the relative gap rule, the one with the smallest active set. Enumerating all $2^{|A(\bm x_t^*)|}$ subsets is infeasible in high dimensions, so BONSAI uses a greedy algorithm that resets components one at a time. Restricting to single-coordinate resets means BONSAI can miss configurations only reachable by joint resets, but on the low-dimensional problems that we test where enumeration is feasible, this has negligible effect on performance (Section~\ref{sec:experiments}).

  \subsection{Sequential Greedy Pruning}
  Our practical sequential greedy variant (pseudocode in \cref{alg:bonsai}, Appendix~\ref{appdx:bonsai_alg}) initializes $\tilde{\bm x}_t = \bm x_t^*$ and repeatedly resets the differing component $j$ whose pruned candidate has the smallest acquisition gap and still satisfies the relative threshold; it stops when no further reset is feasible. The result satisfies the relative-gap condition by construction and matches the exact combinatorial optimum on most low-dimensional problems we test (Section~\ref{sec:experiments}). In the worst case, BONSAI performs $O(d^2)$ acquisition evaluations per BO step (at most $d$ outer iterations, each over $|\mathcal D| \le d$ components), which is typically modest relative to $\alpha_t$-optimization or evaluating $f$.

  %%%%%%%%%%%%%%%%%%%%%%%%%%%%%%%%%%%%%%%%%%%%%%%%%%%%%%%%%%%%%%%%%%%%%%
  \section{Theoretical Analysis}
  \label{sec:theory}

  We analyze BONSAI in the sequential setting with GP-UCB; complete proofs are in Appendix~\ref{appdx:proofs}. Cumulative regret is $R_T = \sum_{t=1}^T [f(\bm x^\star) - f(\bm x_t)]$ and simple regret $r_T = f(\bm x^\star) - \max_{t \le T} f(\bm x_t) \le R_T/T$, with $\bm x^\star \in \argmax_{\bm x \in \mathbb X} f(\bm x)$. Under standard kernel regularity \citep{srinivas}, GP-UCB achieves sublinear cumulative regret at the rate $R_T/T = \tilde{O}(\gamma_T/\sqrt T)$, where $\gamma_T$ is the maximum information gain. \cref{sec:theory_preliminaries} gives the formal setup, and \cref{sec:theory_approx} expresses our bounds as this GP-UCB term plus an explicit, gap-dependent penalty.

  Our bounds depend only on the \emph{acquisition gaps} between queried points and the ideal UCB maximizers, not on how query points are constructed; BONSAI is one such construction that bounds these gaps while minimizing deviations from the default.

  %%%%%%%%%%%%%%%%%%%%%%%%%%%%%%%%%%%
  \subsection{Preliminaries}
  \label{sec:theory_preliminaries}

  We work in the standard GP-UCB setting of \citet{srinivas} and adopt the standard GP-UCB assumptions: $f$ lies in the reproducing kernel Hilbert space (RKHS) $\mathcal H_k$ associated with a positive definite kernel $k$, with bounded norm $\|f\|_{\mathcal H_k} \le B$, and the kernel variance is uniformly bounded so that $|f(\bm x)| \le \kappa B$ for all $\bm x \in \mathbb X$. We place a zero-mean GP prior with covariance $k$. Under these conditions, the GP posterior after $t-1$ observations has mean $\mu_{t-1}$ and standard deviation $\sigma_{t-1}$, and GP-UCB uses the acquisition function $\alpha^\text{UCB}_t$ defined in Section~\ref{sec:background}. As in the original GP-UCB analysis, we treat the kernel and noise hyperparameters as fixed and known when stating the guarantees; in practice they are typically learned from data.
  % , and our bounds should be interpreted as conditional on a particular choice of hyperparameters.

  We write
  $
  \bm x_t^{*} \in \arg\max_{\bm x\in\mathbb X} \alpha_t(\bm x)$
  and denote the maximum acquisition value by $\alpha_t^{*} := \alpha_t(\bm x_t^{*})$. Following \citet{pmlr-v286-kim25b}, we define the (multiplicative) \emph{acquisition accuracy} at round $t$ as $\eta_t := \alpha_t(\tilde{\bm x}_t) / \alpha_t^{*} \in [0,1],$ and the \emph{worst-case accumulated inaccuracy}
  $M_T := \sum_{t=1}^T (1-\tilde\eta_t) \in [0,T],$ where $\tilde\eta_t \le \eta_t$ is any deterministic lower bound on the overall acquisition accuracy at round $t$ (capturing both inner acquisition optimization and BONSAI's pruning).
  Intuitively, $1-\eta_t$ measures the (relative) loss in acquisition value at round $t$, and $M_T$ aggregates these losses across time. To make BONSAI's relative rule well-posed, we use the incremental acquisition $\tilde\alpha_t(\bm x):=\alpha_t(\bm x)-b_t$ with $b_t:=\max_{s<t}\alpha_t(\bm x_s) \ge 0$, and apply thresholds using $\tilde\alpha_t(\bm x_t^*)=\alpha_t^*-b_t$.

  Our goal is to characterize how the (possibly zero) acquisition gap incurred by simplifying $\tilde{\bm x}_t$ relative to the default affects cumulative regret.

  \subsection{Regret Bounds for BONSAI with GP-UCB}
  \label{sec:theory_approx}
  We assume that at each round $t$ the point $\tilde{\bm x}_t$ selected by the BO policy satisfies a gap rule relative to $\bm x_t^*$: $\Delta_t(\tilde{\bm x}_t) \le \rho_t \,\tilde\alpha_t(\bm x_t^*)$ for some user-specified sequence $(\rho_t)_{t\ge 1}$, where $\rho_t \in [0,1)$. Intuitively, $\rho_t$ is the fraction of acquisition value we are willing to sacrifice at time $t$ in order to simplify the configuration. We begin by bounding the accumulated acquisition inaccuracies in terms of $\rho_t$ (Lemma~\ref{lem:rho-to-eta} in Appendix~\ref{appdx:proofs}).
  Leveraging this lower bound, we obtain the following regret bound for BONSAI with GP-UCB.
  \begin{restatable}[Regret bound via accumulated inaccuracy]{theorem}{klcspecialization}
  \label{thm:klc-specialization}
  Assume the GP-UCB setting and kernel regularity conditions of \citet{srinivas} and \citet{pmlr-v286-kim25b}: in particular, $f\in\mathcal H_k$ with $\|f\|_{\mathcal H_k}\le B$, $k(x,x)\le 1$, and the noise is conditionally $R$-sub-Gaussian. Let $\alpha_t(\bm x) = \mu_{t-1}(\bm x) + \beta_t \sigma_{t-1}(\bm x)$ be the GP-UCB acquisition with
  $\beta_t = B + R\sqrt{2\bigl(\gamma_{t-1} + 1 + \log(1/\delta)\bigr)},$
  where $\delta \in (0,1]$.
  Suppose that at round $t$ BONSAI returns $\tilde{\bm x}_t$ satisfying the relative rule $\Delta_t(\tilde{\bm x}_t) \le \rho_t \,\tilde\alpha_t(\bm x_t^{*}),$ where $\rho_t \in [0,1).$ Let $\gamma_T$ denote the maximum information gain after $T$ evaluations. Then, with probability at least $1-\delta$,
  \begin{equation}
  \label{eq:bonsai-klc}
    R_T
    = O\Bigl(\gamma_T\sqrt{T} \;+\; \sqrt{\gamma_T}\,\sum_{t=1}^T \rho_t\Bigr).
  \end{equation}
  \end{restatable}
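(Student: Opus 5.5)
The plan is a direct instantaneous-regret argument in the style of \citet{srinivas} and \citet{pmlr-v286-kim25b}, treating BONSAI's output as an inexact UCB maximizer with a controlled additive gap. First, invoke the standard RKHS confidence bound (Abbasi-Yadkori/Chowdhury--Gopalan type, as used by \citet{pmlr-v286-kim25b}). With the stated $\beta_t$, it gives, with probability at least $1-\delta$, simultaneously for all $t$ and $\bm x$,
\begin{equation*}
|f(\bm x)-\mu_{t-1}(\bm x)|\le \beta_t\sigma_{t-1}(\bm x).
\end{equation*}
On this event, the instantaneous regret splits as
\begin{equation*}
r_t = f(\bm x^\star)-f(\tilde{\bm x}_t)\le \alpha_t(\bm x^\star)-f(\tilde{\bm x}_t)\le \alpha_t^*-f(\tilde{\bm x}_t) = \Delta_t(\tilde{\bm x}_t) + \bigl(\alpha_t(\tilde{\bm x}_t)-f(\tilde{\bm x}_t)\bigr).
\end{equation*}
The last term is at most $2\beta_t\sigma_{t-1}(\tilde{\bm x}_t)$. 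Since the queried points are exactly the $\tilde{\bm x}_t$, the usual elliptical-potential argument gives $\sum_t \sigma_{t-1}(\tilde{\bm x}_t)\le \sqrt{CT\gamma_T}$. Because $\beta_t\le\beta_T=O(\sqrt{\gamma_T})$ (treating $B,R,\log(1/\delta)$ as constants), this part contributes $O(\gamma_T\sqrt T)$.

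Second, bound the gap penalty using the relative rule:
\begin{equation*}
\Delta_t(\tilde{\bm x}_t)\le\rho_t\tilde\alpha_t(\bm x_t^*)=\rho_t(\alpha_t^*-b_t).
\end{equation*}
It therefore suffices to show $\alpha_t^*-b_t=O(\sqrt{\gamma_T})$ uniformly in $t$. For $t\ge 2$ we have $b_t\ge\alpha_t(\bm x_1)\ge\mu_{t-1}(\bm x_1)$. On the good event $|\mu_{t-1}(\bm x)|\le \kappa B+\beta_t$, and with $k\le1$ also $\sigma_{t-1}\le1$. Hence
\begin{equation*}
\alpha_t^*-b_t\le (\kappa B+2\beta_t)-(-\kappa B-\beta_t)=O(\beta_t)=O(\sqrt{\gamma_T}).
\end{equation*}
For $t=1$, take $b_1=0$, so that $\alpha_1^*\le\beta_1=O(1)$. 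Summing gives $\sum_t\Delta_t(\tilde{\bm x}_t)=O\bigl(\sqrt{\gamma_T}\sum_t\rho_t\bigr)$. Alternatively, Lemma~\ref{lem:rho-to-eta} can be used to convert to $1-\tilde\eta_t\lesssim\rho_t$ and then plug into the inexact-maximization bound of \citet{pmlr-v286-kim25b}, whose penalty is $\alpha_t^*(1-\eta_t)$ with $\alpha_t^*=O(\beta_t)$. Either route yields the same form.

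The main obstacle is this uniform $O(\sqrt{\gamma_T})$ control of the incremental acquisition scale $\alpha_t^*-b_t$ (equivalently $\alpha_t^*$ after the nonnegativity shift). It needs the confidence event to bound $|\mu_{t-1}|$ uniformly, and the shift constant must stay $O(\beta_T)$. I would handle this by using $|f|\le\kappa B$ together with the confidence width, so that $|\mu_{t-1}|\le \kappa B+\beta_t$. Combining the two parts and absorbing constants yields \eqref{eq:bonsai-klc}.
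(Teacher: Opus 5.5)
Your proof is correct, but it takes a more self-contained route than the paper does.

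\emph{The paper's route.} The proof is a one-line reduction. Lemma~\ref{lem:rho-to-eta} uses the nonnegativity convention $b_t\ge 0$ to weaken the incremental rule to $\Delta_t(\tilde{\bm x}_t)\le\rho_t\alpha_t^*$. This is a multiplicative accuracy $\eta_t\ge 1-\rho_t$, so $M_T\le\sum_t\rho_t$. Theorem~3 of \citet{pmlr-v286-kim25b} is then cited as a black box to get $R_T=O(\gamma_T\sqrt T+M_T\sqrt{\gamma_T})$. The ``alternative'' you sketch at the end of your second step is exactly this proof.

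\emph{Your route.} You instead redo the GP-UCB argument additively. You split $r_t\le\Delta_t(\tilde{\bm x}_t)+2\beta_t\sigma_{t-1}(\tilde{\bm x}_t)$ on the confidence event and apply the elliptical-potential bound to the second term. You then bound the scale of the relative rule directly by $\alpha_t^*-b_t\le 2\kappa B+3\beta_t$, using $b_t\ge\alpha_t(\tilde{\bm x}_1)\ge\mu_{t-1}(\tilde{\bm x}_1)$.

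\emph{What your route buys.}
\begin{itemize}
\item It works with the shift-invariant quantity $\alpha_t^*-b_t$. So the nonnegativity convention, and the question of how large the shift must be, never arise.
\item The paper's reduction does need this: the penalty in the multiplicative form scales with the shifted maximum $\alpha_t^*$, so the shift must be $O(\beta_t)$. That holds on the confidence event but is not spelled out.
\item It keeps the per-round penalty $\rho_t(\alpha_t^*-b_t)$, which is never larger than the paper's $\rho_t\alpha_t^*$ on the nonnegative scale.
\item It makes explicit that the $\sqrt{\gamma_T}$ factor multiplying $\sum_t\rho_t$ is just the $O(\beta_T)$ range of the UCB on the confidence event.
\end{itemize}

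\emph{What the paper's route buys.} It is brief and modular. The multiplicative form composes directly with inner-solver inaccuracy via $\tilde\eta_t=(1-\rho_t)\eta_t^{\mathrm{solve}}$. It also reuses a published theorem instead of re-deriving the confidence-bound and information-gain steps.

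Your special handling of $t=1$ (setting $b_1=0$) is unnecessary under the paper's convention that some design has already been evaluated. Your general bound $b_t\ge\alpha_t(\bm x_s)$ for any earlier design covers that case. In any event, a single round contributes regret at most $2\kappa B$.
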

  The bound is the standard GP-UCB rate plus an additive penalty proportional to $\sum_t \rho_t$: each round we sacrifice up to a fraction $\rho_t$ of acquisition value for sparsity, and pay a corresponding regret penalty. Under appropriate conditions on $(\rho_t)_{t\ge 1}$, BONSAI remains asymptotically no-regret.
  \begin{restatable}[Asymptotic No-Regret]{corollary}{bonsaiklc}
  \label{cor:bonsai-klc}
  If the relative thresholds satisfy $\sum_{t=1}^\infty \rho_t < \infty$ or more generally $\sum_{t=1}^T \rho_t = O(\sqrt{T}),$
  then BONSAI remains asymptotically no-regret and preserves the standard GP-UCB rate up to lower-order terms.
  \end{restatable}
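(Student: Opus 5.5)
The corollary follows by substituting the summability hypothesis on $(\rho_t)$ into the bound \eqref{eq:bonsai-klc} of Theorem~\ref{thm:klc-specialization} and then dividing by $T$. Fix $\delta\in(0,1]$ and work on the probability-$(1-\delta)$ event of Theorem~\ref{thm:klc-specialization}. On that event,
\[
R_T \le C\Bigl(\gamma_T\sqrt{T} + \sqrt{\gamma_T}\,\textstyle\sum_{t=1}^T\rho_t\Bigr)
\]
for a constant $C$ independent of $T$. This independence is the one point to check explicitly: the absorbed constants should depend only on $B$, $R$, $\delta$, and the kernel bound $k(x,x)\le 1$, not on the sequence $(\rho_t)$.

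\emph{Step 1 (the general case).} Suppose $\sum_{t\le T}\rho_t \le c\sqrt{T}$ for all large $T$. The penalty term is then at most $c\sqrt{\gamma_T T}$. Since $\gamma_T$ is nondecreasing and $\gamma_T\ge\gamma_1>0$ for any nondegenerate kernel, we have $\sqrt{\gamma_T}\le \gamma_T/\sqrt{\gamma_1}$. Hence the penalty is $O(\gamma_T\sqrt T)$, and $R_T = O(\gamma_T\sqrt{T})$, which is the GP-UCB rate. The summable case $\sum_t\rho_t<\infty$ is a special case: the penalty is $O(\sqrt{\gamma_T})$, which is genuinely lower order. This is the sense of ``up to lower-order terms.''

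\emph{Step 2 (no-regret).} Dividing by $T$ gives
\[
R_T/T = O\bigl(\gamma_T/\sqrt{T}\bigr) + O\bigl(\sqrt{\gamma_T/T}\bigr).
\]
Under the standard kernel regularity of \citet{srinivas}, the information gain satisfies $\gamma_T = o(\sqrt T)$; for example, $\gamma_T = O((\log T)^{d+1})$ for the squared-exponential kernel, with an analogous sublinear rate for Matérn kernels with sufficient smoothness. Both terms therefore vanish as $T\to\infty$, so $R_T/T\to 0$. Consequently the simple regret satisfies $r_T\le R_T/T\to 0$, and BONSAI is asymptotically no-regret.

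\emph{Main obstacle.} The only delicate point is the meaning of ``standard rate'': the conclusion $\gamma_T/\sqrt T\to 0$ inherits exactly the kernel assumptions under which vanilla GP-UCB is itself no-regret. I would state the corollary as holding whenever those assumptions hold, and confirm that the probability-$(1-\delta)$ event is the same one used in Theorem~\ref{thm:klc-specialization}, so that no union bound over $T$ is needed.
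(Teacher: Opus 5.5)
Your proposal is correct and follows essentially the same route as the paper, whose proof plugs the hypothesis on $\sum_{t\le T}\rho_t$ into \eqref{eq:bonsai-klc}, compares it to $\sqrt{T}$, and invokes the known growth of $\gamma_T$; your write-up fills in details the paper leaves implicit (the $\rho$-independence of the absorbed constants, $\sqrt{\gamma_T}\le\gamma_T/\sqrt{\gamma_1}$, and the fact that the confidence event is uniform in $T$). Keep your caveat that the no-regret conclusion needs $\gamma_T=o(\sqrt{T})$ as an explicit assumption rather than attributing it to the Srinivas conditions in general, since for Mat\'ern-$\nu$ kernels it requires $\nu>d/2$, which is stronger than the $\nu>\tfrac12$ quoted in Corollary~\ref{cor:schedules}.
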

  For example, using $\rho_t=\frac{c}{t}$, where $c>0$ satisfies \cref{cor:bonsai-klc} when used with squared exponential or Matérn kernels (see \cref{cor:schedules} in Appendix~\ref{appdx:theory} for formal examples and bounds).

  \paragraph{Sparsity recovery guarantees.}
  Under exact ARD lengthscale estimation---the same assumption underlying GP-UCB regret bounds \citep{srinivas, pmlr-v286-kim25b}---BONSAI prunes every irrelevant coordinate at \emph{zero acquisition cost} (\cref{thm:recovery_ard} in Appendix~\ref{appdx:sparsity_recovery}; we additionally prove recovery under additive acquisitions, \cref{thm:recovery_additive}). The result below uses two assumptions (full statements in Appendix~\ref{appdx:sparsity_recovery}):
  \begin{itemize}[leftmargin=14pt, labelwidth=!, labelindent=0pt, itemsep=1pt, topsep=2pt]
      \item \textbf{(Perfect ARD lengthscales, A.\ref{assum:perfect_ard}).} $\ell_j = \infty$ for every irrelevant coordinate $j \notin A^\star$, and $\ell_j < \infty$ for $j \in A^\star$.
      \item \textbf{(Relevant-dimension gap, A.\ref{assum:relevant_gap}).} At each round $t$, $\tau_t := \rho_t \tilde\alpha_t(\bm x_t^\star) < \min_{j \in A^\star \cap A(\bm x_t^\star)} \Delta_t(R_{S_{\mathrm{irrel}} \cup \{j\}}(\bm x_t^\star))$, where $S_{\mathrm{irrel}} := A(\bm x_t^\star) \setminus A^\star$ — i.e., resetting any one relevant coordinate alongside the irrelevant ones costs more than $\tau_t$.
  \end{itemize}
  Composing the recovery guarantee with Theorem~\ref{thm:klc-specialization} yields:

  \begin{proposition}[BONSAI matches GP-UCB regret while solving the global $\epsilon$-objective]
  \label{prop:regret_plus_sparsity}
  Suppose the GP-UCB regret assumptions of \citet{srinivas} and Assumption~\ref{assum:perfect_ard} hold. Let $\{\rho_t\}_{t \ge 1}$ be any threshold schedule with $\rho_t \ge 0$ such that, at every round $t$, \emph{either} (a) $\rho_t = 0$, \emph{or} (b) Assumption~\ref{assum:relevant_gap} holds. Then:
  \begin{enumerate}[leftmargin=14pt, labelwidth=!, labelindent=0pt, itemsep=2pt, topsep=2pt]
      \item (\emph{Sparsity recovery}) For every round $t$, $A(\tilde{\bm x}_t) \subseteq A^* := \{j : \ell_j < \infty\}$; in particular $\|\tilde{\bm x}_t - \bm x^{\mathrm{def}}\|_0 \le |A^*|$.
      \item (\emph{No-cost pruning}) Pruning every irrelevant coordinate incurs zero acquisition gap, so $\Delta_t(\tilde{\bm x}_t)=0$ at every round and BONSAI's cumulative regret satisfies the standard GP-UCB rate $\tilde{\mathcal O}(\sqrt{T \gamma_T})$ \emph{without} the additive $\mathcal O(\sqrt{\gamma_T} \sum_t \rho_t)$ penalty in Theorem~\ref{thm:klc-specialization}.
      \item (\emph{Global $\epsilon$-objective}) Standard GP-UCB regret bounds imply $f^\star - f(\hat{\bm x}_T) \le \tilde{\mathcal O}(\sqrt{\gamma_T/T})$ for the best historical point $\hat{\bm x}_T \in \arg\max_{t \le T} f(\tilde{\bm x}_t)$. Combined with sparsity recovery (Claim~1), $\hat{\bm x}_T$ is feasible for the global relative $\epsilon_T$-constrained minimal-intervention problem of Section~\ref{sec:bonsai} with $\epsilon_T = \tilde{\mathcal O}(\sqrt{\gamma_T/T})/(f^\star - f(\bm x^{\mathrm{def}})) \to 0$, and $\|\hat{\bm x}_T - \bm x^{\mathrm{def}}\|_0 \le |A^*|$ matches the ideal minimal-$\ell_0$ solution restricted to $A^*$.
  \end{enumerate}
  \end{proposition}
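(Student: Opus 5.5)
The plan is to prove the structural fact first and then read off the three claims. Under Assumption~\ref{assum:perfect_ard}, an ARD kernel with $\ell_j=\infty$ for $j\notin A^*$ depends on its arguments only through the coordinates in $A^*$. That is, $k(\bm x,\bm x') = k(P_{A^*}(\bm x),P_{A^*}(\bm x'))$, and we take this as the formal meaning of an infinite lengthscale, as in the product/stationary ARD form.

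Since $\mu_{t-1}$ and $\sigma_{t-1}$ are built from $k(\cdot,\bm x_s)$ and the Gram matrix, it follows that $\mu_{t-1}(\bm x)=\mu_{t-1}(\bm x')$ and $\sigma_{t-1}(\bm x)=\sigma_{t-1}(\bm x')$ whenever $\bm x$ and $\bm x'$ agree on $A^*$. Hence $\alpha_t^{\mathrm{UCB}}$ is invariant to changing any irrelevant coordinate. In particular, for any $S\subseteq A(\bm x_t^*)$, resetting the coordinates of $S_{\mathrm{irrel}}=A(\bm x_t^*)\setminus A^*$ (and possibly more irrelevant ones) leaves $\alpha_t$ unchanged. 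So $\Delta_t(R_{S_{\mathrm{irrel}}}(\bm x_t^*))=0$, and each intermediate single-coordinate reset of an irrelevant coordinate also has gap exactly $0$ relative to the current iterate.

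\textbf{Claim~1 (sparsity recovery).} I would analyze the greedy loop. In case (a), $\rho_t=0$, so the threshold is $\tau_t=0$. Every irrelevant reset has gap $0\le\tau_t$ and is therefore feasible. Any feasible step must have gap exactly $0$. The loop stops only when no feasible reset exists, so it cannot stop while an irrelevant coordinate remains active. Therefore $A(\tilde{\bm x}_t)\subseteq A^*$.

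In case (b), irrelevant resets again have gap $0\le \tau_t$, so they remain feasible at every stage and the loop cannot terminate while one is active. The remaining point is that the loop must never reset a relevant coordinate. Resetting a relevant coordinate is not actually harmful to Claim~1, since Claim~1 only requires $A(\tilde{\bm x}_t)\subseteq A^*$. Even so, I would use Assumption~\ref{assum:relevant_gap} together with invariance to show that any relevant reset, from any iterate whose irrelevant part may or may not be pruned, costs the same as $\Delta_t(R_{S_{\mathrm{irrel}}\cup\{j\}}(\bm x_t^*))>\tau_t$. This makes it infeasible. Consequently $A(\tilde{\bm x}_t)=A(\bm x_t^*)\cap A^*$, and the $\ell_0$ bound $|A^*|$ follows immediately.

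The main obstacle is the step after the first relevant reset. The assumption as stated controls only single relevant resets on top of $S_{\mathrm{irrel}}$. The invariance argument handles the first relevant reset; since it is then infeasible, no later state contains a relevant reset, so the induction closes.

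\textbf{Claim~2 (no-cost pruning).} From the above, in both cases $\tilde{\bm x}_t$ agrees with $\bm x_t^*$ on $A^*$, so invariance gives $\alpha_t(\tilde{\bm x}_t)=\alpha_t^*$ and hence $\Delta_t(\tilde{\bm x}_t)=0$. Therefore $\tilde{\bm x}_t$ is itself an exact UCB maximizer. The standard GP-UCB analysis of \citet{srinivas} then applies verbatim: the instantaneous regret bound $r_t\le 2\beta_t\sigma_{t-1}(\tilde{\bm x}_t)$ uses only the maximizing property. Equivalently, apply Theorem~\ref{thm:klc-specialization} with effective $\tilde\eta_t=1$, so the accumulated inaccuracy is $M_T=0$ and the $\sum_t\rho_t$ penalty drops out.

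\textbf{Claim~3 (global $\epsilon$-objective).} This follows from $f^\star-f(\hat{\bm x}_T)\le R_T/T$. Define $\epsilon_T$ as the ratio in the statement, assuming $f^\star>f(\bm x^{\mathrm{def}})$. Then feasibility for the $\epsilon_T$-constrained problem holds by rearrangement, and Claim~1 gives the $\ell_0$ bound, with $\epsilon_T\to0$ by sublinearity of $\gamma_T$. For the minimality remark, any $\bm x$ can be replaced by $P_{A^*}(\bm x)$ without changing $f$. This uses $f\in\mathcal H_k$, whose functions are likewise invariant in irrelevant coordinates. So optimal-$\ell_0$ solutions may be taken supported in $A^*$, which gives the stated comparison.
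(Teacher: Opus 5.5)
Your proposal is correct, and its overall structure matches the paper's. Irrelevant resets cost nothing, and Assumption~\ref{assum:relevant_gap} blocks relevant resets; together these give exact recovery (Theorem~\ref{thm:recovery_ard}). Hence $\Delta_t(\tilde{\bm x}_t)=0$, there is no accumulated inaccuracy, and Claim~3 follows from $f^\star-f(\hat{\bm x}_T)\le R_T/T$.

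The difference is the key lemma. The paper gets flatness from the derivative bound $|\partial\alpha_t/\partial x_j|\le C_t/\ell_j$ of Lemma~\ref{lemma:coord_lip_main}, evaluated at $\ell_j=\infty$. You instead note directly that an infinite lengthscale makes $k$, and hence $\mu_{t-1}$, $\sigma_{t-1}$ and $\alpha_t$, exactly invariant to irrelevant coordinates.
\begin{itemize}
\item The paper's bound is quantitative, and it also drives the finite-lengthscale budget picture $\sum_{j\in S}C_tR_{t,j}/\ell_j$ in Appendix~\ref{appdx:sparsity}.
\item Your invariance argument needs no smoothness, and it covers discrete coordinates, which the paper treats in a separate remark.
\item More importantly, invariance lets you transfer Assumption~\ref{assum:relevant_gap} to every intermediate greedy state $R_{S\cup\{j\}}(\bm x_t^*)$ with $S\subseteq S_{\mathrm{irrel}}$. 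The assumption itself only constrains a relevant reset made on top of all of $S_{\mathrm{irrel}}$. Your induction over the greedy path therefore makes rigorous a step the paper dismisses with ``prune them in any order.''
\item You also handle case (a) on its own, which Theorem~\ref{thm:recovery_ard} does not cover, since it presupposes Assumption~\ref{assum:relevant_gap}.
\end{itemize}

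Two small corrections.
\begin{itemize}
\item In case (a), Claim~2 should not rest on $\tilde{\bm x}_t$ agreeing with $\bm x_t^*$ on $A^*$. When $\tau_t=0$, a relevant coordinate whose reset leaves $\alpha_t$ exactly unchanged can be pruned. However, $\Delta_t(\tilde{\bm x}_t)=0$ follows anyway from $0\le\Delta_t(\tilde{\bm x}_t)\le\rho_t\tilde\alpha_t(\bm x_t^*)=0$.
\item Your two regret routes are not literally equivalent. Setting $M_T=0$ in Theorem~\ref{thm:klc-specialization} gives only $O(\gamma_T\sqrt{T})$. It is your direct observation that $\tilde{\bm x}_t$ is an exact UCB maximizer that lets the standard GP-UCB analysis apply verbatim. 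That route yields whatever rate GP-UCB enjoys in the assumed setting, e.g.\ the $\tilde{\mathcal O}(\sqrt{T\gamma_T})$ quoted in Claim~2 under the Bayesian assumptions.
\end{itemize}
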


  \noindent In practice, lengthscales are finite but large enough for BONSAI to recover sparse structure on real problems (e.g., Optical Design uses $<50\%$ of its 146 parameters; Appendix~\ref{appdx:lengthscale_est}).

  % \subsection{Relative Thresholds}
  % \label{sec:thresholds-ui}

  % In practice, it is often easier to reason about thresholds in relative terms, for example by specifying a tolerance $\rho_t \in [0,1)$ such that
  % \[
  % \Delta_t(\tilde{\bm x}_t) \le \rho_t \,\alpha_t(\bm x_t^*).
  % \]
  % Under the GP-UCB assumptions, the acquisition values at the maximizers are uniformly bounded as
  % \[
  % \alpha_t(\bm x_t^*) \le C_t := \kappa B + 2\kappa\sqrt{\beta_t}
  % \quad\text{for all }t,
  % \]
  % see Lemma~\ref{lemma:alpha_bound} in Appendix~\ref{appdx:theory}. Thus any relative rule implies an absolute gap budget with
  % \[
  % \tau_t = \rho_t C_t
  % \quad\Rightarrow\quad
  % \Delta_t(\tilde{\bm x}_t) \le \tau_t,
  % \]
  % and Theorem~\ref{thm:abs_regret} applies directly with these $\tau_t$. For the rest of the paper we therefore use the absolute formulation in all statements and proofs, and view relative thresholds as a convenient reparameterization for practitioners. An explicit regret bound written in terms of $(\rho_t)$ is given in Corollary~\ref{thm:rel_regret} in Appendix~\ref{appdx:theory}.
  % \sd{We should discuss how we always use incremental AFs, which makes setting rho easier.}

  %%%%%%%%%%%%%%%%%%%%%%%%%%%%%%%%%%%

  %%%%%%%%%%%%%%%%%%%%%%%%%%%%%%%%%%%
  \subsection{Extensions}
  \textbf{Batch candidate generation.}
  In settings where multiple points are evaluated in parallel at each BO iteration, one typically maximizes a batch acquisition function $\alpha : \mathbb X^q \to \mathbb R$ over tuples $X = (\bm x_1,\dots,\bm x_q)$. BONSAI can be extended to this setting by applying the pruning procedure to each point in the batch while conditioning on the previously selected pruned batch entries. In the batch setting, we use the incremental acquisition function described in Section~\ref{sec:af_gaps_and_thresholded_pruning} when pruning the first point. For pruning subsequent points, we use the incremental improvement over the previously selected and pruned points in the batch.

  \textbf{Acquisition functions.}
  Our theoretical results are specific to the GP-UCB acquisition.
  BONSAI itself, however, is purely a post-processing step that only requires the ability to evaluate an acquisition function at candidate points. In particular, it can be used with EI and its variants, which we find to be more robust in some of our empirical studies. In those cases, the regret guarantees in \cref{sec:theory} should be viewed as a guide rather than a formal guarantee: they capture what happens if one could construct an acquisition function with analogous confidence properties and bounded gaps.

  %%%%%%%%%%%%%%%%%%%%%%%%%%%%%%%%%%%%%%%%%%%%%%%%%%%%%%%%%%%%%%%%%%%%%%
  \section{Experiments}
  \label{sec:experiments}
  %We evaluate BONSAI on a variety of synthetic and real-world benchmark problems.
  %We compare against the following baselines: quasi-random Sobol sampling, vanilla BO, and two methods for sparse BO, IR and ER \citep{sebo}.
  We evaluate BONSAI on a variety of synthetic and real-world benchmark problems. We compare against the following baselines: quasi-random Sobol sampling, standard BO (EI), and three methods for sparse BO from \citet{sebo}: IR, ER, and SEBO. For IR/ER, we use an $\ell_0$ penalty coefficient of $0.01$, which is reported in SEBO as a strong default choice.\footnote{IR and ER were proposed in \citet{sebo} for single-objective optimization, but we extend both to constrained optimization, and we extend ER to multi-objective optimization. See Appendix~\ref{appdx:ir_er_exts} for further discussion.} See Appendix~\ref{appdx:ir_er_sens} for a sensitivity analysis of IR and ER. All methods use the same MAP-SAAS GP surrogate: a GP with a Mat\'ern-5/2 kernel with a SAAS prior \citep{saasbo}, where the GP hyperparameters are fit with a novel sampling and MAP estimation approach to speed up model fitting (see Appendix~\ref{appdx:map_saas} for details).

  We initialize each method by evaluating the default point $\bm x^\text{def}$ and 20 Sobol samples. We leverage qLogNEI and qLogNEHVI \citep{logei} as the EI-based acquisition functions for all methods for single- and multi-objective problems, respectively. For BONSAI with EI, we use a fixed relative threshold $\rho=0.2$---matching the configuration whose sensitivity is analyzed in Appendix~\ref{appdx:bonsai_rho_sens}---while for BONSAI with UCB we use the theoretical schedule $\rho_t=1/t$ from Corollary~\ref{cor:schedules}. To apply UCB to multi-objective problems, we use random hypervolume scalarizations \citep{pmlr-v119-zhang20i}, which scalarize the multi-objective acquisition into a single-objective UCB at each iteration. Appendix~\ref{appdx:bonsai_ucb} provides a head-to-head comparison and shows EI and UCB perform comparably on most problems. All methods are implemented using Ax~\citep{olson2025ax} with components from BoTorch~\citep{balandat2020botorch}; BONSAI code is available at \url{https://ax.dev/}.

  % We briefly outline the main empirical questions that our experiments are designed to address and the evaluation protocol we use. Detailed experimental results, figures, and tables follow the structure described in this section.

  % \paragraph{Experimental questions.}
  % Our experiments are organized around the following questions:
  % \begin{enumerate}
  %     \item Does BONSAI significantly reduce the number of parameters that differ from the default configuration in the recommendations produced by BO?
  %     \item How does BONSAI affect optimization performance compared to standard BO, as measured by regret or best observed objective value?
  %     \item How sensitive is BONSAI to the choice of thresholds and to the choice of acquisition function (UCB versus EI)?
  %     \item How close is the sequential greedy pruning algorithm in \cref{alg:bonsai} to the exact combinatorial solution in low-dimensional problems where exact enumeration is feasible?
  %     \item How does BONSAI affect candidate generation time compared to alternatives?
  % \end{enumerate}
  \begin{figure*}[!ht]
    \centering
    \includegraphics[width=\linewidth]{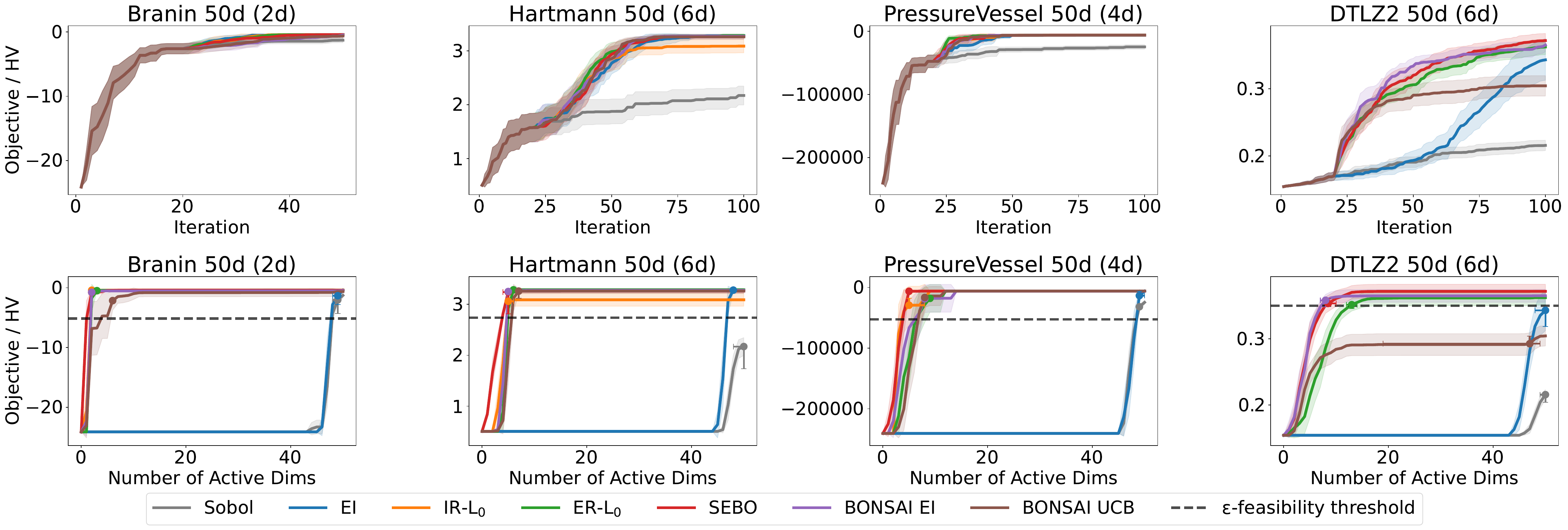}
    \caption{Top row: Objective or HV. Bottom row: Best Objective (or HV) value for each level of active dimensions. For MOO, the HV at sparsity level $k$ is the hypervolume of the feasible Pareto frontier computed over all evaluated points with at most $k$ active dimensions.}
    \label{fig:synthetic}
  \end{figure*}
  \begin{figure*}[!ht]
    \centering
    \includegraphics[width=\linewidth]{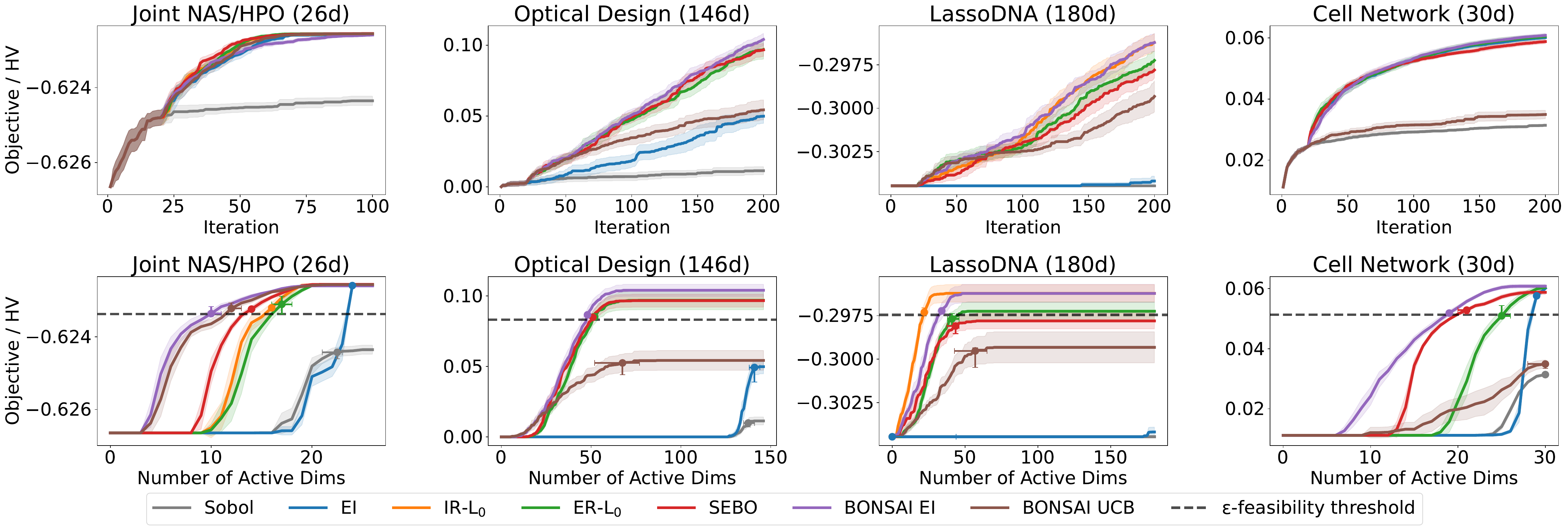}
    \caption{Top row: objective or HV. Bottom row: Best Objective (or HV) value for each level of active dimensions. For MOO, the HV at sparsity level $k$ is the hypervolume of the feasible Pareto frontier computed over all evaluated points with at most $k$ active dimensions.}
    \label{fig:real}
  \end{figure*}
  \begin{figure*}[!ht]
    \centering
    \includegraphics[width=\linewidth]{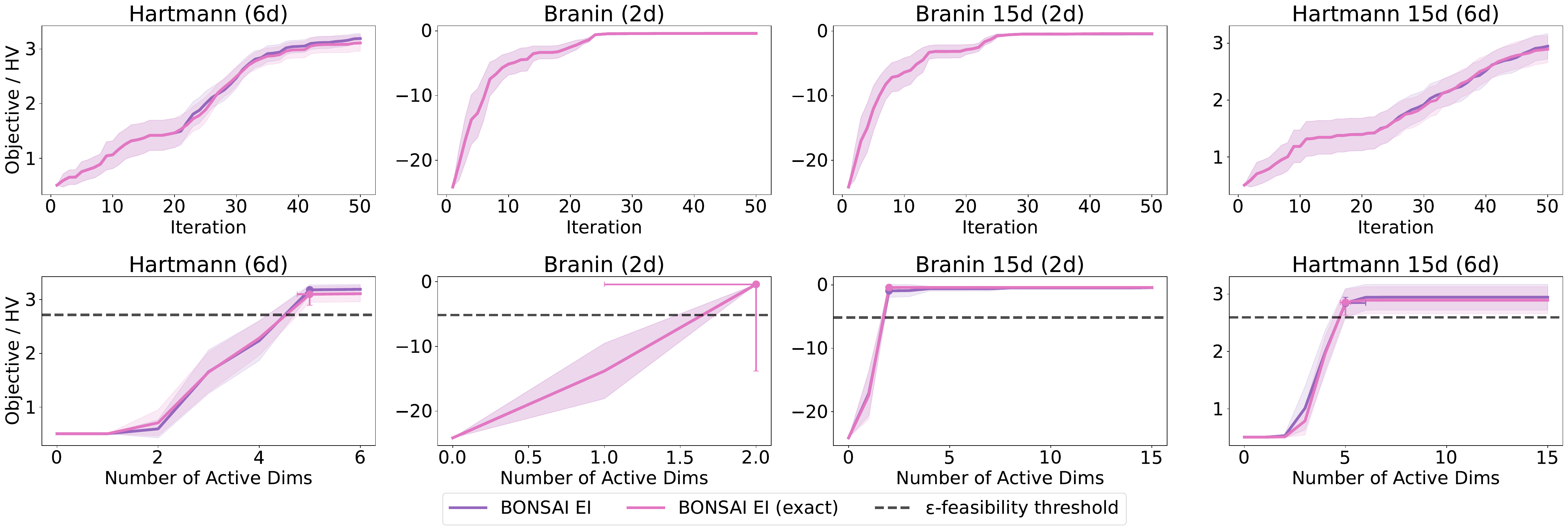}
    \caption{All methods (Sobol, Vanilla BO, IR, ER, SEBO, and BONSAI EI with sequential greedy and exact pruning) on low-dimensional problems where exact pruning is computationally feasible (cost $O(2^{|A(\bm x_t^*)|})$). BONSAI's greedy and exact variants yield comparable optimization performance, and BONSAI is competitive with the default-aware baselines. On Branin (2d), the BONSAI greedy and exact traces overlap exactly (both pruning strategies coincide at $d=2$). Top row: Objective. Bottom row: Best Objective value for each level of active dimensions.}
    \label{fig:bonsai_exact_main}
  \end{figure*}
  \subsection{Benchmarks}
  Our test problems include single-objective, constrained, and multi-objective settings. All problems are maximization problems--we negate the objective for minimization problems. We report optimization performance (best observed feasible objective for constrained problems and hypervolume (HV) for multi-objective problems) and candidate generation time--acquisition optimization and pruning (for BONSAI)--in sequential ($q=1$) settings. See Appendix~\ref{appdx:batch_optimization} for results in batch ($q=5$) settings.

  In typical default-aware BO applications, the search space is centered around the default (status quo) configuration: practitioners explore perturbations from a known-good configuration. Our ranking system benchmarks (Appendix~\ref{appdx:additional_ranking_problems}) exemplify this directly---the defaults correspond to the control configuration in surrogates of real-world A/B tests, which is the production system status quo. LassoDNA also has a meaningful default (zero penalty). For synthetic benchmarks (e.g., Hartmann embedded in 50 dimensions), we set $\bm x^\text{def}$ to be the center of the search space, reflecting the common practical scenario where search bounds are defined symmetrically around the current operating point.

  \textbf{Synthetic.} Branin (2d), Hartmann (6d), PressureVessel (4d, 4 constraints)~\citep{pressure_vessel}, and DTLZ2 (6d, 2-objective)~\citep{dtlz}, each embedded in a 50d search space by adding irrelevant parameters.

  \textbf{Real-world.} (i) \emph{Joint NAS/HPO} (26d, mixed integer/continuous): a GP surrogate of a multi-task, multi-label neural network architecture/HPO problem at a large web service. (ii) \emph{Optical Design} (146d, 2-objective)~\citep{morbo}: surface morphology and geometry of an AR see-through display, evaluated via a neural network surrogate of physical simulations. (iii) \emph{LassoDNA} (180d)~\citep{2022lassobench}: per-feature weighted-Lasso penalties on the DNA dataset~\citep{libsvm}. (iv) \emph{Cell Network} (30d, mixed, 2-objective)~\citep{dreifuerst2021}: per-antenna transmission power and downtilt, minimizing under- and over-coverage. See Appendix~\ref{appdx:empirical} for full descriptions and additional benchmarks (including noisy real-world ranking problems).

  %%%%%%%%%%%%%%%%%%%%%%%%%%%%%%%%%%%
  \subsection{Results}
  \label{subsec:experiments:results}

  We report optimization performance over the number of evaluations, the best objective found for a given number of active dimensions, and candidate generation time. For multi-objective problems, we report, for each sparsity level $k=0,\dots,D$, the hypervolume of the feasible Pareto frontier formed by all evaluated points with at most $k$ active dimensions (i.e., $\|\bm x - \bm x^{\mathrm{def}}\|_0 \le k$). We report the mean and $\pm$ 2 standard errors across 20 replications.
  In the bottom row of Figures~\ref{fig:synthetic} and~\ref{fig:real}, we overlay a marker per method summarizing each replication's \emph{minimal-intervention solution}: the sparsest explored configuration satisfying the relative $\epsilon$-constraint of Section~\ref{sec:problem_setting} at $\epsilon = 0.2$ (i.e., within 20\% of the default-to-optimum improvement); for multi-objective problems, an analogous hypervolume-based criterion is used (defined in Appendix~\ref{appdx:empirical}). Since these per-replication points are bounded and skewed, each is first snapped to the closest point on the method's mean PF, and we plot the median across replications with 25th--75th percentile error bars in both sparsity and objective. These markers operationalize the global $\epsilon$-constrained objective from Section~\ref{sec:bonsai} for a concrete choice of $\epsilon$.

  Across all benchmarks, we find that BONSAI yields consistently favorable sparsity–performance trade-offs: it produces substantially simpler recommendations while maintaining competitive performance with respect to the objective (or hypervolume), and it is typically the fastest default-aware method with respect to generation time. The minimal-intervention markers make this concrete: BONSAI's median solution sits consistently to the lower-left of the alternatives (fewer active dimensions while still $\epsilon$-feasible), with IQR bars confirming the gap is robust across replications. SEBO produces sparse solutions but at significantly higher wall-time cost than BONSAI, with no consistent advantage in sparsity--quality trade-off and substantially worse performance on mixed-discrete problems.
  On synthetic benchmarks in Figure~\ref{fig:synthetic}, BONSAI recovers near-optimal solutions at or close to the true number of relevant parameters; standard BO achieves strong objective values via dense changes, while BONSAI yields better objective/HV at comparable sparsity. IR and ER also find near-optimal sparse solutions, but IR fails on Hartmann 50d, and BONSAI outperforms ER on DTLZ2 50d.
  % On synthetic benchmarks in Figure~\ref{fig:synthetic}, we find that BONSAI recovers optimal solutions with very few active parameters--at or close to the true number of relevant parameters--whereas vanilla BO does not find near-optimal sparse solutions. IR and ER also find near-optimal sparse solutions, but IR fails to find the optimal solution on the Hartmann 50d problem. IR and ER are focused on the single objective case \citep{sebo}, not multi-objective. We extend IR and ER to the constrained setting--see Appendix~\ref{appdx:ir_er_exts} for further discussion.

  On the real problems in Figure~\ref{fig:real}, results are similar. The real problems have significant numbers of irrelevant parameters: e.g., on Optical Design, BONSAI finds excellent Pareto frontiers using an average of fewer than 60 active dimensions ($<50\%$ of the parameters). On DTLZ2, Optical Design, and LassoDNA, BONSAI outperforms standard BO, which we attribute to its sparser candidates improving lengthscale estimation (Figure~\ref{fig:lengthscale_est} in Appendix~\ref{appdx:lengthscale_est}). BONSAI's candidate generation time is on-par with standard BO, while IR/ER/SEBO are substantially slower (Table~\ref{table:gen_times_bonsai_map_saas_problems_combined}; see Appendix~\ref{appdx:empirical} for additional notes on per-problem wall-time anomalies, omitted batch runs, and the active-dimension tolerance used for continuous-relaxation baselines). On lower-dimensional benchmarks where exact pruning is feasible, all methods are evaluated and BONSAI's optimization performance is comparable to Vanilla BO with EI and to the default-aware baselines (Figure~\ref{fig:bonsai_exact_main}; full results in Appendix~\ref{appdx:bonsai_exact}).

  \textbf{Greedy versus exact pruning.}
  Greedy may in principle miss joint-reset configurations, but this is mitigated both theoretically (Proposition~\ref{prop:regret_plus_sparsity}) and empirically: on every low-dimensional problem where exact $O(2^{|A(\bm x_t^*)|})$ enumeration is feasible, greedy and exact pruning yield comparable performance and return the same first candidate 85.7\% of the time (Appendix~\ref{appdx:bonsai_exact}). Exact pruning is infeasible at the scales of our main experiments.
  % We measure how often the greedy solution matches the exact minimum-$\ell_0$ solution under the gap constraints and how large the discrepancy is when they differ. This comparison provides empirical evidence on how often the simple greedy path coincides with, or closely approximates, the ideal two-stage solution. \sd{TODO: add figure for this.}

  %%%%%%%%%%%%%%%%%%%%%%%%%%%%%%%%%%%%%%%%%%%%%%%%%%%%%%%%%%%%%%%%%%%%%%
  \section{Discussion}
  \label{sec:discussion}

  BONSAI is a lightweight, default-aware BO policy that simplifies recommendations by minimizing deviations from a default subject to an acquisition-gap constraint, yielding solutions that are easier to inspect, test, and deploy than unconstrained BO candidates. For GP-UCB it can be viewed as inexact acquisition maximization \citep{pmlr-v286-kim25b}, with cumulative regret matching standard GP-UCB up to an additive penalty whenever accumulated inaccuracy grows sublinearly. BONSAI measures simplicity via coordinate-wise $\ell_0$ distance, natural for independently actionable knobs; structured or grouped sparsity is a promising extension. Other directions include finite-sample sparsity recovery under imperfect lengthscale estimation, extending the regret analysis to EI, and combining BONSAI with post-hoc explanation methods (Appendix~\ref{appdx:future_work}).

  % BONSAI provides a simple mechanism for making BO recommendations more conservative and interpretable in settings where there is a meaningful default configuration. By explicitly minimizing the number of deviations from the default subject to a cap on the acquisition gap, BONSAI encourages solutions that are easier to inspect, test, and deploy than unconstrained BO suggestions, and helps avoid issues caused by detrimental effects on the system that are not fully captured by the optimization objective.

  % Our theoretical analysis shows that, at least for GP-UCB, BONSAI can be understood as one instance of inexact acquisition maximization in the sense of \citet{pmlr-v286-kim25b}: as long as the multiplicative acquisition accuracy at each round is bounded away from zero and the accumulated inaccuracy $M_T$ grows sublinearly, the cumulative regret remains within a standard GP-UCB term plus an additive penalty determined by $M_T$. %Using the relative rule, this condition is enforced by the schedule $(\rho_t)$: each $\rho_t$ upper-bounds the per-round degradation and contributes additively to $M_T$, so tightening $\rho_t$ over time (e.g., $\rho_t\propto 1/t$) ensures that BONSAI remains asymptotically optimal while still pruning aggressively in early rounds.

  %%%%%%%%%%%%%%%%%%%%%%%%%%%%%%%%%%%%%%%%%%%%%%%%%%%%%%%%%%%%%%%%%%%%%%
  \section*{Impact Statement}

  This work aims to improve the usability and interpretability of Bayesian optimization in applications where there is a meaningful default configuration and where changing many parameters at once can be risky or costly from an operational perspective. By encouraging recommendations that change as few input parameters as possible relative to the default, BONSAI may help practitioners deploy BO in operationally constrained settings, such as large-scale systems configuration or model serving infrastructure.

  At the same time, BONSAI is not a substitute for careful specification of objectives and constraints: if important aspects of system behavior are omitted from the optimization objective, BO---with or without BONSAI---may still propose configurations that are undesirable in practice. We therefore view BONSAI as a complementary tool that can make it easier for practitioners to inspect and reason about BO suggestions, but it should be used in conjunction with domain expertise, monitoring, and appropriate validation checks.

  \bibliography{icml_2026}
  \bibliographystyle{plainnat}

  %%%%%%%%%%%%%%%%%%%%%%%%%%%%%%%%%%%%%%%%%%%%%%%%%%%%%%%%%%%%%%%%%%%%%%%%%%%%%%%
  %%%%%%%%%%%%%%%%%%%%%%%%%%%%%%%%%%%%%%%%%%%%%%%%%%%%%%%%%%%%%%%%%%%%%%%%%%%%%%%
  % APPENDIX
  %%%%%%%%%%%%%%%%%%%%%%%%%%%%%%%%%%%%%%%%%%%%%%%%%%%%%%%%%%%%%%%%%%%%%%%%%%%%%%%
  %%%%%%%%%%%%%%%%%%%%%%%%%%%%%%%%%%%%%%%%%%%%%%%%%%%%%%%%%%%%%%%%%%%%%%%%%%%%%%%
  \newpage
  \appendix
  \section{Proofs and Additional Theoretical Results}
  \label{appdx:theory}
  \subsection{Proofs}
  \label{appdx:proofs}
  \begin{restatable}[Relative rule implies accuracy lower bound]{lemma}{rhotoeta}
      \label{lem:rho-to-eta}
      Using BONSAI at round $t$, the acquisition accuracy satisfies $\eta_t \ge 1-\rho_t.$ Consequently we may take $\tilde\eta_t := 1-\rho_t$ in the definition of $M_T$ (assuming exact inner maximization), and the worst-case accumulated inaccuracy is bounded by $M_T \;\le\; \sum_{t=1}^T \rho_t$.
  % \begin{equation}
  % \label{eq:MT-rho}
  %   M_T \;\le\; \sum_{t=1}^T \rho_t.
  % \end{equation}
  \end{restatable}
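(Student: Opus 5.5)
The argument is a direct algebraic manipulation of the gap rule, using only nonnegativity of the acquisition and of the baseline. Write $\alpha_t^* = \alpha_t(\bm x_t^*)$ and let $b_t = \max_{s<t}\alpha_t(\bm x_s)$ be the baseline from Section~\ref{sec:af_gaps_and_thresholded_pruning}. The relative rule enforced by BONSAI reads
\[
\alpha_t^* - \alpha_t(\tilde{\bm x}_t) = \Delta_t(\tilde{\bm x}_t) \le \rho_t\,\tilde\alpha_t(\bm x_t^*) = \rho_t(\alpha_t^* - b_t).
\]
The rule holds by construction: either $\tilde{\bm x}_t = \bm x_t^*$ with zero gap, or every accepted greedy reset was checked against the threshold relative to the fixed $\bm x_t^*$.

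The key step is to replace the incremental budget by the raw one. Since $b_t \ge 0$ (nonnegativity of $\alpha_t$) and $\rho_t \ge 0$, we have $\rho_t(\alpha_t^* - b_t) \le \rho_t \alpha_t^*$. Rearranging gives
\[
\alpha_t(\tilde{\bm x}_t) \ge (1-\rho_t)\,\alpha_t^*.
\]
When $\alpha_t^* > 0$, dividing yields $\eta_t = \alpha_t(\tilde{\bm x}_t)/\alpha_t^* \ge 1-\rho_t$.

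The degenerate case $\alpha_t^* = 0$ needs a convention. Nonnegativity then forces $\alpha_t \equiv 0$, so $\alpha_t(\tilde{\bm x}_t) = 0$ and the loss is zero. We adopt $\eta_t := 1$ there, consistent with $\eta_t \in [0,1]$. The non-negativity assumption is justified because shifting $\alpha_t$ by a constant does not change the maximizer. I expect the main subtlety to lie in this bookkeeping: the threshold is defined with the current-round baseline $b_t$, while $\eta_t$ is defined on raw values, and the comparison needs only $b_t \ge 0$. Under UCB, where $\mu_{t-1}$ may be negative, nonnegativity must come from the shift convention.

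Finally, $1-\rho_t$ is deterministic, since $\rho_t$ is a user-specified schedule. Assuming exact inner maximization, it is therefore a valid choice of $\tilde\eta_t$. Summing $1-\tilde\eta_t = \rho_t$ over $t = 1,\dots,T$ gives $M_T = \sum_{t=1}^T \rho_t$, in particular $M_T \le \sum_{t=1}^T \rho_t$.
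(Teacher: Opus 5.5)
Your proof is correct and follows the paper's argument essentially line for line: use $b_t \ge 0$ and $\rho_t \ge 0$ to weaken the rule to $\Delta_t(\tilde{\bm x}_t) \le \rho_t\alpha_t^*$, rearrange to $\alpha_t(\tilde{\bm x}_t)\ge(1-\rho_t)\alpha_t^*$, divide by $\alpha_t^*$, and sum the deterministic bounds $1-\tilde\eta_t=\rho_t$ over $t$. Your explicit handling of the degenerate case $\alpha_t^*=0$ is a small extra bit of care; the paper sidesteps that case by asserting $\alpha_t^*>0$ under its non-negativity convention.
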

  Following \citet{pmlr-v286-kim25b}, we assume non-negative acquisition functions. This is without loss of generality: any acquisition function can be shifted by a constant to be non-negative without changing the maximizer. Under this convention, $b_t = \max_{s<t}\alpha_t(\bm x_s) \ge 0$ and $\alpha_t^* > 0$.
  \begin{proof}
  By definition,
  \[
    \Delta_t(\tilde{\bm x}_t)
    = \alpha_t(\bm x_t^{*}) - \alpha_t(\tilde{\bm x}_t)
    \le \rho_t \,\tilde\alpha_t(\bm x_t^{*}).
  \]
  Recalling $\tilde\alpha_t(\bm x_t^{*})=\alpha_t(\bm x_t^{*})-b_t$ with $b_t:=\max_{s<t}\alpha_t(\bm x_s)\ge 0$, we have
  \[
    \Delta_t(\tilde{\bm x}_t)
    = \alpha_t(\bm x_t^{*}) - \alpha_t(\tilde{\bm x}_t)
    \le \rho_t \alpha_t(\bm x_t^{*}),
  \]
  and rearranging gives
  \[
    \alpha_t(\tilde{\bm x}_t)
    \ge (1-\rho_t)\,\alpha_t(\bm x_t^{*}).
  \]
  Dividing by $\alpha_t(\bm x_t^{*})=\alpha_t^*>0$ yields $\eta_t\ge 1-\rho_t$, and thus $1-\tilde\eta_t \le \rho_t$ when we choose $\tilde\eta_t := 1-\rho_t$. Summing over $t$ proves the result.  %~\eqref{eq:MT-rho}.
  \end{proof}
  \paragraph{Inner optimization accuracy.}
  The argument above treats $\bm x_t^*$ as an exact maximizer of $\alpha_t$.
  If instead an inner solver returns $\hat{\bm x}_t$ with a guarantee $\alpha_t(\hat{\bm x}_t)\ge \eta_t^{\mathrm{solve}}\alpha_t^*$ for some $\eta_t^{\mathrm{solve}}\in(0,1]$, and BONSAI is applied so that $\alpha_t(\tilde{\bm x}_t)\ge (1-\rho_t)\alpha_t(\hat{\bm x}_t)$ on the same nonnegative acquisition scale, then the composite selection satisfies $\alpha_t(\tilde{\bm x}_t)/\alpha_t^*\ge (1-\rho_t)\eta_t^{\mathrm{solve}}$ and one may take $\tilde\eta_t=(1-\rho_t)\eta_t^{\mathrm{solve}}$ in the definition of $M_T$.

  \klcspecialization*
  \begin{proof}
  This is a direct specialization of Theorem~3 of \citet{pmlr-v286-kim25b} with $\rho_t = 1-\eta_t$. Using the same exploration schedule $\beta_t$, yields~\eqref{eq:bonsai-klc}.
  \end{proof}
  \bonsaiklc*
  % By Lemma~\ref{lem:rho-to-eta}, we can take $\tilde\eta_t=1-\rho_t$, so $M_T \le \sum_{t=1}^T \rho_t$. Applying Theorem~\ref{thm:klc-specialization} yields
  % \[
  %   R_T = O\bigl(\gamma_T\sqrt{T} + M_T\sqrt{\gamma_T}\bigr)
  %       = O\Bigl(\gamma_T\sqrt{T} + \sqrt{\gamma_T}\,\sum_{t=1}^T \rho_t\Bigr).
  % \]
  \begin{proof}
  The asymptotic statement follows by comparing $\sum_{t=1}^T \rho_t$ to $\sqrt{T}$ and using the known growth of $\gamma_T$ for the kernels considered.
  \end{proof}
  \begin{restatable}[Example schedules]{corollary}{bonsaischedules}
  \label{cor:schedules}
  Assume $\gamma_T = O(\log^{d+1} T)$ (SE kernel) or $\gamma_T = O\bigl(T^{d/(d+2\nu)}\log^{2\nu/(2\nu+d)} (T)\bigr)$ (Matérn kernel with smoothness $\nu>\frac{1}{2}$), as in \citet{pmlr-v130-vakili21a}. Let
  $\rho_t = \frac{c}{t^{1+\epsilon}}, $ where $c>0$ and $\epsilon>0.$
  Then $\sum_{t=1}^\infty \rho_t < \infty$ and hence $M_T = O(1)$, so
$R_T = O\bigl(\gamma_T\sqrt{T}\bigr).$
  More generally, if $\rho_t = c/t$ then $\sum_{t=1}^T \rho_t = O(\log T)$ and
  \[
    R_T = O\Bigl(\gamma_T\sqrt{T} + \sqrt{\gamma_T}\log T\Bigr),
  \]
  which preserves the standard GP-UCB rate up to logarithmic factors. In both cases BONSAI’s relative rule is gradually tightened over time (through decreasing $\rho_t$), which controls the growth of $M_T$ and guarantees asymptotic optimality.
  \end{restatable}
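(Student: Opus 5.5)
The plan is to combine Lemma~\ref{lem:rho-to-eta} and Theorem~\ref{thm:klc-specialization} with elementary tail estimates for the two schedules. Everything here is a direct consequence of earlier results. The only work is bounding $\sum_{t\le T}\rho_t$ and comparing the resulting penalty term with the leading $\gamma_T\sqrt T$ term under the stated growth rates of $\gamma_T$.

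\emph{Summable schedule.} For $\rho_t = c/t^{1+\epsilon}$, I would first note that $\rho_t\in[0,1)$ is needed for the relative rule. If $c\ge 1$, the finitely many early rounds with $\rho_t\ge 1$ can be handled by capping, $\rho_t\leftarrow\min(\rho_t,1-\delta')$. Alternatively, since $1-\eta_t\le 1$ always, each such round contributes at most $1$ to $M_T$, and there are finitely many of them. Next, I would bound the sum by an integral comparison:
\[
\sum_{t=1}^\infty c\,t^{-1-\epsilon}\le c\Bigl(1+\int_1^\infty s^{-1-\epsilon}\,ds\Bigr)=c(1+1/\epsilon)<\infty.
\]
By Lemma~\ref{lem:rho-to-eta}, this gives $M_T=O(1)$. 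Plugging into \eqref{eq:bonsai-klc} yields $R_T=O(\gamma_T\sqrt T+\sqrt{\gamma_T})$. Since $\gamma_T\ge \gamma_1>0$ is nondecreasing, we have $\sqrt{\gamma_T}\le \gamma_T/\sqrt{\gamma_1}\le C\gamma_T\sqrt T$. The second term is therefore absorbed, giving $R_T=O(\gamma_T\sqrt T)$.

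\emph{Harmonic schedule.} For $\rho_t=c/t$, the same integral comparison gives $\sum_{t\le T}\rho_t\le c(1+\log T)=O(\log T)$. Theorem~\ref{thm:klc-specialization} then directly gives $R_T=O(\gamma_T\sqrt T+\sqrt{\gamma_T}\log T)$. To justify ``preserves the rate up to logarithmic factors'' (indeed, to show the extra term is lower order), I would compare $\sqrt{\gamma_T}\log T$ with $\gamma_T\sqrt T$. Their ratio is $\log T/(\sqrt{\gamma_T}\sqrt T)\le \log T/(\sqrt{\gamma_1}\sqrt T)\to 0$. This holds for both the SE bound $\gamma_T=O(\log^{d+1}T)$ and the Matérn bound from \citet{pmlr-v130-vakili21a}; only $\gamma_T$ bounded below is needed. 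So the penalty is $o(\gamma_T\sqrt T)$. Finally, to conclude asymptotic optimality in both cases, I would check that $R_T/T\to 0$. This reduces to $\gamma_T/\sqrt T\to 0$, which holds for SE since it is polylogarithmic over $\sqrt T$. For Matérn, it holds when $d/(d+2\nu)<1/2$, i.e.\ $\nu>d/2$. This is the standard regime in which the GP-UCB bound is sublinear, and I would state it as the implicit condition inherited from Theorem~\ref{thm:klc-specialization}.

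The main obstacle is not the summation but bookkeeping at the edges: ensuring $\rho_t<1$ (so Lemma~\ref{lem:rho-to-eta} applies and $\tilde\eta_t\ge 0$), and correctly stating the sublinearity condition for Matérn kernels. The latter is where the informal ``guarantees asymptotic optimality'' needs the restriction on $\nu$ relative to $d$.
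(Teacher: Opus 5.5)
Your proposal is correct and follows the same route as the paper's proof. Both arguments bound $\sum_{t\le T}\rho_t$ (convergent for $c/t^{1+\epsilon}$, at most $c(1+\log T)$ for $c/t$) and plug that bound into \eqref{eq:bonsai-klc} via Lemma~\ref{lem:rho-to-eta}. Your extra bookkeeping is sound and tightens points the paper's one-line proof glosses over: capping $\rho_t<1$ when $c\ge 1$; using $\gamma_T\ge\gamma_1>0$ to absorb the penalty term, which shows even the $c/t$ schedule gives $R_T=O(\gamma_T\sqrt T)$; and noting that the asymptotic-optimality claim for Mat\'ern kernels genuinely needs $\nu>d/2$, not just the $\nu>\tfrac12$ assumed in the statement.
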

  \begin{proof}
  For $\rho_t = c/t^{1+\epsilon}$ with $\epsilon>0$, the series $\sum_{t=1}^\infty \rho_t$ converges, so $M_T = O(1)$. Plugging into~\eqref{eq:bonsai-klc} yields
  \[
    R_T = O\bigl(\gamma_T\sqrt{T} + \sqrt{\gamma_T}\bigr) = O\bigl(\gamma_T\sqrt{T}\bigr).
  \]
  For $\rho_t = c/t$, we have $\sum_{t=1}^T \rho_t = c(1+\log T)$, and~\eqref{eq:bonsai-klc} gives
  \[
    R_T = O\Bigl(\gamma_T\sqrt{T} + \sqrt{\gamma_T}\log T\Bigr).
  \]
  The growth of $\gamma_T$ for SE and Mat\'ern kernels then yields the stated asymptotics.
  \end{proof}
  \subsection{Thresholds and Acquisition Gaps: A Toy Additive Model}
  \label{appdx:sparsity}
  The regret bounds above characterize the optimization cost of allowing acquisition gaps, but they do not by themselves explain how the thresholds influence the \emph{sparsity} of BONSAI's recommendations. We now give a brief, concrete picture of this interaction for a broad class of ARD kernels that includes both SE-ARD and Matérn-ARD (e.g., Matérn-$5/2$). This subsection helps explain which coordinates BONSAI tends to prune under common ARD kernels.

  We assume an ARD kernel of the form
  \[
      k(\bm x, \bm x') = \sigma_f^2 \,\phi\!\left(
          r(\bm x, \bm x')
      \right),
      \enspace
      r(\bm x, \bm x') = \sqrt{\sum_{j=1}^d \frac{(x_j - x_j')^2}{\ell_j^2}},
  \]
  where $\ell_j > 0$ are per-component lengthscales and $\phi : [0,\infty) \to \mathbb R$ is a smooth radial profile such that $r \mapsto r\,\phi'(r)$ is bounded on $[0,\infty)$. This family includes the SE-ARD kernel (with $\phi(r) = \exp(-\tfrac{1}{2}r^2)$) and common Matérn-ARD kernels such as Matérn-$5/2$.

  Intuitively, large $\ell_j$ indicate directions along which the surrogate varies slowly, so we expect changes along those components to have a small effect on the acquisition. Under mild regularity conditions we can formalize this intuition by bounding the partial derivatives of the acquisition with respect to each coordinate.

  Fix a round $t$ and consider acquisition $\alpha_t$ and optimizer $\bm x_t^*$. Let $S_t^0 := \{ j : x_{t,j}^* \neq x_j^{\mathrm{def}}\}$ be the set of components where $\bm x_t^*$ differs from the default. For any set of components $S\subseteq S_t^0$, define
  \[
  R_S(\bm x_t^*) := P_{S_t^0 \setminus S}(\bm x_t^*),
  \]
  that is, $R_S(\bm x_t^*)$ is the configuration obtained from $\bm x_t^*$ by \emph{resetting} the components in $S$ to their default values and keeping all the others as in $\bm x_t^*$. Equivalently, $R_S(\bm x_t^*)$ corresponds to precisely pruning the components of $S$.

  Let $\Delta_t(\bm x) = \alpha_t(\bm x_t^*) - \alpha_t(\bm x)$ denote the acquisition gap at round $t$.

  \begin{assumption}[Per-component acquisition gaps, toy model]
  \label{assump:per_coord_gap}
  For each $j\in S_t^0$ there exists $\Delta_{t,j} \ge 0$ such that the acquisition gap when only component $j$ is reset satisfies
  \[
  \Delta_t\bigl(R_{\{j\}}(\bm x_t^*)\bigr) \;\ge\; \Delta_{t,j}.
  \]
  Moreover, we assume a simple superadditivity property of the gap function (i.e., the joint gap is at least the sum of individual gaps; equivalently, no positive interaction effects between coordinates):
  \[
  \Delta_t\bigl(R_S(\bm x_t^*)\bigr)
  = \alpha_t(\bm x_t^*) - \alpha_t\bigl(R_S(\bm x_t^*)\bigr)
  \;\ge\; \sum_{j\in S} \Delta_{t,j},
  \quad\forall S\subseteq S_t^0.
  \]
  \end{assumption}

  Assumption~\ref{assump:per_coord_gap} says that each component has an associated “individual gap’’ $\Delta_{t,j}$, and that the combined effect of resetting a set of components is at least the sum of their individual gaps. This rules out positive interaction effects in which resetting two components together could be cheaper than resetting them separately, and is best viewed as a simplified model for intuition rather than a realistic assumption in complex BO problems.

  Let $S_t^{\mathrm{prune}}\subseteq S_t^0$ denote the set of components that BONSAI resets to default at round $t$, so that $\tilde{\bm x}_t = R_{S_t^{\mathrm{prune}}}(\bm x_t^*)$.

  \begin{proposition}[Thresholds constrain how many high-impact components can be pruned]
  \label{prop:tau_sparsity}
  Suppose Assumption~\ref{assump:per_coord_gap} holds at round $t$, and that the BONSAI applies the relative gap rule so that $\Delta_t(\tilde{\bm x}_t) \le \rho_t \tilde\alpha_t(\bm x_t^*)$. Then
  \[
  \sum_{j\in S_t^{\mathrm{prune}}}\Delta_{t,j} \le \rho_t \tilde\alpha_t(\bm x_t^*).
  \]
  In particular, if $\Delta_{t,j} \ge \Delta_{\min}>0$ for all $j\in S_t^{\mathrm{prune}}$, then
  \[
  |S_t^{\mathrm{prune}}|\le \frac{\rho_t \tilde\alpha_t(\bm x_t^*)}{\Delta_{\min}}.
  \]
  \end{proposition}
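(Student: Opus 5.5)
The argument is a direct chaining of the superadditivity in Assumption~\ref{assump:per_coord_gap} with the gap rule that BONSAI enforces. First I identify the returned point with a reset configuration. By definition of $S_t^{\mathrm{prune}}$ we have $\tilde{\bm x}_t = R_{S_t^{\mathrm{prune}}}(\bm x_t^*)$ with $S_t^{\mathrm{prune}}\subseteq S_t^0$. Hence $\Delta_t(\tilde{\bm x}_t) = \Delta_t\bigl(R_{S_t^{\mathrm{prune}}}(\bm x_t^*)\bigr)$, and the superadditivity inequality of Assumption~\ref{assump:per_coord_gap} can be applied with $S = S_t^{\mathrm{prune}}$. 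This choice is admissible because the assumption quantifies over all $S\subseteq S_t^0$.

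Next I chain the two inequalities:
\[
\sum_{j\in S_t^{\mathrm{prune}}}\Delta_{t,j} \;\le\; \Delta_t\bigl(R_{S_t^{\mathrm{prune}}}(\bm x_t^*)\bigr) \;=\; \Delta_t(\tilde{\bm x}_t) \;\le\; \rho_t\,\tilde\alpha_t(\bm x_t^*).
\]
The last step is exactly the relative gap rule, which BONSAI satisfies by construction: the greedy procedure only accepts resets that keep the gap within the threshold. This chain gives the first claim.

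For the cardinality bound, I lower bound the left-hand side using $\Delta_{t,j}\ge\Delta_{\min}$ for every $j\in S_t^{\mathrm{prune}}$. This gives $|S_t^{\mathrm{prune}}|\,\Delta_{\min}\le \rho_t\tilde\alpha_t(\bm x_t^*)$. Since $\Delta_{\min}>0$, dividing by it yields the claim.

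I do not expect a genuine obstacle; the only care needed is bookkeeping. One point is confirming that $\tilde{\bm x}_t$ is indeed of the form $R_S(\bm x_t^*)$ for some $S\subseteq S_t^0$, which holds because BONSAI only resets coordinates of $\bm x_t^*$ to their default values. The other is the degenerate case $S_t^{\mathrm{prune}}=\emptyset$, where the sum is zero and both bounds hold trivially, using $\tilde\alpha_t(\bm x_t^*)\ge 0$ because $\bm x_t^*$ maximizes $\alpha_t$ and $b_t$ is a maximum of $\alpha_t$ over previously queried points.
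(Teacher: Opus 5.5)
Your proof is correct and matches the paper's argument essentially step for step: you write $\tilde{\bm x}_t = R_{S_t^{\mathrm{prune}}}(\bm x_t^*)$, apply the superadditivity of Assumption~\ref{assump:per_coord_gap} with $S = S_t^{\mathrm{prune}}$, chain this with the relative gap rule, and divide by $\Delta_{\min}>0$ to get the cardinality bound. Your separate treatment of $S_t^{\mathrm{prune}}=\emptyset$ is harmless but unnecessary, because the gap rule is itself a hypothesis and already gives $0 \le \rho_t\tilde\alpha_t(\bm x_t^*)$ in that case.
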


  \begin{proof}
  By construction, $\tilde{\bm x}_t$ is obtained from $\bm x_t^*$ by resetting the components in $S_t^{\mathrm{prune}}$ to their default values, that is, $\tilde{\bm x}_t = R_{S_t^{\mathrm{prune}}}(\bm x_t^*)$. Applying Assumption~\ref{assump:per_coord_gap} with $S = S_t^{\mathrm{prune}}$ yields
  \[
  \Delta_t(\tilde{\bm x}_t) = \Delta_t\bigl(R_{S_t^{\mathrm{prune}}}(\bm x_t^*)\bigr)
  \ge \sum_{j\in S_t^{\mathrm{prune}}}\Delta_{t,j}.
  \]
  The relative gap rule implies $\Delta_t(\tilde{\bm x}_t)\le \rho_t \tilde{\alpha}_t(\bm x_t^*)$. Letting $\tau_t := \rho_t \tilde{\alpha}_t(\bm x_t^*)$ denote the (per-round) absolute threshold, we obtain
  \[
  \sum_{j\in S_t^{\mathrm{prune}}}\Delta_{t,j} \le \tau_t.
  \]
  If each $\Delta_{t,j}\ge \Delta_{\min}$, then
  \[
  |S_t^{\mathrm{prune}}|\,\Delta_{\min}
  \le \sum_{j\in S_t^{\mathrm{prune}}}\Delta_{t,j}
  \le \rho_t \tilde{\alpha}_t(\bm x_t^*),
  \]
  which proves the claimed bound.
  \end{proof}

  \begin{lemma}[Coordinate sensitivity for ARD kernels]
  \label{lemma:coord_lip_main}
  Let $\alpha_t$ be the GP-UCB acquisition at round $t$ with an ARD kernel of the form
  \[
      k(\bm x, \bm x') = \sigma_f^2 \,\phi\!\left(
          \sqrt{\sum_{j=1}^d \frac{(x_j - x_j')^2}{\ell_j^2}}
      \right),
  \]
  on a compact domain $\mathbb X$. Suppose $\phi$ is differentiable with $\phi'(r)$ bounded on $[0,\infty)$ (so that, in particular, the partial derivatives $\bigl|\partial k(\bm x,\bm x')/\partial x_j\bigr|$ are bounded on $\mathbb X\times\mathbb X$ for every $j$). Then there exists a constant $C_t < \infty$ (depending on the data, kernel hyperparameters, and $\beta_t$, but not on $j$) such that for all $\bm x \in \mathbb X$ and all $j \in \{1,\dots,d\}$,
  \[
      \biggl|\frac{\partial \alpha_t(\bm x)}{\partial x_j}\biggr|
      \;\le\; \frac{C_t}{\ell_j}.
  \]
  \end{lemma}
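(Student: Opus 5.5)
Write the posterior in closed form. Let $K_{t-1}$ be the Gram matrix of the $t-1$ observed inputs and $\bm k_{t-1}(\bm x) = (k(\bm x,\bm x_1),\dots,k(\bm x,\bm x_{t-1}))^\top$. Then
\[
\mu_{t-1}(\bm x) = \bm k_{t-1}(\bm x)^\top \bm a_t, \qquad \bm a_t := (K_{t-1}+\sigma^2 I)^{-1}\bm y_{t-1},
\]
\[
\sigma^2_{t-1}(\bm x) = k(\bm x,\bm x) - \bm k_{t-1}(\bm x)^\top (K_{t-1}+\sigma^2 I)^{-1}\bm k_{t-1}(\bm x).
\]
Both the vector $\bm a_t$ and the matrix $(K_{t-1}+\sigma^2 I)^{-1}$ are fixed given the data. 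The plan is to differentiate each piece in $x_j$ and show that every $x_j$-derivative of the kernel carries a factor $1/\ell_j$, multiplied by something bounded uniformly in $\bm x$ and $j$.

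\textbf{Step 1 (kernel derivative).} For $\bm x\neq \bm x'$, the chain rule gives
\[
\frac{\partial k(\bm x,\bm x')}{\partial x_j} = \sigma_f^2\,\phi'(r)\,\frac{x_j-x_j'}{\ell_j^2\, r}.
\]
Since $|x_j-x_j'|/\ell_j \le r$, this yields
\[
\Bigl|\frac{\partial k(\bm x,\bm x')}{\partial x_j}\Bigr| \le \frac{\sigma_f^2 \sup_{r\ge 0}|\phi'(r)|}{\ell_j} =: \frac{L}{\ell_j}.
\]
At $r=0$ the derivative is taken as a limit (it equals $0$ when $\phi'(0)=0$); otherwise one notes that the bound still holds for one-sided derivatives. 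The constant $L$ does not depend on $j$.

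\textbf{Step 2 (mean).} We have $\partial_j\mu_{t-1}(\bm x) = \partial_j\bm k_{t-1}(\bm x)^\top \bm a_t$. Bound its absolute value by $\|\bm a_t\|_1\, L/\ell_j$.

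\textbf{Step 3 (standard deviation).} This is the main obstacle, because $\sigma_{t-1}$ is a square root and can vanish. The derivative of the variance is safe: $k(\bm x,\bm x)=\sigma_f^2\phi(0)$ is constant, so
\[
\partial_j\sigma^2_{t-1}(\bm x) = -2\,\partial_j\bm k_{t-1}(\bm x)^\top (K_{t-1}+\sigma^2I)^{-1}\bm k_{t-1}(\bm x),
\]
which is at most $2\sqrt{t-1}\,\|(K_{t-1}+\sigma^2 I)^{-1}\|\,\sqrt{t-1}\,\sigma_f^2|\phi|_\infty\, L/\ell_j$ in absolute value. Then $\partial_j\sigma_{t-1} = \partial_j\sigma^2_{t-1}/(2\sigma_{t-1})$.

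To control the $1/\sigma_{t-1}$ factor, use the noise: with $\sigma^2>0$, the posterior variance satisfies
\[
\sigma^2_{t-1}(\bm x) \ge \frac{\sigma^2\,k(\bm x,\bm x)}{k(\bm x,\bm x)+\sigma^2}\quad\text{or similar}.
\]
More simply, $\sigma_{t-1}$ is continuous and strictly positive on compact $\mathbb X$ when $\sigma^2>0$, so it has a positive minimum $s_t$; this gives a data-dependent constant. If a lower bound on $\sigma_{t-1}$ fails, I would fall back to a Lipschitz statement, $|\sigma_{t-1}(\bm x)-\sigma_{t-1}(\bm x')|\le$ (RKHS-feature distance), which still scales as $1/\ell_j$ along coordinate $j$. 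In that case the derivative is interpreted almost everywhere.

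\textbf{Step 4 (combine).} Set
\[
C_t = L\Bigl(\|\bm a_t\|_1 + \beta_t\,(t-1)\,\|(K_{t-1}+\sigma^2I)^{-1}\|\,\sigma_f^2|\phi|_\infty/s_t\Bigr).
\]
This constant depends only on the data, the hyperparameters and $\beta_t$, not on $j$, which yields $|\partial_j\alpha_t(\bm x)|\le C_t/\ell_j$.

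One subtlety to flag: $\|\bm a_t\|$ and $s_t$ themselves depend on all the lengthscales, including $\ell_j$. They are nevertheless independent of $j$ as an index, and the statement allows dependence on the kernel hyperparameters, so this is acceptable.
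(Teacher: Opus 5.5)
Your proof is correct and follows the same route as the paper's sketch. You bound $|\partial k/\partial x_j|\le \sigma_f^2\sup_r|\phi'(r)|/\ell_j$ via $|x_j-x_j'|/\ell_j\le r$, push this through the posterior mean (linear in $\bm k_{t-1}$) and variance (quadratic in $\bm k_{t-1}$) with data-dependent coefficients, and combine with $\beta_t$.

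Your Step~3 is more careful than the paper, which passes from the variance to $\sigma_{t-1}$ without mentioning the $1/(2\sigma_{t-1})$ factor. Note, though, that your displayed bound $\sigma^2k/(k+\sigma^2)$ only holds for a single observation. The general bound is $\sigma_{t-1}^2(\bm x)\ge \sigma^2 k(\bm x,\bm x)/(\lambda_{\max}(K_{t-1})+\sigma^2)\ge \sigma^2\sigma_f^2\phi(0)/\bigl((t-1)\sigma_f^2\phi(0)+\sigma^2\bigr)$. Combined with $\|(K_{t-1}+\sigma^2 I)^{-1}\|\le 1/\sigma^2$, it makes $C_t$ independent of all lengthscales, which removes the subtlety you flag at the end.
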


  \begin{proof}
  We sketch the argument. Differentiating the kernel with respect to the radial distance $r$ gives
  \[
  \frac{\partial k(\bm x,\bm x')}{\partial x_j}
  = \sigma_f^2\,\phi'(r)\,\frac{\partial r}{\partial x_j}, \qquad
  r = \sqrt{\sum_{m=1}^d \frac{(x_m - x_m')^2}{\ell_m^2}}.
  \]
  By the chain rule, for $r > 0$,
  \[
  \biggl|\frac{\partial r}{\partial x_j}\biggr|
  = \frac{1}{r}\,\biggl|\frac{(x_j - x_j')}{\ell_j^2}\biggr|
  \le \frac{1}{r}\cdot \frac{|x_j - x_j'|/\ell_j}{\ell_j} \le \frac{1}{\ell_j},
  \]
  using $|x_j-x_j'|/\ell_j \le r$. Hence
  \(
      \bigl|\partial k(\bm x,\bm x')/\partial x_j\bigr|
      = \sigma_f^2\,|\phi'(r)|\cdot|\partial r/\partial x_j|
      \le \sigma_f^2\,|\phi'(r)|/\ell_j.
  \)
  By the lemma's hypothesis, $\phi'$ is bounded on $[0,\infty)$, so the partial derivative of the kernel is uniformly bounded by $C'_k/\ell_j$ for some finite constant $C'_k$ independent of $j$. The bound extends continuously to $r=0$ since the right-hand side is finite.

  The GP posterior mean and variance are linear and quadratic forms in the kernel evaluations, respectively, with coefficients determined by the training data and the noise level. Differentiating those expressions with respect to $x_j$ and applying the bound on $\partial k/\partial x_j$ shows that both $\partial \mu_{t-1}(\bm x)/\partial x_j$ and $\partial \sigma_{t-1}(\bm x)/\partial x_j$ are bounded in magnitude by $C'_{t}/\ell_j$ for some constant $C'_t$ that depends on the data and hyperparameters but not on $j$. Since
  $
  \alpha_t(\bm x) = \mu_{t-1}(\bm x) + \beta_t\,\sigma_{t-1}(\bm x),
  $
  matching the GP-UCB definition in Section~\ref{sec:background}, we obtain the claimed bound with $C_t$ proportional to $C'_t(1+\beta_t)$.
  \end{proof}

  \begin{remark}[Mixed continuous--discrete domains]
  Lemma~\ref{lemma:coord_lip_main} assumes a continuous domain $\mathbb X \subset \mathbb R^d$ and uses the gradient $\partial \alpha_t/\partial x_j$ to bound coordinate sensitivity. Several of our experimental benchmarks (Joint NAS/HPO, Cell Network) include discrete coordinates, where this gradient bound does not apply directly. However, the BONSAI algorithm itself is gradient-free: it only evaluates the acquisition gap of single-coordinate resets and prunes whenever the gap is below the threshold (\cref{alg:bonsai}). The sparsity-recovery guarantee under perfect ARD lengthscale separation (\cref{thm:recovery_ard}) extends to discrete coordinates whenever the GP marginalizes irrelevant discrete coordinates exactly (i.e., the predictive distribution does not depend on those coordinates), in which case any reset of an irrelevant discrete coordinate produces zero acquisition gap. Lemma~\ref{lemma:coord_lip_main} should be read as motivating---in the continuous case---why ARD lengthscales control which coordinates BONSAI tends to prune; the algorithm and its empirical behavior do not depend on this gradient bound.
  \end{remark}

  % \subsection{Thresholds, Lengthscales, and Sparsity}
  % The regret bounds above characterize the optimization cost of allowing acquisition gaps, but they do not by themselves explain how the thresholds influence the \emph{sparsity} of BONSAI's recommendations. We now give a brief, concrete picture of this interaction for a broad class of ARD kernels that includes both SE-ARD and Matérn-ARD (e.g., Matérn-$5/2$). This subsection helps explain which coordinates BONSAI tends to prune under common ARD kernels.

  % We assume an ARD kernel of the form
  % \[
  %     k(\bm x, \bm x') = \sigma_f^2 \,\phi\!\left(
  %         r(\bm x, \bm x')
  %     \right),
  %     \enspace
  %     r(\bm x, \bm x') = \sqrt{\sum_{j=1}^d \frac{(x_j - x_j')^2}{\ell_j^2}},
  % \]
  % where $\ell_j > 0$ are per-component lengthscales and $\phi : [0,\infty) \to \mathbb R$ is a smooth radial profile such that $r \mapsto r\,\phi'(r)$ is bounded on $[0,\infty)$. This family includes the SE-ARD kernel (with $\phi(r) = \exp(-\tfrac{1}{2}r^2)$) and common Matérn-ARD kernels such as Matérn-$5/2$.

  % Intuitively, large $\ell_j$ indicate directions along which the surrogate varies slowly, so we expect changes along those components to have a small effect on the acquisition. Under mild regularity conditions we can formalize this intuition by bounding the partial derivatives of the acquisition with respect to each coordinate; a precise statement and proof are given in Lemma~\ref{lemma:coord_lip_main}. Let
  % \[
  %     R_{t,j} := \bigl|x^*_{t,j} - x^{\mathrm{def}}_j\bigr|
  % \]
  % denote the deviation of component $j$ from the default at round $t$.
  Define the per-coordinate deviation $R_{t,j} := |x_{t,j}^* - x_j^{\mathrm{def}}|$ at round $t$ (so $R_{t,j}=0$ for $j\notin A(\bm x_t^*)$). Lemma~\ref{lemma:coord_lip_main} then implies that changing component $j$ by $R_{t,j}$ can change the acquisition by at most
  $w_{t,j} := \frac{C_t R_{t,j}}{\ell_j},$
  for some finite constant $C_t$ that depends on the data, kernel hyperparameters, and $\beta_t$ but not on $j$.

  Now let $S \subseteq \{1,\dots,d\}$ be a set of components that we reset to their default values, and consider the candidate
  $\bm x' = R_{S}(\bm x_t^*),$
  obtained by starting from $\bm x_t^*$ and reverting the components in $S$ back to $\bm x^{\mathrm{def}}$. Resetting those components one at a time and summing the resulting changes in $\alpha_t$ yields the following bound on the acquisition gap:
  $$\Delta_t(\bm x') \;=\; \alpha_t(\bm x_t^*) - \alpha_t(\bm x')
      \;\le\; \sum_{j \in S} w_{t,j}
      \;=\; \sum_{j \in S} \frac{C_t R_{t,j}}{\ell_j}.$$

  Under the relative gap rule with threshold $\rho_t$, any subset $S$ satisfying
  $
      \sum_{j \in S} \frac{C_t R_{t,j}}{\ell_j} \le \rho_t\,\tilde\alpha_t(\bm x_t^{*})$
  can therefore be reset to default while still respecting the relative constraint $\Delta_t(\tilde{\bm x}) \le \rho_t\,\tilde\alpha_t(\bm x_t^{*})$. In other words, $\rho_t\,\tilde\alpha_t(\bm x_t^{*})$ acts as a per-round budget in the geometry induced by the ARD kernel: components with large lengthscales $\ell_j$ and small deviations $R_{t,j}$ incur little cost, while those with small $\ell_j$ and large $R_{t,j}$ can rapidly deplete the budget.

  \subsection{Sparsity Recovery Guarantees}
  \label{appdx:sparsity_recovery}

  While the regret bounds in \cref{sec:theory} quantify the optimization cost of allowing acquisition gaps, a fundamental question remains: under what conditions is BONSAI guaranteed to recover the correct sparse structure of the objective? We establish formal sparsity recovery guarantees for BONSAI under two distinct structural scenarios: (1) exact learning of ARD lengthscales, and (2) additive acquisition functions.

  Let $A_{\mathrm{true}} \subset \{1, \dots, d\}$ denote the true active set of relevant parameters, such that the objective function $f(\bm x)$ depends only on $\bm x_{A_{\mathrm{true}}}$. Let $I_{\mathrm{true}} = \{1, \dots, d\} \setminus A_{\mathrm{true}}$ denote the strictly irrelevant parameters. The ideal pruning algorithm should ensure that for the returned configuration $\tilde{\bm x}_t$, the active set satisfies $A(\tilde{\bm x}_t) \subseteq A_{\mathrm{true}}$, meaning all irrelevant parameters modified by the inner optimizer are successfully reverted to their defaults.

  \subsubsection{Scenario 1: Recovery via Perfect ARD Lengthscale Estimation}

  Our first result relies on the coordinate sensitivity of the GP surrogate with an ARD kernel. As established in \cref{lemma:coord_lip_main}, the sensitivity of the GP-UCB acquisition function to a coordinate $x_j$ is bounded by a term inversely proportional to its lengthscale $\ell_j$. If the GP accurately identifies irrelevant dimensions, BONSAI is guaranteed to prune them. Importantly, this scenario assumes only that the GP hyperparameters are known---the same standard assumption used in GP-UCB regret bounds \citep{srinivas, pmlr-v286-kim25b}---and does not require any additivity assumption on the acquisition function.

  \begin{assumption}[Perfect Lengthscale Separation]
  \label{assum:perfect_ard}
  The GP hyperparameter optimization successfully identifies the strictly irrelevant dimensions, such that for all $j \in I_{\mathrm{true}}$, the learned ARD lengthscale $\ell_j = \infty$. For all relevant dimensions $j \in A_{\mathrm{true}}$, the lengthscales are finite, $\ell_j < \infty$.
  \end{assumption}

  \begin{assumption}[Relevant Dimension Gap]
  \label{assum:relevant_gap}
  Let $S_{\mathrm{irrel}} = I_{\mathrm{true}} \cap A(\bm x_t^*)$ be the set of irrelevant parameters modified by the acquisition maximizer $\bm x_t^*$. The threshold $\tau_t = \rho_t \tilde{\alpha}_t(\bm x_t^*)$ is chosen to be strictly smaller than the acquisition gap of reverting any single relevant parameter $j \in A_{\mathrm{true}} \cap A(\bm x_t^*)$ alongside the irrelevant ones. Specifically, $\tau_t < \min_{j \in A_{\mathrm{true}} \cap A(\bm x_t^*)} \Delta_t(R_{S_{\mathrm{irrel}} \cup \{j\}}(\bm x_t^*))$.
  \end{assumption}

  \begin{theorem}[Exact Recovery under Perfect ARD]
  \label{thm:recovery_ard}
  Under Assumptions~\ref{assum:perfect_ard} and~\ref{assum:relevant_gap}, applying BONSAI to the acquisition maximizer $\bm x_t^*$ guarantees exact sparsity recovery with respect to the queried candidate: all irrelevant components are pruned, and no relevant components are pruned. That is, $A(\tilde{\bm x}_t) = A(\bm x_t^*) \cap A_{\mathrm{true}}$.
  \end{theorem}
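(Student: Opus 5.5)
The plan is to show two things: every irrelevant coordinate can be reset at zero acquisition cost, and the gap assumption blocks every relevant reset. Then I trace the greedy algorithm step by step.

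\emph{Step 1 (invariance).} Under Assumption~\ref{assum:perfect_ard}, an infinite lengthscale $\ell_j=\infty$ means the term $(x_j-x_j')^2/\ell_j^2$ vanishes from $r(\bm x,\bm x')$. So $k(\bm x,\bm x')$ depends only on $\bm x_{A_{\mathrm{true}}}$ and $\bm x'_{A_{\mathrm{true}}}$. This gives three consequences:
\begin{itemize}
\item $\mu_{t-1}$ and $\sigma_{t-1}$ are functions of $\bm x_{A_{\mathrm{true}}}$ alone, being built from $k(\bm x,X)$, $k(\bm x,\bm x)$ and the Gram matrix.
\item Hence $\alpha_t(\bm x)=\alpha_t(\bm x')$ whenever $\bm x_{A_{\mathrm{true}}}=\bm x'_{A_{\mathrm{true}}}$.
\item In particular, for any $\bm x$ in the pruning family and any $S\subseteq I_{\mathrm{true}}$, resetting $S$ leaves $\alpha_t$ unchanged. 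This is the rigorous form of Lemma~\ref{lemma:coord_lip_main} with $C_t/\ell_j=0$; I argue it directly rather than via gradients, which also covers discrete coordinates.
\end{itemize}
The key consequence is that for every $U\subseteq A(\bm x_t^*)$,
\[
\Delta_t\bigl(R_{U}(\bm x_t^*)\bigr)=\Delta_t\bigl(R_{U\cup S_{\mathrm{irrel}}}(\bm x_t^*)\bigr).
\]

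\emph{Step 2 (relevant coordinates are never pruned).} Suppose the greedy procedure has reset some set $U$ and considers resetting a relevant $j\in A_{\mathrm{true}}\cap A(\bm x_t^*)$. Write $U_{\mathrm{rel}} = U\cap A_{\mathrm{true}}$. Suppose inductively that $U\subseteq S_{\mathrm{irrel}}$, so $U_{\mathrm{rel}}=\emptyset$. By Step 1, the candidate's gap equals $\Delta_t(R_{S_{\mathrm{irrel}}\cup\{j\}}(\bm x_t^*))$. By Assumption~\ref{assum:relevant_gap} this exceeds $\tau_t$, so the reset is infeasible. By induction on the greedy iterations, no relevant coordinate is ever reset. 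This gives $A(\tilde{\bm x}_t)\supseteq A(\bm x_t^*)\cap A_{\mathrm{true}}$.

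\emph{Step 3 (all irrelevant coordinates are pruned).} At any stage where some $j\in S_{\mathrm{irrel}}$ is still active, resetting it gives, by Step 1, a candidate with the same gap as the current point. The current point's gap is $0$ by induction, since only irrelevant coordinates have been reset. Because $\rho_t\ge0$ and $\tilde\alpha_t(\bm x_t^*)=\alpha_t^*-b_t\ge0$ (as $\bm x_t^*$ is a maximizer), $0\le\tau_t$, so the reset is feasible. The greedy therefore does not stop while irrelevant coordinates remain. Each step removes one coordinate, so the procedure terminates, and by Steps 2 and 3 it removes exactly $S_{\mathrm{irrel}}$. That yields $A(\tilde{\bm x}_t)=A(\bm x_t^*)\cap A_{\mathrm{true}}$.

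\emph{Main obstacle.} The delicate point is Step 1: making precise the limit $\ell_j=\infty$, meaning the kernel drops coordinate $j$ entirely. I also need to check that the posterior, not merely the kernel gradient, is exactly invariant, including the case where training inputs differ in irrelevant coordinates. I treat the kernel as the restricted kernel on $A_{\mathrm{true}}$, where invariance is immediate.

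A secondary issue is tie-breaking. Greedy picks the feasible reset with the smallest gap, and irrelevant resets have gap $0$ while relevant ones are infeasible, so ties among irrelevant coordinates are harmless. If the gap rule were non-strict with $\tau_t=0$, feasibility still holds since $0\le0$.
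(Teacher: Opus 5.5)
Your proof is correct and follows the same structure as the paper's. Resetting irrelevant coordinates costs zero acquisition value, so the greedy loop prunes all of them, while Assumption~\ref{assum:relevant_gap} makes every relevant reset infeasible. You differ only in refinements:
\begin{itemize}
\item You derive the zero-cost step from exact invariance of the posterior in $\bm x_{I_{\mathrm{true}}}$. The paper instead uses the gradient bound of Lemma~\ref{lemma:coord_lip_main} with $C_t/\ell_j=0$. Your version also covers discrete coordinates.
\item Your identity $\Delta_t(R_U(\bm x_t^*))=\Delta_t(R_{U\cup S_{\mathrm{irrel}}}(\bm x_t^*))$ makes explicit why a relevant reset is infeasible even when it is examined before all irrelevant coordinates are pruned. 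The paper leaves this implicit, since Assumption~\ref{assum:relevant_gap} is stated only for $R_{S_{\mathrm{irrel}}\cup\{j\}}(\bm x_t^*)$.
\end{itemize}
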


  \begin{proof}
  By \cref{assum:perfect_ard}, for any $j \in I_{\mathrm{true}}$, $\ell_j = \infty$. Following \cref{lemma:coord_lip_main}, the partial derivative $\left| \frac{\partial \alpha_t(\bm x)}{\partial x_j} \right| = 0$. Consequently, the acquisition function is perfectly flat along all dimensions $j \in I_{\mathrm{true}}$. Resetting any subset of irrelevant components $S_{\mathrm{irrel}} \subseteq I_{\mathrm{true}}$ to their default values incurs zero loss in acquisition value: $\Delta_t(R_{S_{\mathrm{irrel}}}(\bm x_t^*)) = 0$.

  Because the relative threshold dictates $\tau_t \ge 0$, the pruned candidate $R_{S_{\mathrm{irrel}}}(\bm x_t^*)$ is always strictly feasible under BONSAI's gap rule ($\Delta_t \le \tau_t$). Moreover, since each irrelevant component contributes exactly zero to the acquisition gap, the greedy BONSAI procedure will prune them in any order without accumulating any gap. In this specific case of known lengthscales, the greedy algorithm therefore recovers the globally sparsest feasible configuration and will unconditionally prune all $j \in S_{\mathrm{irrel}}$. (In the general case, the greedy procedure may not find the globally sparsest solution due to interactions between components; see Section~\ref{sec:bonsai}.)

  By \cref{assum:relevant_gap}, pruning any additional relevant component $j \in A_{\mathrm{true}}$ results in an acquisition gap strictly greater than $\tau_t$, violating the threshold constraint. Therefore, BONSAI exactly recovers the true relevant components modified by the inner optimizer, yielding $A(\tilde{\bm x}_t) = A(\bm x_t^*) \setminus S_{\mathrm{irrel}} = A(\bm x_t^*) \cap A_{\mathrm{true}}$.
  \end{proof}

  \subsubsection{Scenario 2: Recovery under Additive Acquisition Models}

  While perfect lengthscale learning represents an asymptotic ideal, we can also prove sparsity recovery in finite-sample regimes if the acquisition function decomposes additively, formalizing the intuition from the additive model in \cref{appdx:sparsity}.

  \begin{assumption}[Additive Acquisition]
  \label{assum:additive_acq}
  The acquisition function decomposes additively over coordinates: $\alpha_t(\bm x) = \sum_{j=1}^d \alpha_{t,j}(x_j)$. Consequently, the total acquisition gap for pruning a set of components $S \subseteq A(\bm x_t^*)$ (i.e., resetting those components to their defaults) is exactly $\Delta_t(R_S(\bm x_t^*)) = \sum_{j \in S} \Delta_{t,j}$, where $\Delta_{t,j} = \alpha_{t,j}(x_{t,j}^*) - \alpha_{t,j}(x_j^{\mathrm{def}})$.
  \end{assumption}

  \textbf{Remark:} While standard acquisition functions like Expected Improvement (EI) are not strictly additive due to non-linear transformations, \cref{assum:additive_acq} holds exactly for Additive UCB formulations \citep[e.g.,][]{pmlr-v37-kandasamy15} designed for high-dimensional spaces, as well as for posterior samples utilized in Thompson Sampling under Additive GPs. For standard acquisition functions, this assumption serves as a localized, first-order Taylor approximation of the acquisition landscape near the optimum, consistent with the simplified model in \cref{appdx:sparsity}.

  \begin{assumption}[Gap Separation]
  \label{assum:gap_separation}
  There is a strict separation in the per-component acquisition gaps between relevant and irrelevant parameters. There exist bounds $\epsilon \ge 0$ and $\Delta_{\min} > 0$ such that:
  \begin{itemize}
      \item For all $j \in I_{\mathrm{true}} \cap A(\bm x_t^*)$, $\Delta_{t,j} \le \epsilon$ (irrelevant variables have minimal acquisition gaps).
      \item For all $j \in A_{\mathrm{true}} \cap A(\bm x_t^*)$, $\Delta_{t,j} \ge \Delta_{\min}$ (relevant variables have significant acquisition gaps).
  \end{itemize}
  \end{assumption}

  \begin{theorem}[Sparsity Recovery under Gap Separation]
  \label{thm:recovery_additive}
  Under Assumptions~\ref{assum:additive_acq} and~\ref{assum:gap_separation}, if the BONSAI threshold parameter $\rho_t$ is chosen such that the absolute tolerance $\tau_t = \rho_t \tilde{\alpha}_t(\bm x_t^*)$ satisfies:
  $$|I_{\mathrm{true}}|\epsilon \le \tau_t < \Delta_{\min}$$
  then the sequential greedy BONSAI procedure (\cref{alg:bonsai}) perfectly recovers the relevant structure: $A(\tilde{\bm x}_t) = A(\bm x_t^*) \cap A_{\mathrm{true}}$.
  \end{theorem}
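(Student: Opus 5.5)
Under additivity, the greedy procedure reduces to reasoning about sums of per-coordinate gaps. The plan is to show two things. First, every irrelevant coordinate in $A(\bm x_t^*)$ is eventually reset. Second, no relevant coordinate is ever reset.

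By Assumption~\ref{assum:additive_acq}, for the current pruned set $S$ the gap of resetting one more coordinate $j$ is $\Delta_t(R_{S\cup\{j\}}(\bm x_t^*)) = \sum_{i\in S}\Delta_{t,i} + \Delta_{t,j}$. Each individual term $\Delta_{t,j}$ is nonnegative, since $\bm x_t^*$ maximizes $\alpha_t$ and $R_{\{j\}}(\bm x_t^*)$ differs from it only in coordinate $j$. The greedy rule of \cref{alg:bonsai} therefore picks, among feasible resets, the coordinate with the smallest $\Delta_{t,j}$. That choice does not depend on $S$, so the greedy run visits coordinates in increasing order of $\Delta_{t,j}$ (ties broken arbitrarily).

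\textbf{Step 1: every irrelevant coordinate is pruned.} I would prove by induction that while some irrelevant coordinate remains unpruned, the pruned set $S$ consists only of irrelevant coordinates. Suppose this holds for the current $S$ and some $j \in S_{\mathrm{irrel}}\setminus S$ remains. Resetting $j$ gives gap at most $|S_{\mathrm{irrel}}|\epsilon \le |I_{\mathrm{true}}|\epsilon \le \tau_t$, so it is feasible. Now take any relevant $k$. Its candidate gap is at least $\Delta_{t,k}\ge\Delta_{\min}>\tau_t$, so it is infeasible. Even ignoring feasibility, $\Delta_{t,k}\ge\Delta_{\min}>\tau_t\ge\epsilon\ge\Delta_{t,j}$, so $k$ could not be the minimizer. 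Hence the greedy step resets an irrelevant coordinate and the invariant is preserved. It follows that the procedure cannot stop before all of $S_{\mathrm{irrel}}$ has been pruned.

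\textbf{Step 2: no relevant coordinate is pruned.} Fix any current $S$ and any relevant $k$. Because all individual gaps are nonnegative, the candidate gap satisfies $\sum_{i\in S}\Delta_{t,i}+\Delta_{t,k}\ge \Delta_{t,k}\ge\Delta_{\min}>\tau_t$. The reset therefore violates the threshold at every stage, and relevant coordinates are never selected. Once Step 1 has pruned all of $S_{\mathrm{irrel}}$, the only remaining candidates are relevant, all are infeasible, and the algorithm terminates. Combining the two steps gives $A(\tilde{\bm x}_t)=A(\bm x_t^*)\setminus S_{\mathrm{irrel}}=A(\bm x_t^*)\cap A_{\mathrm{true}}$.

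\textbf{Main obstacle.} The delicate point is the nonnegativity of the individual gaps $\Delta_{t,j}$, which is needed both for monotonicity of the cumulative gap and for the lower bound in Step 2. It relies on $\bm x_t^*$ being an exact maximizer, together with the remark that each $\Delta_{t,j}$ is a difference of a single additive component. A secondary point is that the threshold compares to the fixed quantity $\tau_t$ computed from $\bm x_t^*$ and does not change during pruning; I would state this explicitly from the algorithm's definition.
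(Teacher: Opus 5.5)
Your proof is correct and follows the same route as the paper's: additivity turns each greedy candidate gap into a partial sum of per-coordinate gaps, so irrelevant resets stay within $|I_{\mathrm{true}}|\epsilon \le \tau_t$, while any relevant reset costs at least $\Delta_{\min} > \tau_t$. Your version is somewhat more careful than the paper's. The paper relies only implicitly on nonnegativity of the individual gaps $\Delta_{t,j}$ when it says a relevant prune costs at least $\Delta_{\min}$ ``regardless of the sequence of prior resets.'' That nonnegativity comes from exact maximization of $\alpha_t$ over the box $\mathbb X$, which contains every reset of $\bm x_t^*$, and you make it explicit.
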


  \begin{proof}
  By \cref{assum:additive_acq}, the total acquisition gap for pruning all modified irrelevant variables simultaneously is $\sum_{j \in I_{\mathrm{true}} \cap A(\bm x_t^*)} \Delta_{t,j} \le |I_{\mathrm{true}}|\epsilon$. Because we require $\tau_t \ge |I_{\mathrm{true}}|\epsilon$, the state where all irrelevant variables are reset to their default values is strictly feasible.

  The sequential greedy algorithm operates by iteratively resetting the component that yields the smallest feasible gap. Because individual irrelevant components have gaps bounded by $\epsilon \le |I_{\mathrm{true}}|\epsilon \le \tau_t$, the algorithm will iteratively successfully prune them.

  Conversely, to prune even a single relevant variable $k \in A_{\mathrm{true}}$, the candidate must incur an additional gap of at least $\Delta_{\min}$. Because the threshold dictates $\tau_t < \Delta_{\min}$, no relevant variable can ever be pruned without violating the threshold constraint, regardless of the sequence of prior resets. Thus, the algorithm halts exactly when all $j \in I_{\mathrm{true}} \cap A(\bm x_t^*)$ have been pruned, isolating the true sparse structure.
  \end{proof}

  \section{BONSAI Algorithm}
  \FloatBarrier
  \label{appdx:bonsai_alg}

  \paragraph{Incremental acquisition functions.}
  To make the relative threshold $\rho_t$ well-posed and comparable across acquisitions, we apply it to an \emph{incremental} acquisition value above a baseline. Following \citet{pmlr-v286-kim25b}, we assume non-negative acquisition functions (any acquisition can be shifted by a constant to achieve this without changing the maximizer), so that $\alpha_t^* > 0$. We subtract the maximum \emph{current} acquisition value across previously evaluated designs from the acquisition function: $\tilde{\alpha}_t(\bm x) = \alpha_t(\bm x) - b_t$, where $b_t = \max_{\bm x \in \mathcal{D}_{t-1}}\alpha_t(\bm x) \ge 0$ and $\mathcal{D}_{t-1}$ denotes the set of designs evaluated before round $t$, before applying the relative rule.\footnote{\label{fn:bonsai_init}We assume there are always previously evaluated designs (e.g., a space-filling initialization), so $\mathcal{D}_{t-1}\neq\emptyset$ for all $t\ge 1$.} For log-acquisition functions \citep{logei}, we apply BONSAI to exponentiated acquisition values (undoing the log transform) before forming $b_t$ and applying the relative rule: $\Delta(P_S(\bm x^{*})) \le \rho \,\tilde\alpha(\bm x^{*})$ with $0 \le \rho < 1$. Equivalently, letting $S$ range over subsets of $A(\bm x^*)$, $\tilde{\bm x} = P_{S^\star}(\bm x^*)$ where $S^\star \in \arg\min_{S \subseteq A(\bm x^*)}\bigl\{ |S| : \Delta(P_S(\bm x^*)) \le \rho \,\tilde\alpha(\bm x^*) \bigr\}$.
  \begin{algorithm}[h]
    \caption{BONSAI (sequential pruning with gaps)}
    \label{alg:bonsai}
  \begin{algorithmic}[1]
    \STATE {\bfseries Inputs:} Acquisition functions $(\alpha_t)_{t=1}^T$, default $\bm x^{\mathrm{def}}$, relative thresholds $(\rho_t)_{t=1}^T$
    \FOR{$t=1$ {\bfseries to} $T$}
          \STATE $\bm x_t^* \approx \argmax_{\bm x \in \mathbb X} \alpha_t(\bm x)$
          \STATE $\tilde{\bm x}_t \leftarrow \bm x_t^*$
          \STATE $\mathcal D \leftarrow \{ j : x_{t,j}^* \neq x_j^{\mathrm{def}} \}$ % components differing from default
          \STATE $b_t \leftarrow \max_{\bm x \in \mathcal{D}_{t-1}}\alpha_t(\bm x)$ \;\; ($\mathcal{D}_{t-1}$ is the set of previously evaluated designs, which is non-empty by Footnote~\ref{fn:bonsai_init})
          \STATE Define $\tilde\alpha_t(\bm x) \leftarrow \alpha_t(\bm x) - b_t$
          \STATE Define $\Delta_t(\bm x) \leftarrow \alpha_t(\bm x_t^{*}) - \alpha_t(\bm x)$
          \WHILE {$\mathcal D \neq \emptyset$}
              \STATE $\text{gap}_\text{best} \leftarrow +\infty$, $j_\text{best} \leftarrow -1$
              \FOR{$j \in \mathcal D$}
    \STATE Let $\bm x' \leftarrow \tilde{\bm x}_t$ with component $j$ reset to $x_j^{\mathrm{def}}$
    \STATE $g_j \leftarrow \Delta_t(\bm x')$
    \STATE $\mathsf{feasible} \leftarrow \bigl(g_j \le \rho_t\,\tilde\alpha_t(\bm x_t^{*})\bigr)$
    \IF{$\mathsf{feasible}$ {\bfseries and} $g_j < \text{gap}_\text{best}$}
                      \STATE $\text{gap}_\text{best} \leftarrow g_j$
                      \STATE $j_\text{best} \leftarrow j$
                  \ENDIF
              \ENDFOR
              \IF{$j_\text{best} \neq -1$}
                  \STATE Reset component $j_\text{best}$ in $\tilde{\bm x}_t$ to $x_{j_\text{best}}^{\mathrm{def}}$
                  \STATE $\mathcal D \leftarrow \mathcal D \setminus \{j_\text{best}\}$
              \ELSE
                  \STATE \textbf{break}
              \ENDIF
          \ENDWHILE
          \STATE Evaluate $y_t = f(\tilde{\bm x}_t) + \varepsilon_t$
          \STATE Update the surrogate with $(\tilde{\bm x}_t, y_t)$
      \ENDFOR
  \end{algorithmic}
  \end{algorithm}
  \FloatBarrier
  \section{Additional Empirical Evaluation and Experiment details}
  \label{appdx:empirical}
  \FloatBarrier
  \subsection{Minimal-Intervention Marker for Multi-Objective Problems}
  \label{appdx:moo_marker}
  For multi-objective problems, the per-replication minimal-intervention marker overlaid in the bottom row of Figures~\ref{fig:synthetic}--\ref{fig:real} uses a hypervolume-based analog of the relative $\epsilon$-constraint from Section~\ref{sec:problem_setting}. For each replication and each sparsity level $k$, let $\mathrm{HV}_k$ denote the hypervolume of the Pareto frontier formed by all explored points with at most $k$ active dimensions, let $\mathrm{HV}^\star = \max_k \mathrm{HV}_k$, and let $\mathrm{HV}^{\mathrm{def}}$ be the hypervolume of $\{\bm x^{\mathrm{def}}\}$. The marker is the smallest $k$ for which $\mathrm{HV}_k \ge \mathrm{HV}^\star - \epsilon\,(\mathrm{HV}^\star - \mathrm{HV}^{\mathrm{def}})$ at $\epsilon = 0.2$ (or the best-HV sparsity level if none qualifies).
  \FloatBarrier
  \subsection{Sequential Optimization Generation Times}
  \label{appdx:gen_times}
  As shown in Table~\ref{table:gen_times_bonsai_map_saas_problems_combined}, we find BONSAI has sequential generation times comparable to Standard BO and is significantly faster than the other default-aware methods, IR and ER.
   \begin{table*}[t]
  \centering
  \begin{small}
  \begin{sc}
  \begin{tabular}{lcccc}
  \toprule
  & IR-$L_0$ & ER-$L_0$ & SEBO & BONSAI EI \\ \midrule
  Joint NAS/HPO (26d) & 24.7x ($\pm$ 1.4x) & 26.9x ($\pm$ 1.2x) & 51.8x ($\pm$ 5.0x) & $\bm{0.8\text{x}}$ ($\pm$ 0.0x) \\
  Optical Design (146d) & -- & 3.4x ($\pm$ 0.2x) & 2.6x ($\pm$ 0.2x) & $\bm{2.0\text{x}}$ ($\pm$ 0.2x) \\
  LassoDNA (180d) & 3.7x ($\pm$ 0.1x) & 5.2x ($\pm$ 0.1x) & 3.3x ($\pm$ 0.3x) & $\bm{3.0\text{x}}$ ($\pm$ 0.2x) \\
  Cell Network (30d) & -- & 25.2x ($\pm$ 1.0x) & 95.3x ($\pm$ 6.8x) & $\bm{1.1\text{x}}$ ($\pm$ 0.1x) \\
  Branin 50d (2d) & 2.9x ($\pm$ 0.4x) & 3.0x ($\pm$ 0.3x) & 3.9x ($\pm$ 0.4x) & $\bm{1.0\text{x}}$ ($\pm$ 0.1x) \\
  Hartmann 50d (6d) & 3.0x ($\pm$ 0.1x) & 5.5x ($\pm$ 0.4x) & 8.3x ($\pm$ 0.8x) & $\bm{1.5\text{x}}$ ($\pm$ 0.1x) \\
  PressureVessel 50d (4d) & 2.3x ($\pm$ 0.3x) & 3.0x ($\pm$ 0.1x) & 2.1x ($\pm$ 0.3x) & $\bm{1.5\text{x}}$ ($\pm$ 0.1x) \\
  DTLZ2 50d (6d) & -- & 3.9x ($\pm$ 0.1x) & 3.9x ($\pm$ 0.3x) & $\bm{1.4\text{x}}$ ($\pm$ 0.1x) \\
  \bottomrule
  \end{tabular}
  \end{sc}
  \end{small}
  \caption{Generation time relative to Vanilla BO ($\pm$ 2 standard errors). The fastest default-aware method is shown in bold.}\label{table:gen_times_bonsai_map_saas_problems_combined}
  \end{table*}

  \paragraph{Per-problem wall-time notes.}
  SEBO is omitted from Optical Design in the batch ($q=5$) setting (Table~\ref{table:gen_times_bonsai_map_saas_problems_combined_q5}) since a single replication did not finish within a 4-day wall-time budget, due to the cost of homotopy continuation. On Joint NAS/HPO (a mixed integer/continuous space), BONSAI's reported generation time is slightly below Vanilla BO's because pruning yields sparser data, which in practice accelerates downstream GP fits and acquisition optimization on the same problem (the post-acquisition pruning loop adds only a small constant overhead per outer iteration).

  \paragraph{Active-dimension tolerance for continuous-relaxation baselines.}
  For continuous baselines (IR, ER, SEBO), which optimize continuous relaxations and rarely return values exactly equal to the default, we count a coordinate as ``inactive'' for sparsity reporting if $|x_j - x_j^{\mathrm{def}}| < 10^{-3}$ in normalized $[0,1]^d$ coordinates.
  \subsection{Method Details}
  For mixed search spaces (Cell Network and Joint NAS/HPO), we use Ax's default dispatch to optimize the acquisition function using the mixed-alternating optimizer, which interleaves local search on the discrete parameters with L-BFGS-B steps on the continuous parameters \citep{olson2025ax} for all methods. The mixed-alternating optimizer makes IR and ER very slow as shown in the wall times on problems with discrete parameters (Cell Network and Joint NAS/HPO).

  Priors on the GP hyperparameters are given in Appendix~\ref{appdx:map_saas}.

  For IR and ER, we use 30 steps in homotopy continuation, with $a_\text{start}=0.2$ and $a_\text{end}=10^{-3}$, which follows the recommendation in \citet{sebo}.

  For constrained problems, if there is no feasible previously evaluated point, we optimize the probability of feasibility as the acquisition function (as in \citet{olson2025ax}) and do not perform pruning or leverage IR/ER.
  \subsection{Problem Details}

  \textbf{Synthetic Benchmarks.} Each synthetic problem is embedded in a higher-dimensional search space by adding extra irrelevant parameters: Branin (2d), Hartmann (6d), PressureVessel (4d, 4 constraints)~\citep{pressure_vessel}, and DTLZ2 (6d, 2-objective)~\citep{dtlz}, all embedded in 50d spaces. Additional results on lower-dimensional benchmarks are provided below; these include \emph{Branin 15d} (Branin's 2d objective embedded in 15 dimensions), \emph{Hartmann 15d} (Hartmann's 6d objective embedded in 15 dimensions), and \emph{BraninCurrin} (the standard 2d, 2-objective benchmark combining the Branin and Currin functions; not embedded), used in Appendix~\ref{appdx:bonsai_exact} where exact pruning is computationally feasible.

  \textbf{Joint NAS/HPO} is a real-world joint neural architecture search and hyperparameter optimization problem originating from a multi-task, multi-label neural network model at a large web services firm. The problem has 26 parameters that include the widths and depths of different blocks (integer parameters) and initial weights for layer norms (continuous parameters). Training and evaluating the model is computationally intensive, so sample efficiency is paramount. For this benchmark, we leverage a GP surrogate fitted to data collected from a real optimization run.

  \textbf{Optical Design} is a real-world problem where the goal is to optimize 146 continuous parameters that control the surface morphology and geometry of optical components of a see-through display for augmented reality \citep{morbo}. The goal is to optimize two objectives: display efficiency and quality. Evaluating a design requires computationally intensive physical simulations; however, for the purposes of this benchmark, we leverage a neural network surrogate fit to real data collected from optimization runs.

  \textbf{LassoDNA}~\citep{2022lassobench} involves tuning 180 parameters (each in $[-1,1]$) corresponding to the penalty for each feature in a weighted Lasso regression to minimize MSE on the DNA dataset~\citep{libsvm}. LassoDNA is available at \url{github.com/ksehic/LassoBench}, and the DNA dataset is available under a BSD-3-Clause license via LIBSVM at \url{github.com/cjlin1/libsvm}.

  \textbf{Cell Network} is a 30d optimization problem~\citep{dreifuerst2021} where the goal is to optimize the transmission power (an integer in $\{0,\dots,10\}$) and downtilt (a continuous parameter in $[30, 50]$) for each of 15 antennas in a cellular network to minimize two objectives: under-coverage and over-coverage.
  \subsection{Reference Points for MOO}
  For the OpticalDesign problem, we follow \citet{morbo}: we scale each objective by the corresponding component of the original reference point so that the transformed reference point is $(1,1)$.

  For CellNetwork, we use $[0.35, 0.35]$ as the reference point.

  For the real-world MOO Ranking (10d) problem, we use $[0.0, 0.0]$ as the reference point in normalized objective coordinates (objectives are negated for maximization and shifted so the default $\bm x^{\mathrm{def}}$ outcome is non-negative).

  For synthetic problems, we use default reference points for each in BoTorch \citep{balandat2020botorch}, which are commonly used in multi-objective Bayesian optimization papers \citep{qehvi, qnehvi, hvkg}.
  \subsubsection{Real-world Ranking Problem}
  \label{appdx:additional_ranking_problems}

  \textbf{MOO Ranking} is a value modeling tuning problem from a major web service, where there are 10 continuous parameters, and the goal is to learn the Pareto frontier between 2 engagement metrics. Again, we leverage a GP surrogate fit to data from real A/B tests \citep{quickbo}.

  \textbf{Results}
  The results for sequential optimization are shown in Figure~\ref{fig:ranking_extra}. BONSAI finds near optimal solutions, with significantly fewer active dimensions than alternative methods. It is also significantly faster than IR and ER, as shown in Table~\ref{table:gen_times_bonsai_map_saas_problems_extra}.
  \begin{figure*}[h]
    \centering
    \includegraphics[width=\linewidth]{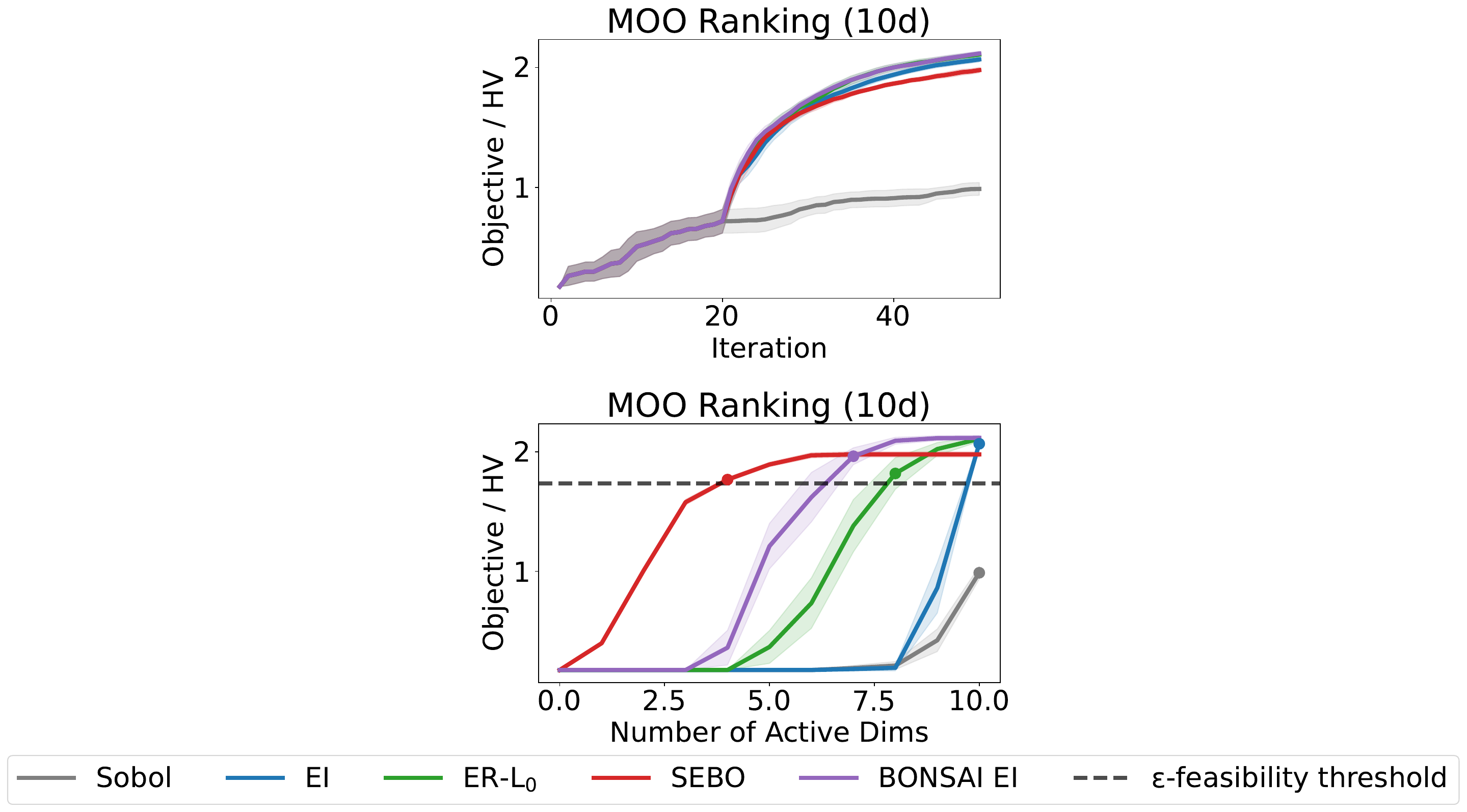}
    \caption{Real-world Ranking Problem. Top row: Objective or HV. Bottom row: Best Objective (or HV) value for each level of active dimensions. For MOO, the HV at sparsity level $k$ is the hypervolume of the feasible Pareto frontier computed over all evaluated points with at most $k$ active dimensions.}
    \label{fig:ranking_extra}
  \end{figure*}
   \begin{table*}[t]
  \centering
  \begin{small}
  \begin{sc}
  \begin{tabular}{lcccc}
  \toprule
  & IR-$L_0$ & ER-$L_0$ & SEBO & BONSAI EI \\ \midrule
  MOO Ranking (10d) & -- & 3.8x ($\pm$ 0.2x) & 5.6x ($\pm$ 0.4x) & $\bm{1.0\text{x}}$ ($\pm$ 0.0x) \\
  \bottomrule
  \end{tabular}
  \end{sc}
  \end{small}
  \caption{Generation time relative to Vanilla BO ($\pm$ 2 standard errors). The fastest default-aware method is shown in bold.}\label{table:gen_times_bonsai_map_saas_problems_extra}
  \end{table*}

  \FloatBarrier
  \subsection{Batch Optimization}
  \label{appdx:batch_optimization}
  We evaluate BONSAI in the batch setting where at each iteration, $q=5$ points are generated. We find that BONSAI performs well in the batch setting too.

  \begin{figure*}[h]
    \centering
    \includegraphics[width=\linewidth]{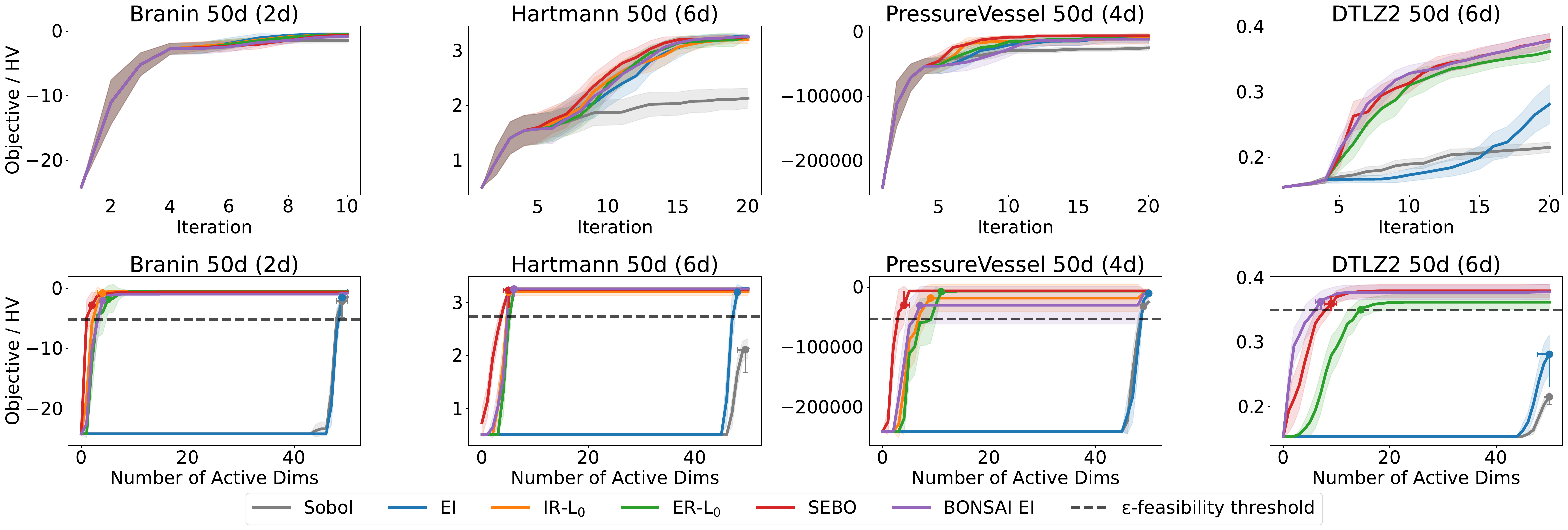}
    \caption{Optimization performance of BONSAI with batch optimization ($q=5$) on synthetic problems. Top row: Objective or HV. Bottom row: Best Objective (or HV) value for each level of active dimensions. For MOO, the HV at sparsity level $k$ is the hypervolume of the feasible Pareto frontier computed over all evaluated points with at most $k$ active dimensions.}
    \label{fig:synthetic_q5}
  \end{figure*}
  \begin{figure*}[h]
    \centering
    \includegraphics[width=\linewidth]{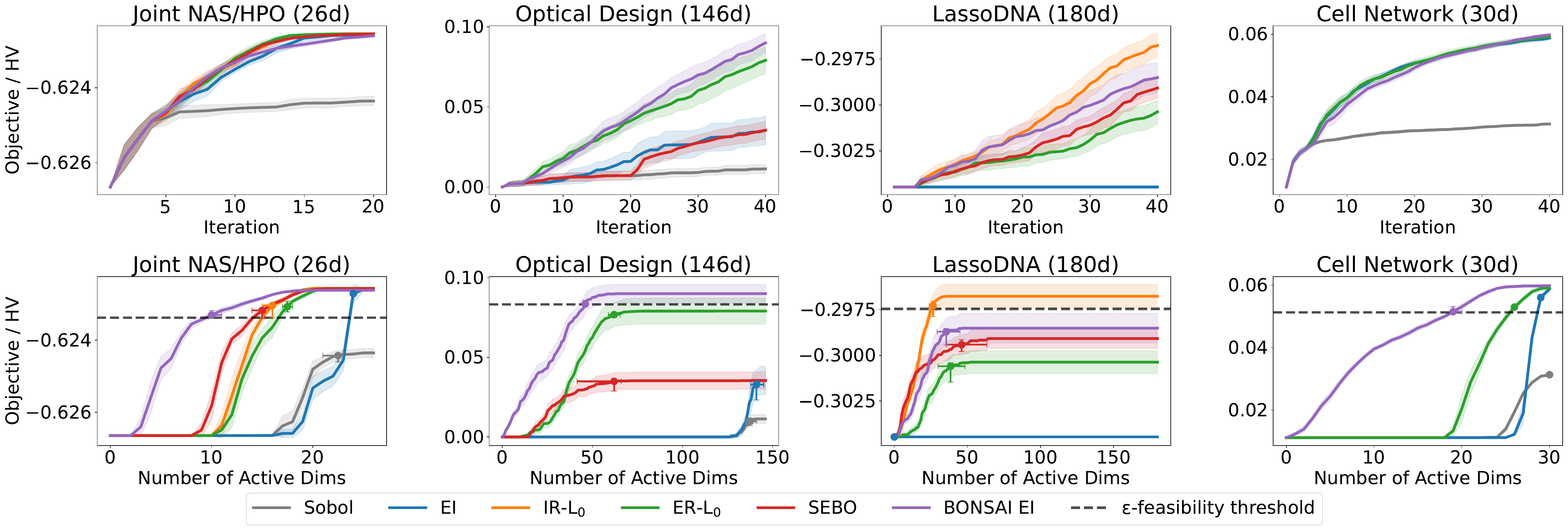}
    \caption{Optimization performance of BONSAI with batch optimization ($q=5$) on real-world problems. Top row: Objective or HV. Bottom row: Best Objective (or HV) value for each level of active dimensions. For MOO, the HV at sparsity level $k$ is the hypervolume of the feasible Pareto frontier computed over all evaluated points with at most $k$ active dimensions.}
    \label{fig:real_q5}
  \end{figure*}
   \begin{table*}[t]
  \centering
  \begin{small}
  \begin{sc}
  \begin{tabular}{lcccc}
  \toprule
  & IR-$L_0$ & ER-$L_0$ & SEBO & BONSAI EI \\ \midrule
  Joint NAS/HPO (26d) & 27.3x ($\pm$ 0.8x) & 27.1x ($\pm$ 0.9x) & 78.0x ($\pm$ 5.3x) & $\bm{1.0\text{x}}$ ($\pm$ 0.0x) \\
  Optical Design (146d) & -- & 3.4x ($\pm$ 0.2x) & $\bm{2.4\text{x}}$ ($\pm$ 0.2x) & 2.7x ($\pm$ 0.2x) \\
  LassoDNA (180d) & 2.9x ($\pm$ 0.1x) & 4.4x ($\pm$ 0.2x) & 5.3x ($\pm$ 0.5x) & $\bm{2.3\text{x}}$ ($\pm$ 0.1x) \\
  Cell Network (30d) & -- & 23.4x ($\pm$ 0.8x) & -- & $\bm{1.0\text{x}}$ ($\pm$ 0.0x) \\
  Branin 50d (2d) & 3.0x ($\pm$ 0.2x) & 2.4x ($\pm$ 0.2x) & 8.3x ($\pm$ 1.2x) & $\bm{1.1\text{x}}$ ($\pm$ 0.1x) \\
  Hartmann 50d (6d) & 3.2x ($\pm$ 0.1x) & 4.9x ($\pm$ 0.4x) & 17.1x ($\pm$ 1.6x) & $\bm{1.5\text{x}}$ ($\pm$ 0.1x) \\
  PressureVessel 50d (4d) & 3.4x ($\pm$ 0.2x) & 3.4x ($\pm$ 0.2x) & 3.5x ($\pm$ 0.3x) & $\bm{1.4\text{x}}$ ($\pm$ 0.1x) \\
  DTLZ2 50d (6d) & -- & 3.8x ($\pm$ 0.1x) & 10.9x ($\pm$ 1.2x) & $\bm{1.7\text{x}}$ ($\pm$ 0.1x) \\
  \bottomrule
  \end{tabular}
  \end{sc}
  \end{small}
  \caption{Generation time relative to Vanilla BO ($\pm$ 2 standard errors). The fastest default-aware method is shown in bold.}\label{table:gen_times_bonsai_map_saas_problems_combined_q5}
  \end{table*}

  \FloatBarrier
  \subsection{BONSAI with UCB}
  \label{appdx:bonsai_ucb}
  We compare BONSAI (with qLogNEI) to BONSAI with UCB using a schedule on $\rho_t$ proposed in Corollary~\ref{cor:schedules} with $c=1$: $\rho_t = \frac{1}{t}$. We set the exploration parameter $\beta_t$ in UCB to be $\beta_t=\sqrt{\log(t + 2)}$ as in \citet{pmlr-v286-kim25b}. To apply UCB to multi-objective problems, we use random hypervolume scalarizations \citep{pmlr-v119-zhang20i}, which reduce the multi-objective acquisition to a single-objective UCB at each iteration. Figures~\ref{fig:ei_vs_ucb_synthetic} and \ref{fig:ei_vs_ucb_realworld} show that UCB performs similarly to EI on most problems, but UCB performs worse on LassoDNA. Since EI acquisition functions (qLogNEI and qLogNEHVI) perform well across the board, we opt to use EI acquisition functions for the main-text experiments.
  \begin{figure*}[h]
    \centering
    \includegraphics[width=\linewidth]{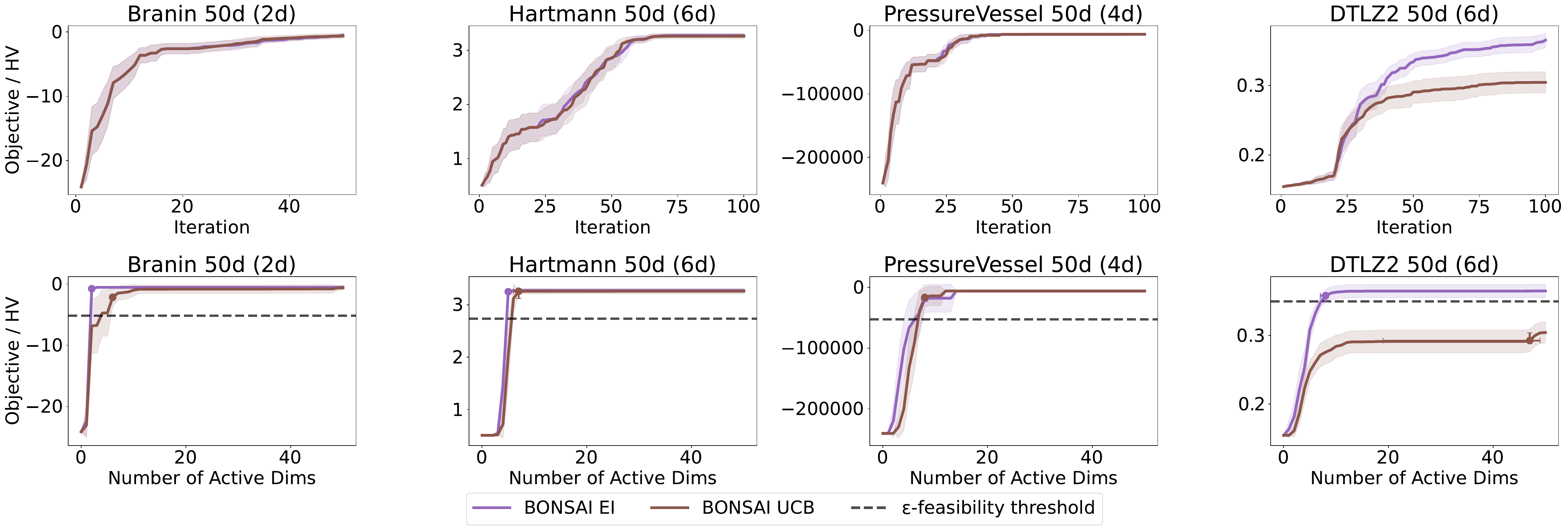}
    \caption{Comparison of BONSAI (with qLogNEI) to BONSAI with UCB on synthetic problems. Top row: Objective. Bottom row: Best Objective value for each level of active dimensions.}
    \label{fig:ei_vs_ucb_synthetic}
  \end{figure*}
  \begin{figure*}[h]
    \centering
    \includegraphics[width=\linewidth]{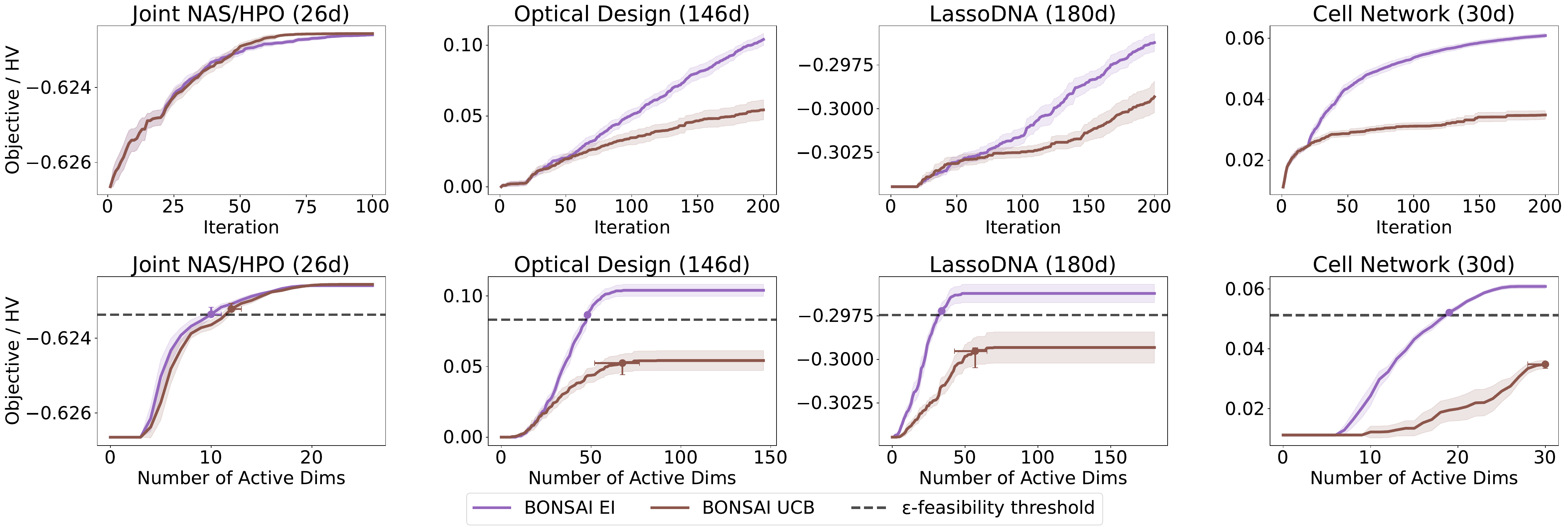}
    \caption{Comparison of BONSAI (with qLogNEI) to BONSAI with UCB on real-world problems. Top row: Objective. Bottom row: Best Objective  value for each level of active dimensions.}
    \label{fig:ei_vs_ucb_realworld}
  \end{figure*}
  \FloatBarrier
  \subsection{Sensitivity with respect to $\rho$}
  \label{appdx:bonsai_rho_sens}
  We find that BONSAI is fairly robust with respect to the choice of $\rho$, as shown in Figures~\ref{fig:rho_sens_synthetic} and \ref{fig:rho_sens_real}.  We use $\rho=0.2$ in the experiments in the main text.
  \begin{figure*}[h]
    \centering
    \includegraphics[width=\linewidth]{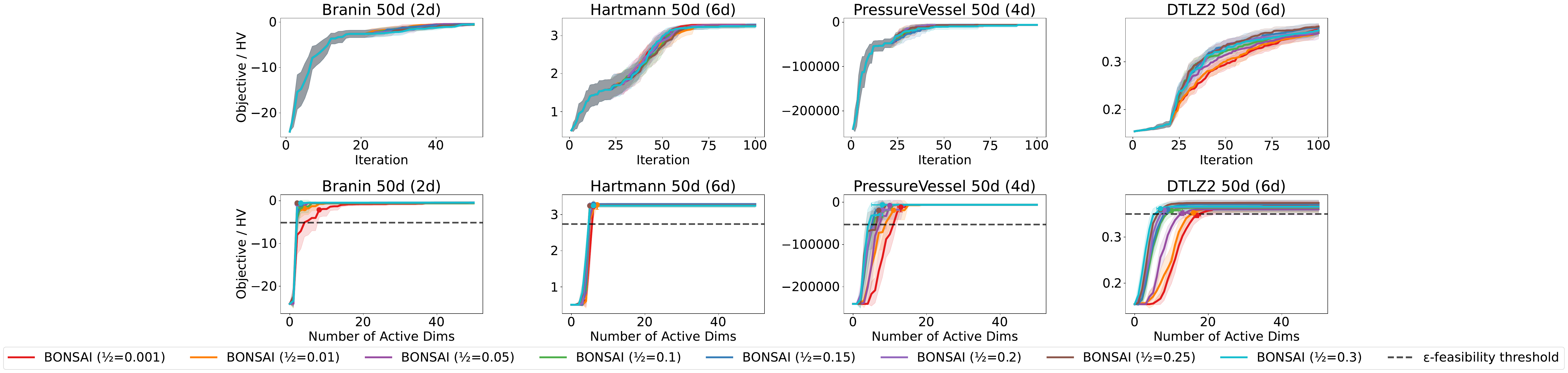}
    \caption{Top row: Objective or HV. Bottom row: Best Objective (or HV) value for each level of active dimensions. For MOO, the HV at sparsity level $k$ is the hypervolume of the feasible Pareto frontier computed over all evaluated points with at most $k$ active dimensions.}
    \label{fig:rho_sens_synthetic}
  \end{figure*}
  \begin{figure*}[h]
    \centering
    \includegraphics[width=\linewidth]{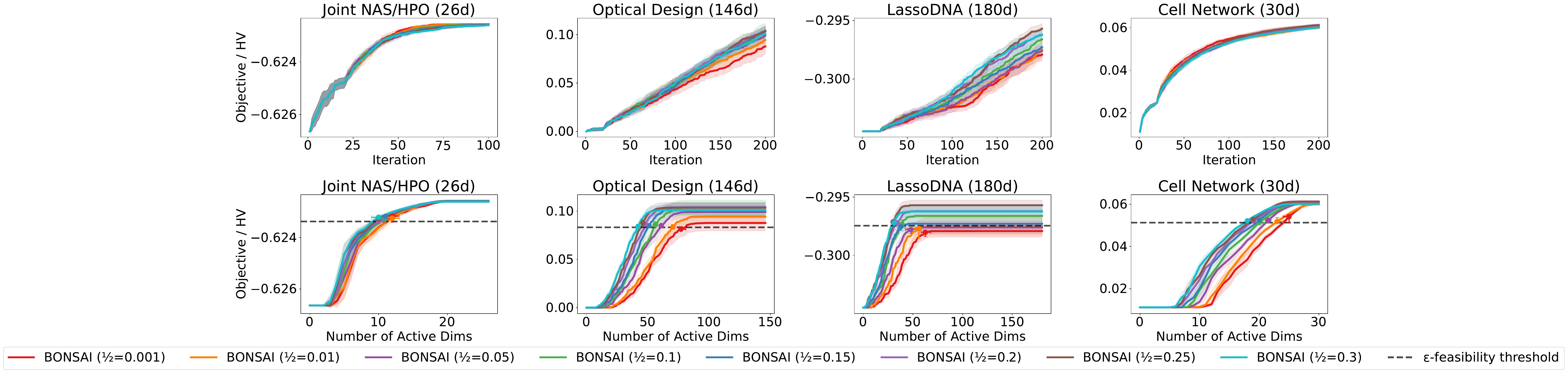}
    \caption{Top row: Objective or HV. Bottom row: Best Objective (or HV) value for each level of active dimensions. For MOO, the HV at sparsity level $k$ is the hypervolume of the feasible Pareto frontier computed over all evaluated points with at most $k$ active dimensions.}
    \label{fig:rho_sens_real}
  \end{figure*}
  \FloatBarrier
  \subsection{BONSAI with exact pruning}
  \label{appdx:bonsai_exact}
  \subsubsection{Low-dimensional benchmarks: all methods, with BONSAI exact pruning}
  In this section, we evaluate all methods (Sobol, Vanilla BO, IR, ER, SEBO, and BONSAI EI with both sequential greedy pruning and exact pruning) on the lower-dimensional benchmarks where exact pruning is computationally tractable. The full per-problem results in Figures~\ref{fig:bonsai_exact} and~\ref{fig:bonsai_exact2} show that BONSAI is competitive with all default-aware baselines on these problems, and that BONSAI's greedy and exact pruning variants yield comparable optimization performance and sparsity. Wall times relative to Vanilla BO for IR, ER, SEBO, BONSAI, and BONSAI Exact are reported in Table~\ref{table:gen_times_bonsai_map_saas_exact_combined}: as in the main experiments, BONSAI's wall-time overhead is small while IR/ER/SEBO are substantially slower. Sequential greedy pruning is also significantly faster than exact pruning as dimensionality increases (e.g., on Hartmann 15d and Branin 15d), since exact pruning scales exponentially with dimensionality. Across all 7 problems (and replicates of each) considered in this section, BONSAI with exact and sequential greedy pruning yield the same first candidate 85.7\% of the time after the Sobol initialization (we compare the first point after Sobol initialization since the models and acquisition functions are identical for the two pruning strategies at that point).
  \begin{figure*}[h]
    \centering
    \includegraphics[width=\linewidth]{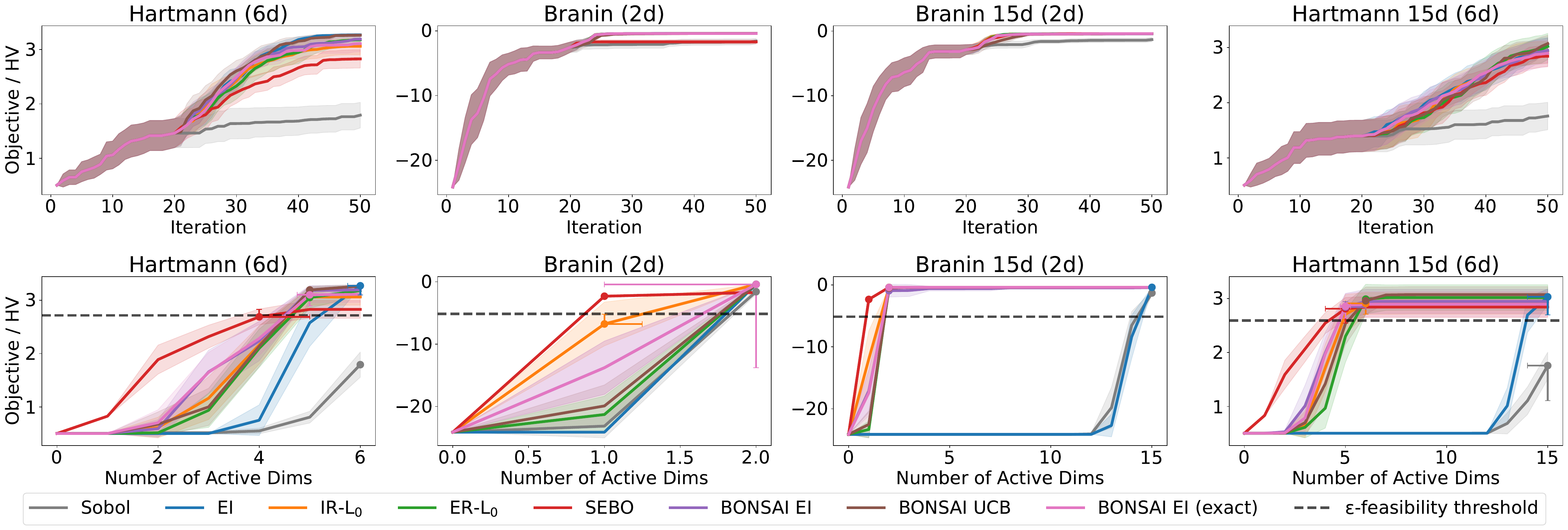}
    \caption{Optimization performance of BONSAI with exact and sequential greedy pruning with sequential optimization ($q=1$). Top row: Objective or HV. Bottom row: Best Objective (or HV) value for each level of active dimensions. For MOO, the HV at sparsity level $k$ is the hypervolume of the feasible Pareto frontier computed over all evaluated points with at most $k$ active dimensions.}
    \label{fig:bonsai_exact}
  \end{figure*}
  \begin{figure*}[h]
    \centering
    \includegraphics[width=\linewidth]{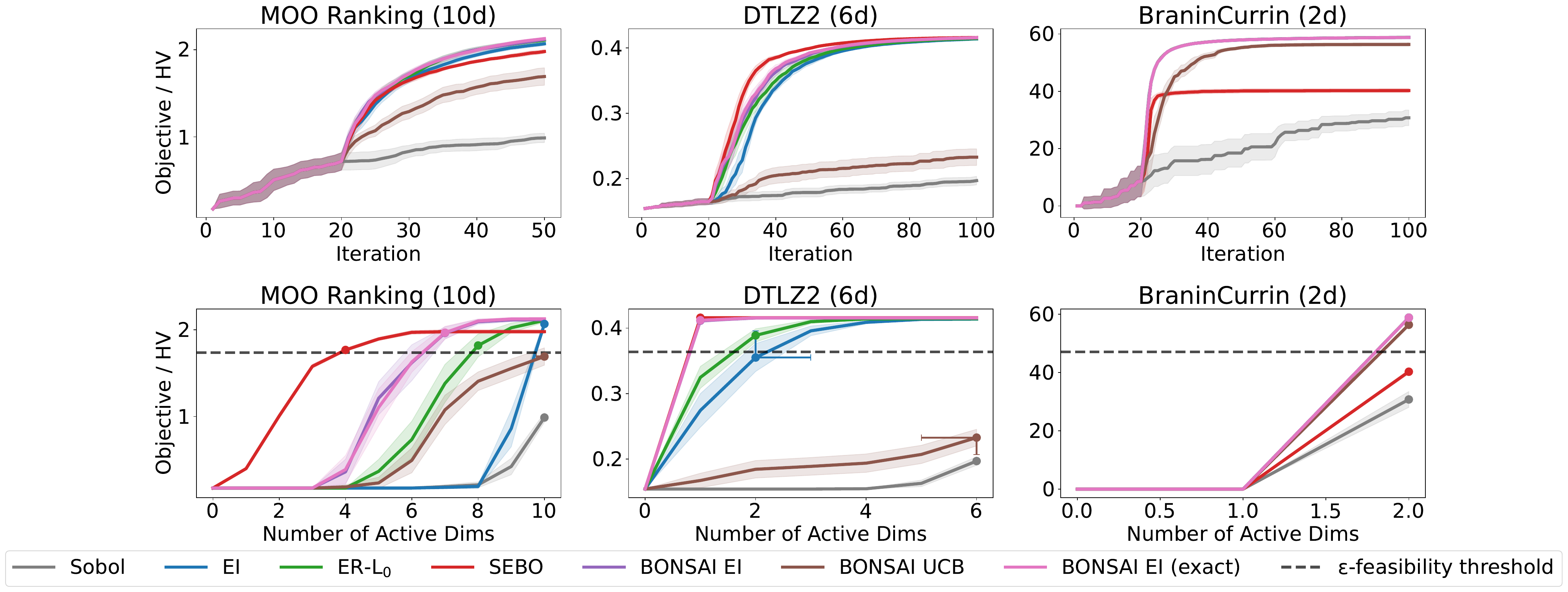}
    \caption{Optimization performance of BONSAI with exact and sequential greedy pruning with sequential optimization ($q=1$). Top row: Objective or HV. Bottom row: Best Objective (or HV) value for each level of active dimensions. For MOO, the HV at sparsity level $k$ is the hypervolume of the feasible Pareto frontier computed over all evaluated points with at most $k$ active dimensions.}
    \label{fig:bonsai_exact2}
  \end{figure*}
   \begin{table*}[t]
  \centering
  \begin{small}
  \begin{sc}
  \begin{tabular}{lccccc}
  \toprule
  & IR-$L_0$ & ER-$L_0$ & SEBO & BONSAI EI & BONSAI EI (exact) \\ \midrule
  Hartmann (6d) & 3.5x ($\pm$ 0.2x) & 2.8x ($\pm$ 0.3x) & 7.0x ($\pm$ 0.8x) & 1.0x ($\pm$ 0.0x) & $\bm{1.0\text{x}}$ ($\pm$ 0.1x) \\
  Branin (2d) & 5.2x ($\pm$ 0.8x) & 5.4x ($\pm$ 0.5x) & 5.4x ($\pm$ 0.5x) & $\bm{1.0\text{x}}$ ($\pm$ 0.2x) & 1.0x ($\pm$ 0.2x) \\
  MOO Ranking (10d) & -- & 3.8x ($\pm$ 0.2x) & 5.6x ($\pm$ 0.4x) & $\bm{1.0\text{x}}$ ($\pm$ 0.0x) & 1.2x ($\pm$ 0.0x) \\
  DTLZ2 (6d) & -- & 5.6x ($\pm$ 0.1x) & 5.3x ($\pm$ 0.3x) & $\bm{1.1\text{x}}$ ($\pm$ 0.0x) & 1.1x ($\pm$ 0.0x) \\
  BraninCurrin (2d) & -- & 5.4x ($\pm$ 0.2x) & 2.9x ($\pm$ 0.1x) & $\bm{1.0\text{x}}$ ($\pm$ 0.1x) & 1.0x ($\pm$ 0.1x) \\
  Branin 15d (2d) & 3.5x ($\pm$ 0.4x) & 3.2x ($\pm$ 0.4x) & 3.9x ($\pm$ 0.5x) & $\bm{1.0\text{x}}$ ($\pm$ 0.1x) & 1.8x ($\pm$ 0.1x) \\
  Hartmann 15d (6d) & 3.7x ($\pm$ 0.1x) & 4.2x ($\pm$ 0.1x) & 6.7x ($\pm$ 0.9x) & $\bm{1.1\text{x}}$ ($\pm$ 0.0x) & 2.0x ($\pm$ 0.0x) \\
  \bottomrule
  \end{tabular}
  \end{sc}
  \end{small}
  \caption{Generation time relative to Vanilla BO ($\pm$ 2 standard errors). The fastest default-aware method is shown in bold.}\label{table:gen_times_bonsai_map_saas_exact_combined}
  \end{table*}

  \FloatBarrier
  \subsubsection{Low-dimensional benchmarks in the batch setting ($q=5$)}
  We extend the all-methods comparison to the batch ($q=5$) setting on the same low-dimensional benchmarks where exact pruning remains tractable (BraninCurrin, Branin 15d, Hartmann 15d, etc.); exact pruning remains infeasible on the higher-dimensional benchmarks of Section~\ref{sec:experiments} since each batch element would require enumerating $O(2^{|A(\bm x_t^*)|})$ subsets. Figures~\ref{fig:bonsai_exact_q5} and~\ref{fig:bonsai_exact2_q5} show that BONSAI remains competitive with the default-aware baselines in the batch setting, and that the two BONSAI pruning variants produce comparable optimization performance and sparsity, while sequential greedy pruning remains substantially faster.
  \begin{figure*}[h]
    \centering
    \includegraphics[width=\linewidth]{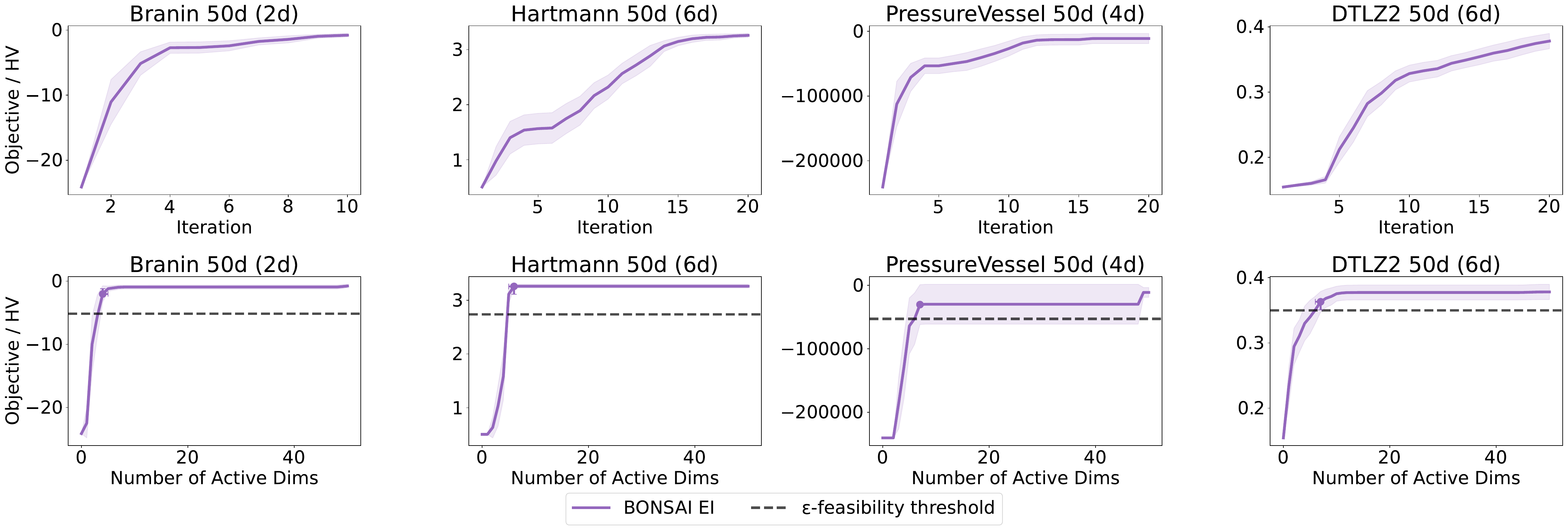}
    \caption{Optimization performance of BONSAI with exact and sequential greedy pruning with batch optimization ($q=5$). Top row: Objective or HV. Bottom row: Best Objective (or HV) value for each level of active dimensions. For MOO, the HV at sparsity level $k$ is the hypervolume of the feasible Pareto frontier computed over all evaluated points with at most $k$ active dimensions.}
    \label{fig:bonsai_exact_q5}
  \end{figure*}
  \begin{figure*}[h]
    \centering
    \includegraphics[width=\linewidth]{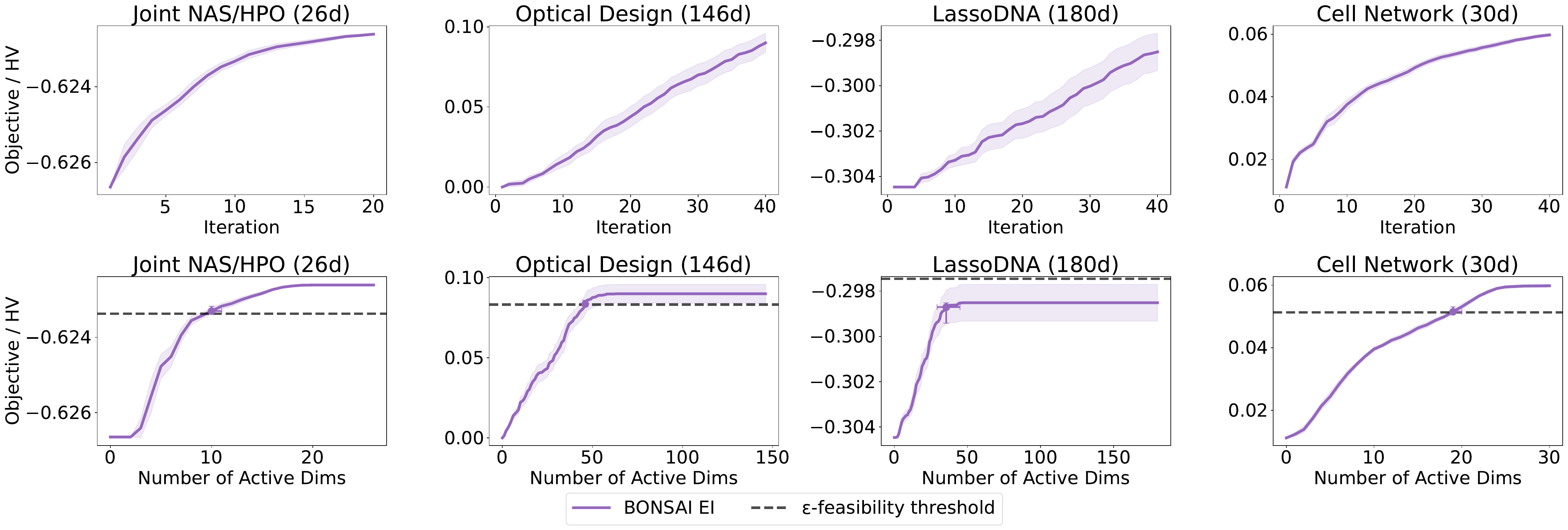}
    \caption{Optimization performance of BONSAI with exact and sequential greedy pruning with batch optimization ($q=5$). Top row: Objective or HV. Bottom row: Best Objective (or HV) value for each level of active dimensions. For MOO, the HV at sparsity level $k$ is the hypervolume of the feasible Pareto frontier computed over all evaluated points with at most $k$ active dimensions.}
    \label{fig:bonsai_exact2_q5}
  \end{figure*}
   \begin{table*}[t]
  \centering
  \begin{small}
  \begin{sc}
  \begin{tabular}{lccccc}
  \toprule
  & IR-$L_0$ & ER-$L_0$ & SEBO & BONSAI EI & BONSAI EI (exact) \\ \midrule
  Hartmann (6d) & -- & -- & -- & $\bm{32.2}$ ($\pm$ 1.2) & 33.1 ($\pm$ 1.2) \\
  Branin (2d) & -- & -- & -- & 31.6 ($\pm$ 5.3) & $\bm{31.3}$ ($\pm$ 5.2) \\
  MOO Ranking (10d) & -- & -- & -- & $\bm{94.3}$ ($\pm$ 2.6) & 106.7 ($\pm$ 3.7) \\
  DTLZ2 (6d) & -- & -- & -- & $\bm{328.4}$ ($\pm$ 9.9) & 352.6 ($\pm$ 20.2) \\
  BraninCurrin (2d) & -- & -- & -- & 452.0 ($\pm$ 32.9) & $\bm{448.5}$ ($\pm$ 31.2) \\
  Branin 15d (2d) & -- & -- & -- & $\bm{52.5}$ ($\pm$ 3.2) & 95.7 ($\pm$ 3.9) \\
  Hartmann 15d (6d) & -- & -- & -- & $\bm{44.4}$ ($\pm$ 1.6) & 82.4 ($\pm$ 2.2) \\
  \bottomrule
  \end{tabular}
  \end{sc}
  \end{small}
  \caption{Average candidate generation time per iteration in seconds ($\pm$ 2 standard errors). The fastest default-aware method is shown in bold.}\label{table:gen_times_bonsai_map_saas_exact_combined_q5}
  \end{table*}

  \FloatBarrier
  \FloatBarrier
  \subsection{Dimension Scaling Priors}
  All methods achieve less sparsity with dimension scaling priors \citep{hvarfner2024dsp} than with MAP-SAAS. On problems with known numbers of active parameters, they do not find near-optimal solutions with the true number of active parameters. IR works slightly better than BONSAI on single-objective problems, but BONSAI performs best on the multi-objective problem (DTLZ2). Importantly, the optimization performance of BONSAI relative to Standard BO is unaffected by changing the model class; rather, the difference is just the level of sparsity achieved by default-aware methods.
  \begin{figure*}[h]
    \centering
    \includegraphics[width=\linewidth]{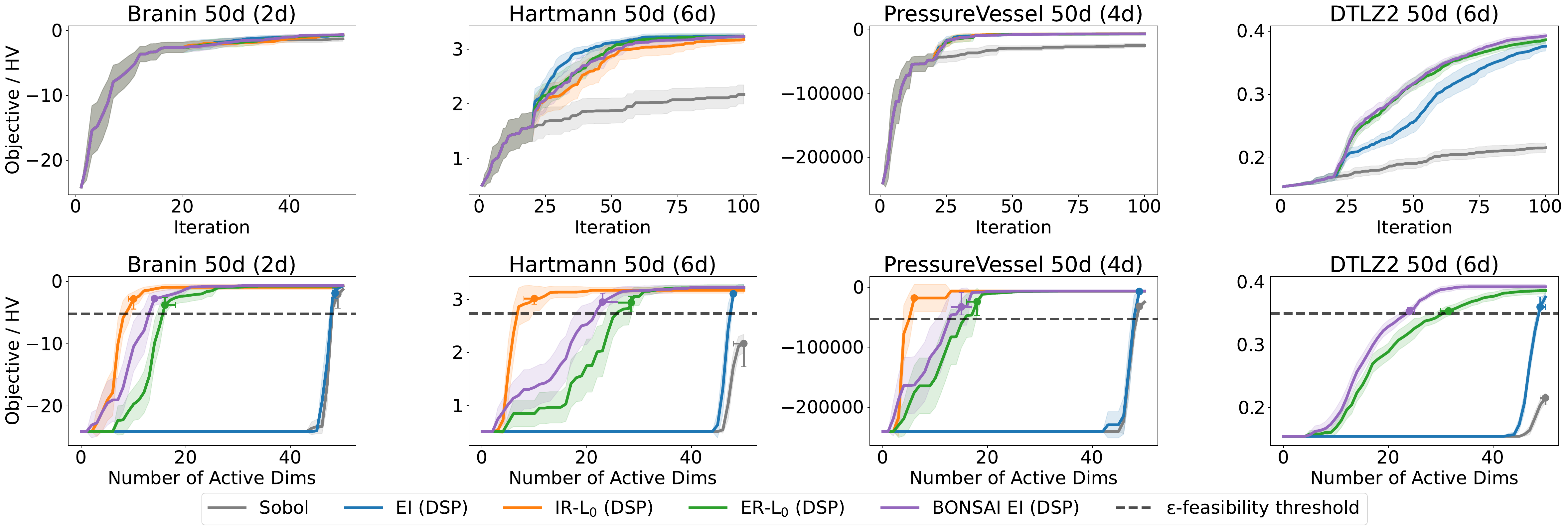}
    \caption{Sequential optimization performance of methods with a dimension-scaling prior (DSP). Top row: Objective or HV. Bottom row: Best Objective (or HV) value for each level of active dimensions. For MOO, the HV at sparsity level $k$ is the hypervolume of the feasible Pareto frontier computed over all evaluated points with at most $k$ active dimensions.}
    \label{fig:bonsai_dsp}
  \end{figure*}

  \subsection{IR/ER Sensitivity Analysis}
  \label{appdx:ir_er_sens}
  In our experiments, we set the penalty coefficient in IR and ER to be 0.01, which was a good default value from \citep{sebo}. In addition, we test different values of the penalty coefficient on synthetic problems to analyze the sensitivity of IR and ER. This analysis validates that 0.01 is a reasonable choice.
  \begin{figure*}[h]
    \centering
    \includegraphics[width=\linewidth]{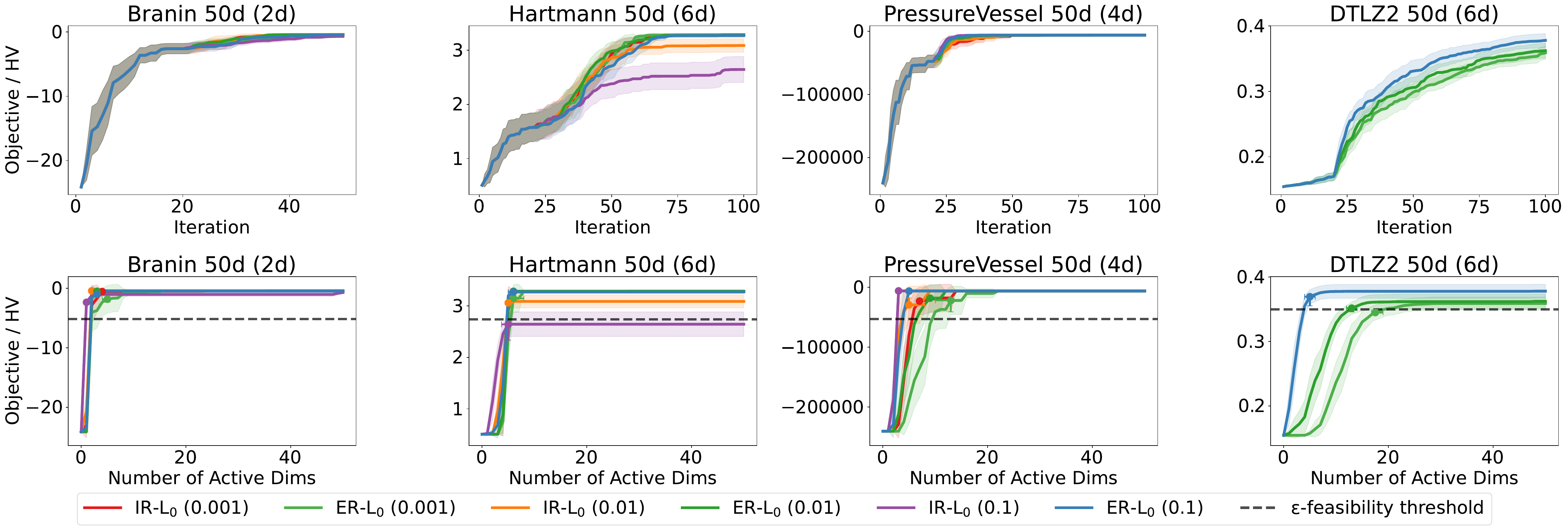}
    \caption{Top row: Objective or HV. Bottom row: Best Objective (or HV) value for each level of active dimensions. For MOO, the HV at sparsity level $k$ is the hypervolume of the feasible Pareto frontier computed over all evaluated points with at most $k$ active dimensions.}
    \label{fig:ir_er_sens_synthetic}
  \end{figure*}
  \FloatBarrier
  \subsection{Improved Lengthscale Estimation}
  \label{appdx:lengthscale_est}
  In this section we show that BONSAI improves the lengthscale estimation on the 6d DTLZ2 problem embedded in a 50d space.
  As illustrated in Figure~\ref{fig:synthetic}, BONSAI outperforms Standard BO on this problem.
  Following \citep{hvarfner2025informed}, we fit a GP with a dimension-scaling prior to the 100 trials collected by each replication of Sobol, Standard BO, and BONSAI.
  The distribution over replications of the log-lengthscales is illustrated in Figure~\ref{fig:lengthscale_est}.

  \begin{figure*}[h]
    \centering
    \includegraphics[width=\linewidth]{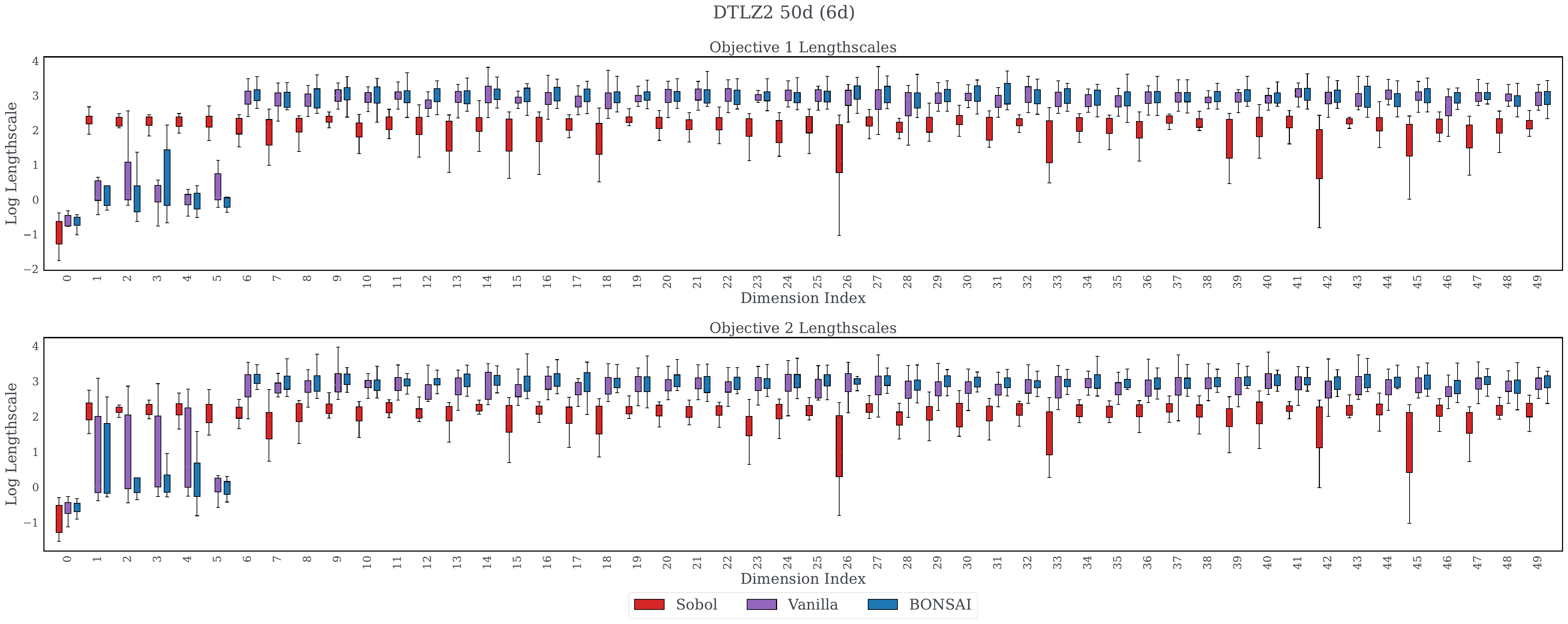}
    \caption{Estimated lengthscales using data collected by different methods on the DTLZ2 50d (6d) problem. Using the data collected by BONSAI consistently results in more accurate lengthscale estimation.}
    \label{fig:lengthscale_est}
  \end{figure*}

  We observe that we are consistently able to identify the 6 non-redundant parameters when using the data collected by BONSAI.
  In comparison, using the data collected by Sobol and Standard BO results in sometimes failing to infer small lengthscales for the 6 non-redundant parameters, demonstrating that using pruning makes it easier for the underlying GP model to infer better lengthscales.
  We believe this is one of the reasons why BONSAI outperforms Standard BO on this problem.
  \FloatBarrier
  \section{MAP Estimation of SAASBO (MAP-SAAS)}
  \label{appdx:map_saas}
  To improve scalability relative to fully Bayesian SAASBO, we use an ensemble MAP approximation in which a small number of global sparsity scales are sampled and the remaining GP hyperparameters are fit via MAP. The fitting procedure is described in Algorithm~\ref{alg:ensemble-map-saas-botorch}. This ensemble construction approximates posterior uncertainty over sparsity patterns while retaining the computational efficiency of MAP-based GP fitting. For each model in the ensemble, we use a $\text{Tophat}(10^{-2}, 10^4)$ prior over the signal variance $s_m$ and a $\text{Gamma}(0.9, 10.0)$ prior over the noise variance $\sigma_m^2$.

  \begin{algorithm}[t]
  \caption{Ensemble MAP-SAAS GP}
  \label{alg:ensemble-map-saas-botorch}
  \begin{algorithmic}[1]
  \REQUIRE Training data $X \in \mathbb{R}^{n \times d}$, $Y \in \mathbb{R}^{n \times 1}$
  \REQUIRE Ensemble size $M \leftarrow 4$

  \STATE Sample $\tau_1,\dots,\tau_M \overset{\text{iid}}{\sim} \mathrm{HalfCauchy}(0.1)$

  \STATE Form batched data $X^{(b)}, Y^{(b)} \in \mathbb{R}^{M \times n \times d}$

  \FOR{$m = 1$ {\bf to} $M$}
    \STATE Define a Mat\'ern-$5/2$ GP with ARD lengthscales $\ell_{m,1:d}$
    \STATE Use priors: $\ell_{m,j}^{-2} \sim \mathrm{HalfCauchy}(\tau_m)$ (sparsity-inducing SAAS prior on the inverse squared lengthscales), $\sigma_m^2\sim \text{Gamma}(0.9, 10.0)$, $s_m\sim \text{Tophat}(10^{-2}, 10^4)$
    \STATE Fit $\{\ell_{m,j}, \sigma_m^2, s_m\}$ by MAP
  \ENDFOR

  \STATE Return a uniform Gaussian mixture posterior over ensemble members
  \end{algorithmic}
  \end{algorithm}

  Additionally, we compare BONSAI with MAP-SAAS to BONSAI with a GP with a dimension-scaling prior (DSP) \citep{hvarfner2024dsp} and SAASBO \citep{saasbo} and find that MAP-SAAS induces more sparsity.

  \begin{figure*}[h]
    \centering
    \includegraphics[width=\linewidth]{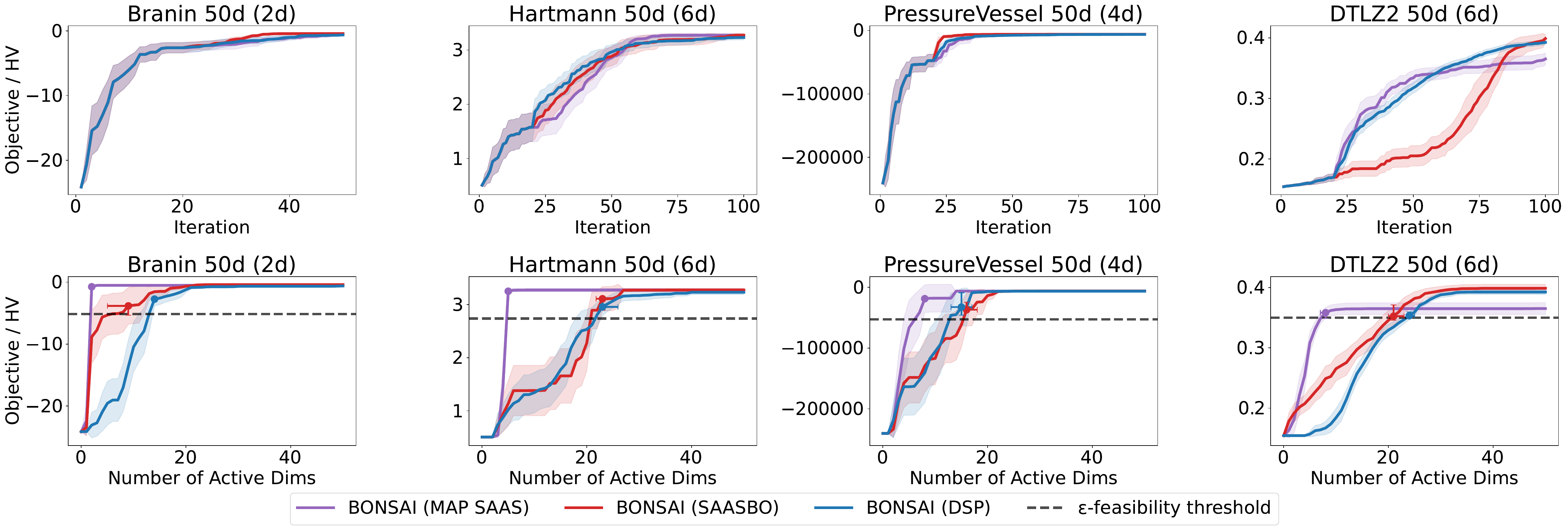}
    \caption{Sequential optimization performance of BONSAI with on synthetic problems with MAP-SAAS and the dimension-scaling prior. Top row: Objective or HV. Bottom row: Best Objective (or HV) value for each level of active dimensions. For MOO, the HV at sparsity level $k$ is the hypervolume of the feasible Pareto frontier computed over all evaluated points with at most $k$ active dimensions.}
    \label{fig:map_saas_vs_dscaling}
  \end{figure*}

  \FloatBarrier
  \section{Extensions of IR and ER}
  \label{appdx:ir_er_exts}
  IR can be extended to constrained qLogNEI in a straightforward way, since IR changes the objective by adding a penalty term based on the $\ell_0$-norm. Similarly, ER can also be extended to constrained qLogNEI since the penalty in ER is simply added to the acquisition function externally.

  Extending IR to multi-objective problems is non-trivial. For IR, one could penalize each objective by the sparsity penalty, but that would have a multiplicative effect on the hypervolume. Alternatively, one could subtract the sparsity penalty from the hypervolume of a set of points, but that would require extending the penalty to apply to sets of points. The issues of scale between hypervolume and the sparsity penalty may make it difficult to choose a reasonable value of the sparsity penalty coefficient.

For ER in the multi-objective setting, we apply the $\ell_0$ penalty externally to qLogNEHVI: at each acquisition optimization, we compute $\alpha^{\mathrm{ER}}_t(\bm x) = \mathrm{qLogNEHVI}_t(\bm x) - \lambda \cdot \|\bm x - \bm x^{\mathrm{def}}\|_0$, where the penalty coefficient $\lambda$ is set to the same default value $0.01$ used in single-objective ER. We apply the penalty after the log transform (i.e., directly to qLogNEHVI's output rather than to the raw NEHVI), so that the penalty acts on the same scale as the log-acquisition. As in single-objective ER, we use 30 homotopy continuation steps with $a_\text{start}=0.2$ and $a_\text{end}=10^{-3}$ following the recommendation of \citet{sebo}.

  \section{Additional Related Work}
  \label{appdx:additional_related_work}

  \paragraph{Conservative and safe bandits with baselines.}
  A related line of work studies \emph{conservative} or \emph{safe} policies that constrain performance relative to a baseline arm or configuration, e.g., by requiring the cumulative reward not to fall too far below a status-quo level~\citep[e.g.,][]{kazerouni2017conservative,sui2015safe}. BONSAI is not a safety mechanism of this particular type: our constraint is imposed on the acquisition rather than on the unknown objective, and the primary goal is to simplify recommendations in input space rather than to provide outcome-level safety guarantees. We view these approaches as complementary, and BONSAI could be combined with conservative BO when both simplicity and outcome-level constraints are required.

  \paragraph{Variable selection and shrinkage for high-dimensional BO.}
  Other work exploits sparsity to improve BO in high dimensions. Methods such as SAASBO~\citep{saasbo} and VSBO~\citep{vsbo} aim to identify a small set of influential dimensions via shrinkage priors on kernel lengthscales or forward-stage selection using importance scores. \citet{hvarfner2024dsp} propose a dimensionality-scaled hyperprior whose mean grows with $\sqrt{d}$, progressively shrinking lengthscales toward larger values as dimensionality increases. These techniques primarily target the structure of the objective $f$ to improve sample efficiency. However, even when only a few dimensions are relevant, the recommended configuration can still deviate substantially from a practitioner-chosen default across many components. BONSAI instead measures complexity directly via the $\ell_0$ distance to a user-specified default and prunes low-impact deviations regardless of whether they arise in globally relevant or irrelevant dimensions.

  \paragraph{Explainable and interpretable BO.}
  Recent work has begun to study the interpretability of BO recommendations. For example, \citet{chakraborty2025explainablebayesianoptimization} explain BO solutions by identifying tunable subspaces and quantifying parameter-wise contributions, and \citet{pmlr-v238-adachi24a} explore techniques to make BO behavior more transparent to practitioners. BONSAI is complementary: rather than explaining a potentially complex recommendation in the full input space, it directly simplifies the recommendation by reverting as many components as possible to their default values while maintaining a near-optimal acquisition value. This can be useful both on its own and as a front-end to post-hoc explanation methods applied to the pruned suggestion.

  \paragraph{Cost-aware and switching-cost Bayesian optimization.}
  Several works incorporate explicit costs into BO. Cost-aware BO methods modify the acquisition to trade off reward and cost, or to preferentially query cheap configurations; see, e.g., work on cost-varying or subset-aware BO~\citep{bocvs,mcts_vs}. \citet{switching_costs} consider modular black-box systems in which changing components incurs switching costs and optimize a cost-augmented objective.
  BONSAI differs from these approaches in two ways. First, it does not require specifying or learning a cost function over individual input parameters. Instead, it enforces a hard constraint on the \emph{acquisition gap} relative to the maximizer and, within that constraint, explicitly prefers the candidate with the fewest deviations from a fixed default configuration. Second, BONSAI is implemented as a post-processing step, leaving the underlying surrogate and acquisition function unmodified, and can thus be straightforwardly applied across a broad range of settings.
  \paragraph{Lengthscale-thresholding as a pruning baseline.}
  A natural alternative to BONSAI would be to eliminate changes in dimensions whose estimated ARD lengthscale exceeds a threshold. However, this approach has important limitations: (1) it is not straightforward to apply in multi-output settings (multi-objective and constrained problems) where multiple GPs with different lengthscale estimates must be reconciled; and (2) choosing a lengthscale threshold is itself a sensitive hyperparameter---for constrained problems, a lengthscale that appears large for the objective GP may correspond to a dimension critical for constraint satisfaction. In contrast, BONSAI's relative acquisition tolerance $\rho$ has a natural interpretation across all problem types (single-objective, constrained, and multi-objective), and we demonstrate its robustness to the choice of $\rho$ in Appendix~\ref{appdx:bonsai_rho_sens}.
  \section{Future Work}
  \label{appdx:future_work}
  In this section, we discuss directions for future work.

  BONSAI measures simplicity using coordinate-wise $\ell_0$ distance to the default, which is appropriate when each coordinate corresponds to a semantically meaningful and independently actionable knob. When parameters are correlated or meaningful only as groups (common in systems tuning), a grouped or weighted sparsity notion can be more realistic; we view BONSAI as a first step toward such structured pruning. Empirically, we also compare against exact pruning, which can reset multiple knobs simultaneously in one-shot and is capable of group-level pruning, and we find similar performance to sequential greedy pruning (Appendix~\ref{appdx:bonsai_exact}).

  There are several directions for future work. First, it would be useful to develop tighter sparsity guarantees that move beyond the simplified per-component perspective in \cref{appdx:sparsity} and to better characterize when greedy pruning matches the combinatorial optimum. Second, extending the theoretical analysis to EI and additional acquisition functions would require new tools, since the current proof techniques rely on confidence bounds. Third, it would be interesting to integrate BONSAI with more sophisticated models of user or operator preferences, for example allowing different components to have different “weights’’ in the $\ell_0$ cost or incorporating structured defaults (such as grouped parameters). Finally, combining BONSAI with post-hoc explanation methods may further improve the transparency of BO-driven decision-making in practice.
  \newpage

  \end{document}